\pdfoutput=1

\documentclass{article}
\usepackage[final]{neurips_2025}

\usepackage{amsmath,amsfonts,bm}









\def\eqref#1{equation~\ref{#1}}









\def\1{\bm{1}}










\DeclareMathAlphabet{\mathsfit}{\encodingdefault}{\sfdefault}{m}{sl}
\SetMathAlphabet{\mathsfit}{bold}{\encodingdefault}{\sfdefault}{bx}{n}











\usepackage{xcolor}
\definecolor{darkblue}{rgb}{0, 0.12, 0.55}
\definecolor{darkgreen}{rgb}{0, 0.55, 0.12}
\definecolor{darkred}{rgb}{0.6,0,0}
\definecolor{darkgreen}{rgb}{0,0.6,0}
\definecolor{Gray}{gray}{0.9}
\definecolor{mymauve}{rgb}{0,0.6,0}
\definecolor{mygreen}{rgb}{0,0.6,0}
\definecolor{maincolor}{rgb}{0.2,0.5,0.7}
\definecolor{darkdarkblue}{rgb}{0, 0.12, 0.5}

\usepackage[breaklinks=true,
            colorlinks,
            linkcolor = darkdarkblue,
            urlcolor  = darkblue, 
            citecolor = maincolor,
            bookmarks = false]{hyperref}
\usepackage{url}

\usepackage{graphicx}
\usepackage{wrapfig}
\usepackage{booktabs}
\usepackage{adjustbox}
\usepackage{multirow}
\usepackage{xspace}
\usepackage{array}
\usepackage{bm}
\usepackage{bbm}
\usepackage{cleveref}
\usepackage{amsthm,amsmath,amssymb}
\usepackage{algorithm}
\usepackage{algorithmic}

\usepackage{listings}
\usepackage{wrapfig}
\usepackage{subcaption}
\usepackage{url}
\usepackage{colortbl}
\usepackage{adjustbox}
\usepackage{makecell}
\usepackage{pifont}
\usepackage{tcolorbox}
\usepackage{setspace}
\usepackage{fancybox}
\usepackage{tocloft}
\usepackage{enumitem}

\lstset{ 
  backgroundcolor=\color{gray!10},  
  basicstyle=\footnotesize\ttfamily,       
  breakatwhitespace=false,        
  breaklines=true,                 
  captionpos=b,                    
  commentstyle=\color{mygreen},    
  deletekeywords={...},           
  escapeinside={\%*}{*)},          
  extendedchars=true,              
  firstnumber=1000,                
  keepspaces=true,                
  keywordstyle=\color{blue},       
  language=Octave,                 
  morekeywords={*,...},            
  showspaces=false,                
  showstringspaces=false,          
  showtabs=false,                 
  stepnumber=2,                   
  stringstyle=\color{mymauve},    
  tabsize=2,	                   
  title=\lstname,            
  escapeinside={(*@}{@*)},
  gobble=0,
}
\makeatletter
\DeclareRobustCommand\onedot{\futurelet\@let@token\@onedot}
\def\@onedot{\ifx\@let@token.\else.\null\fi\xspace}
\newcommand{\modelstep}[1]{{{#1}}}

\newcommand{\datastep}[1]{{{#1}}}

\definecolor{old_para}{rgb}{0.67, 0.31, 0.13}

\definecolor{new_para}{rgb}{0.2, 0.46, 0.66}

\newlength{\mysize}

\allowdisplaybreaks

\setlength{\parindent}{0pt}
\setlength{\parskip}{9pt}

\newtheorem{thm}{Theorem}
\newtheorem{lem}[]{Lemma}

\newtheorem{defn}[thm]{Definition}
\newtheorem{assumption}[thm]{Assumption}

\theoremstyle{definition}

\title{Continual Multimodal Contrastive Learning}

\author{Xiaohao Liu \ \ \ \ \ Xiaobo Xia\thanks{Corresponding author.} \ \ \ \ \ See-Kiong Ng \ \ \ \ \ Tat-Seng Chua \\ 
National University of Singapore \\
\tt\small xiaohao.liu@u.nus.edu \quad \{xbx, seekiong, dcscts\}@nus.edu.sg }

\begin{document}

\maketitle

\begin{abstract}

Multimodal Contrastive Learning (MCL) advances in aligning different modalities and generating multimodal representations in a joint space. By leveraging contrastive learning across diverse modalities, large-scale multimodal data enhances representational quality. 
However, a critical yet often overlooked challenge remains: multimodal data is rarely collected in a single process, and training from scratch is computationally expensive. Instead, emergent multimodal data can be used to optimize existing models gradually, \textit{i.e.}, models are trained on a sequence of modality pair data. We define this problem as Continual Multimodal Contrastive Learning (CMCL), an underexplored yet crucial research direction at the intersection of multimodal and continual learning.
In this paper, we formulate CMCL through two specialized principles of stability and plasticity. We theoretically derive a novel optimization-based method, which projects updated gradients from dual sides onto subspaces where any gradient is prevented from interfering with the previously learned knowledge. Two upper bounds provide theoretical insights on both stability and plasticity in our solution. Beyond our theoretical contributions, we conduct experiments on multiple datasets by comparing our method against advanced continual learning baselines. The empirical results further support our claims and demonstrate the efficacy of our method. 
Our codes are available at \url{https://github.com/Xiaohao-Liu/CMCL}.

\end{abstract}

\addtocontents{toc}{\protect\setcounter{tocdepth}{-1}}

\vspace{-10pt}
\section{Introduction}
\vspace{-5pt}
Humans perceive the world in a multimodal way. They capture and process information from what they see, hear, smell, and touch. Machines are evolving towards multisensory processing to mimic this capability by leveraging specialized latent representations to interpret the world effectively~\cite{baltruvsaitis2018multimodal, yu2024recent, xu2023multimodal, liu2024towards, liu2025principled, luo2024mmevol,luo2024deem}. 
Multimodal contrastive learning (MCL) builds upon unimodal~\cite{chen2020simple, he2020momentum, chen2020improved} and cross-modal contrastive learning~\cite{zhang2021cross, radford2021learning}, which demonstrates exceptional representational power and broad applicability. 
The core principle behind it is to bring modality pairs (\textit{e.g.}, vision-text or vision-audio data) from the same instance closer together while simultaneously pushing apart representations of different instances.
Consequently, MCL constructs a unified representation space yet capable of accommodating diverse modalities~\cite{girdhar2023imagebind, huang2023language, lyu2024unibind, wang2024freebind}. 

Extensive data support is critical for ensuring representational quality. Recent studies have focused on either compensating to the existing modality data~\cite{lyu2024unibind, wang2024freebind} or introducing the emergent ones~\cite{zhu2023languagebind}. 
Besides learning from scratch, collected data can also be incrementally introduced to gradually enhance a model's capabilities, thereby resolving two significant challenges: 
1) the difficulty of collecting all prepared multimodal data in a single process, and 2) the high computational expense associated with training on mixed datasets from initialization.  
With the emergent multimodal data, multimodal models can be progressively elevated via continual/incremental learning. 
Unfortunately, recent research in MCL primarily advances in an ``\textit{all in one go}'' manner. 
Moreover, it is intractable for current continual learning methods to address this problem~\cite{wang2021training, cha2021co2l, wen2024provable, wang2024training}, due to the inter-modal complexity and the task-agnostic constraints, where the objective of contrasting different modalities remains consistent over time\footnote{We review the literature related to this work in Appendix~\ref{sec:related_work}.}.

\begin{figure}[t]
    \centering
    \begin{minipage}[t]{0.480\linewidth}
        \centering
\includegraphics[width=0.99\textwidth]{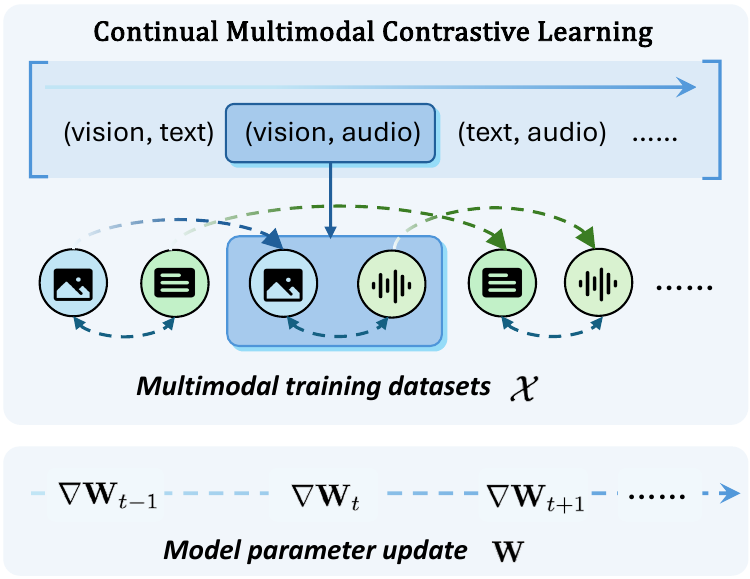}
        \caption{\textbf{The problem of Continual Multimodal Contrastive Learning (CMCL).} Multimodal models learn from data step by step. At each step, the two modalities are aligned through contrastive learning. Different pairs of modalities are involved at different steps. The model parameters are continually updated through gradients from different training steps. }
        \label{fig:top_a}
    \end{minipage}\hspace{4pt}
    \begin{minipage}[t]{0.49\linewidth}
        \centering
\includegraphics[width=0.99\textwidth]{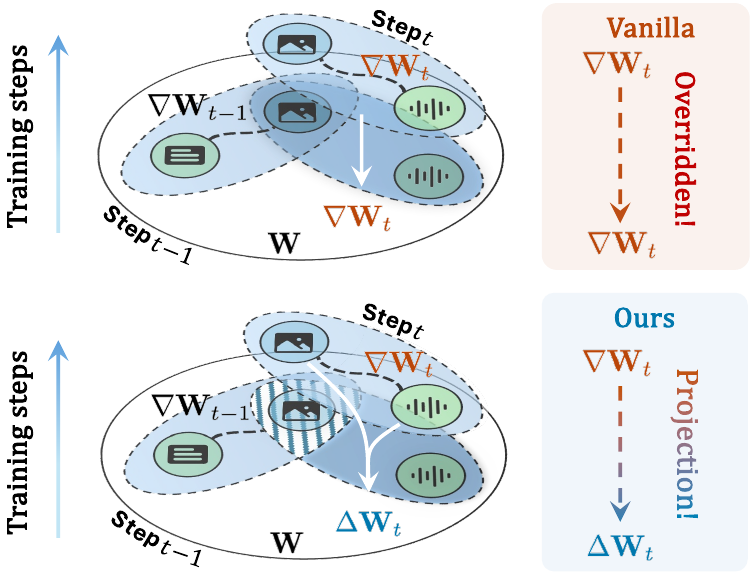}
        \caption{\textbf{Our gradient-projection method (bottom) compared with the vanilla (top).} Within the parameter space spanned by $\mathbf{W}$, the gradient updated at the previous step $\nabla\mathbf{W}_{t-1}$ is directly overridden in vanilla training. Instead, our method projects the new gradient {$\nabla\mathbf{W}_t$} onto the non-effective subspaces of the old, denoting {$\Delta\mathbf{W}_t$}, to maintain both stability and plasticity.}
        \label{fig:top_b}
    \end{minipage}
    \vspace{-4mm}
\end{figure}

Aware of this, we pioneer the formulation of continual multimodal contrastive learning (CMCL) in this paper. 
In CMCL, a sequence of multimodal data is utilized for training. At each training step, two modalities are aligned through contrastive learning, and the new modality pair is subsequently integrated into the following step, as illustrated in Figure~\ref{fig:top_a}.
We specify two definitions of \textit{stability} and \textit{plasticity} in CMCL. 
Here stability refers to the model's capability to retain acquired knowledge over previously interacted modalities, while plasticity denotes that the model can learn effectively from new modality pairs.
Building upon this foundation, we theoretically derive a novel CMCL method by projecting updated parameter gradients from dual sides onto specialized subspaces, as shown in Figure~\ref{fig:top_b}. 
We project gradients based on both their own modality knowledge and their interacted ones, \textit{i.e.}, a dual-sided projection. These inter-modality histories are utilized to construct the corresponding gradient projectors.
Incorporating new data at the current training step within these projected subspaces does not adversely impact model performance on datasets from previous steps.
Furthermore, we provide theoretical guarantees by establishing two bounds addressing stability and plasticity. These bounds substantiate our method's effectiveness theoretically, further supported by empirical analyses.
Besides, we showcase the flexibility of our method by extending it to any steps and any modality pairs thus achieving both efficacy and efficiency in practice. 
We conduct experiments on 7 datasets to evaluate our solution for classification and retrieval tasks. The results well justify our claims. The proposed method consistently achieves superior performance in plasticity and stability. Compared with previous state-of-the-art (SOTA) baselines of the continual learning field, it demonstrates remarkable results across multiple metrics.

Before delving into details, we clearly emphasize our contributions as follows:
\begin{itemize}[leftmargin=*]
    \item Conceptually, we take the initiative to formulate continual multimodal contrastive learning explicitly and provide specialized definitions of stability and plasticity that guide methodological development. Our work establishes a foundational framework for exploring this previously unaddressed problem and encourages future contributions to CMCL.
    \item Technically, we introduce a novel method that operates on the parameter gradients. These gradients are projected onto the subspaces where any updated gradients have no effect on the previous knowledge. This method is built upon existing modality-binding models and can be readily extended to any training steps and modality pairs in the practical training procedure. 
    \item Theoretically, we rigorously derive our method based on the fundamental requirements (stability and plasticity), supported by detailed theoretical proofs. Additionally, we provide two theoretical bounds that guarantee the objectives, which can be tightened in practice to reinforce our claim. 
    \item Empirically, we conduct extensive experiments on 7 datasets to evaluate the performance in the CMCL setting. By utilizing three modality-binding methods as backbones, we demonstrate superior performance compared to SOTA continual learning baselines, which confirms the effectiveness of our solution. 
\end{itemize}

\section{Preliminaries}

\label{sec:background}

\noindent\textbf{Multimodal learning.} 
Typical multimodal learning can be formulated as:
\begin{align}
    \label{eq:mmcl}
    \mathbb{P}(\mathbf{x},y)\triangleq\mathbb{P}_{y|\mathbf{x}}\left(y\mid h\circ g(\mathbf{x})\right)\mathbb{P}_{\mathbf{x}}(\mathbf{x}),
\end{align}
where $g$ is the multimodal encoder that generates corresponding representation $\mathbf{z}$ given the input $\mathbf{x}$, and $h$ is the task mapping (\textit{e.g.}, linear classifier~\cite{radford2021learning, girdhar2023imagebind} or even large language models (LLMs)~\cite{wu2024next, han2023imagebind,zhang2024hierarchical}). In general, $g$ is modal-agnostic. Here we use $g_m$ as the modal-specific encoder, indicating that we do not make any assumption about a shared model architecture or parameters. For different modalities, a unified representation space is desired to facilitate the development of $h$, so that one well-trained $h$ serves all $\{g_m\}_{m\in\mathcal{M}}$.

\noindent\textbf{Modality-binding (modality pairs for training).} Note that $g$ is normally implemented by modality-binding methods to obtain representations for different modalities in a unified space. These methods utilize paired data $\mathcal{X}^{m;m'} = \{(\mathbf{x}^{m}_1, \mathbf{x}^{m'}_1),(\mathbf{x}^{m}_2, \mathbf{x}^{m'}_2),\dots\} \subset \mathcal{X}^{m}\times \mathcal{X}^{m'}$ to optimize the contrastive objective derived from CLIP~\cite{radford2021learning}: \begin{align}
    \label{eq:contrastive_loss}
    \mathcal{L}=-\log\frac{\exp(\mathbf{z}^{m \top}_i\mathbf{z}^{m'}_i/\tau)}{\exp(\mathbf{z}^{m \top}_i\mathbf{z}^{m'}_i/\tau)+\sum_{j\neq i}\exp(\mathbf{z}^{m \top}_i\mathbf{z}^{m'}_j/\tau)},
\end{align}
where $\tau$ is the temperature hyperparameter. The two modalities $m$ and $m'$ from one instance $i$ are pulled closer while being pushed away from different instances $j$. Despite having only two modalities in each batch, more diverse datasets are introduced to achieve multimodal contrastive learning with more paired modality data. The typical utilization of modality-binding is exemplified by CLIP, which is contrastive learning for bi-modality. Here only two modalities (\textit{i.e.}, image/vision and text) are used for training. However, several works~\cite{girdhar2023imagebind, zhu2023languagebind, lyu2024unibind} have recently extended this approach with more modalities on different modality pair data, \textit{e.g.}, image-audio or audio-text pairs.

\noindent\textbf{Range-Null space.}
Given a matrix $\mathbf{A}$, its pseudo-inverse $\mathbf{A}^\dagger$ satisfies $\mathbf{A}\mathbf{A}^\dagger \mathbf{A}\equiv \mathbf{A}$. 
$\mathbf{A}^\dagger\mathbf{A}$ projects $\mathbf{b}$ to the \emph{range space} of $\mathbf{A}$, formally, $\mathbf{A}(\mathbf{A}^\dagger\mathbf{A})\mathbf{b} \equiv \mathbf{A}\mathbf{b}$.
$(\mathbf{I} - \mathbf{A}^\dagger\mathbf{A})$ can be seen as the \emph{null space} projector that directly satisfies $\mathbf{A}(\mathbf{I} - \mathbf{A}^\dagger\mathbf{A})\mathbf{b} \equiv \mathbf{0}$. And there are several ways to resolve the pseudo-inverse $\mathbf{A}^{\dagger}$, for instance, in singular value decomposition (SVD),  $\mathbf{A}^\dagger = \mathbf{V\Lambda^\dagger U^\top}$ for $\{\mathbf{U}, \mathbf{\Lambda}, \mathbf{V}^\top\} = \mathrm{SVD}(\mathbf{A})$, where $\mathbf{\Lambda^\dagger}$ take the reciprocal of the nonzero singular values in $\mathbf{\Lambda}$ while leaving zeros unchanged.
Null space projection is used in prior works like Adam-NSCL~\cite{wang2021training, fang2024alphaedit}, which constructs the null space projector based on layer-wise features. Despite the inspiration, it is not capable of tackling the inter-modality complexity in CMCL. A specialized solution is still desirable for CMCL.

\section{Continual Multimodal Contrastive Learning}

\subsection{Problem Setup and Two Goals}
\textbf{Problem setup.} Suppose we have a sequence of datasets $\mathcal{X} = \{\mathcal{X}^{m;m'}_1,\mathcal{X}^{m;m'}_2,\dots\}$, where $\mathcal{X}^{m;m'}_1$ denotes the dataset containing modalities $m$ and $m'$ at the initial step and the sequence length is $N$.
The parameters of each modality encoder $g_m$ are optimized with different pairs, signifying $\mathbf{W}^m_0, \dots, \mathbf{W}^m_N$. Note that $\mathbf{W}^m_i = \mathbf{W}^m_{i+1}$ when $\mathcal{X}_{i+1}$ does not include the modality $m$. Without the loss of generality, we consider two continuous training datasets, denoted as $\mathcal{X}^{m_1, m_2}_{t-1}$ and $\mathcal{X}^{m_1, m_3}_{t}$, which share at least one common modality. The corresponding encoder $g_{m_1}$ will be continually optimized with different data. Here we formulate multimodal learning on multiple datasets in a sequential learning paradigm:
\begin{align}
    \label{eq:continual_learning}
    \mathbf{W}^{m_1}_{t} = \arg\min_{\mathbf{W}^{m_1}_{t-1}} \sum_{(\mathbf{x}^{m_1}_t, \mathbf{x}^{m_3}_t)\in \mathcal{X}_t^{m_1;m_3}} \mathcal{L}(g_{m_1}(\mathbf{x}^{m_1}_t; \mathbf{W}^{m_1}_{t-1}), g_{m_3}(\mathbf{x}^{m_3}_t; \mathbf{W}^{m_3}_{t-1})).
\end{align}
Note that $\mathbf{W}^{m_1}_{t-1}$ is trained on $\mathcal{X}^{m_1, m_2}_{t-1}$. Besides, if $m_3 := m_2$, this formulation can be generalized to the case in CLIP~\cite{radford2021learning} and the bi-modality continual training~\cite{elizalde2023clap, xu2021videoclip, jia2021scaling}.

\textbf{Two goals in CMCL.} The main goal is to maintain alignment stability along with continual learning. This alignment is typically defined as the inner product between different modalities of the same instance~\cite{radford2021learning, girdhar2023imagebind, zhu2023languagebind}, signifying $\mathbf{z}^{m \top}_i\mathbf{z}^{m'}_i$. This formulation is also utilized in multimodal contrastive learning as mentioned in Eq.~(\ref{eq:contrastive_loss}). 

\begin{defn}[Stability]\label{def:stability}
Given two adjacent training datasets $\mathcal{X}_{t-1}^{m_1,m_2}$ and $\mathcal{X}_{t}^{m_1,m_3}$, the model satisfies stability if:
\begin{align}
    \label{eq:stability}
    \underbrace{(\mathbf{Z}_{\datastep{t-1};\modelstep{t}}^{m_1})^\top \mathbf{Z}_{\datastep{t-1};\modelstep{t}}^{m_2}}_{\text{model at current step}\ t} = 
    \underbrace{(\mathbf{Z}_{\datastep{t-1};\modelstep{t-1}}^{m_1})^\top \mathbf{Z}_{\datastep{t-1};\modelstep{t-1}}^{m_2}}_{\text{model at previous step}\ t-1},
\end{align}
where $\mathbf{Z}_{\datastep{t-1};\modelstep{t}}^{m_1}$ denotes the representations for modality $m_1$ in $(t-1)$-th dataset generated by the model trained after step $t$. $\mathbf{Z}_{\datastep{t-1};\modelstep{t-1}}^{m_1}$ represents the representations generated by the model immediately trained on the dataset at its own step $t-1$.
\end{defn}

\vspace{-3mm}
For the dataset used previously (\textit{i.e.}, $\mathcal{X}_{t-1}^{m_1,m_2}$), the current model parameterized by $\mathbf{W}_{t}$ can perform well aligned with the previous model parameterized by $\mathbf{W}_{t-1}$. For simplicity, we denote the alignment scores between two modalities (\textit{i.e.}, $m_1$ and $m_2$) as $\mathbf{A}^{m_1,m_2}_{t-1;t}$, utilizing the model at step $t$ for dataset $\mathcal{X}_{t-1}$. Consequently, we simplify Eq.~(\ref{eq:stability}) to $\mathbf{A}^{m_1,m_2}_{t-1;\modelstep{t}} = \mathbf{A}^{m_1,m_2}_{t-1;\modelstep{t-1}}$.

\begin{defn}[Plasticity]\label{def:plasticity}
The model has the capability to learn from new datasets, adhering to the objective of multimodal contrastive learning: $\mathbf{Z}^{m \top}_{t;t}\mathbf{Z}^{m'}_{t;t} \sim p^{m,m'}_{t}$, where $p^{m,m'}_{t}$ is the distribution that samples the $t$-th dataset for two different modalities.
\end{defn}

\vspace{-3mm}
Another goal indicates that the model, even with constraints to memorize the old, can still have the plasticity to accept the new. Specifically, given the dataset at the current step $t$, the parameters of the model can still be optimized by a multimodal contrastive learning objective to fit its distribution.

\noindent\textbf{Dilemmas.} We consider the post-training paradigm to elevate the existing pre-trained models that are already trained with various datasets. Current methods without a continual learning objective face a significant challenge of \textit{forgetting}: the alignment plasticity is broken when the model is trained with new datasets (\textit{i.e.}, $\mathbf{A}^{m_1,m_2}_{t-1;\modelstep{t}} \neq \mathbf{A}^{m_1,m_2}_{t-1;\modelstep{t-1}}$).

\subsection{Dual-sided Null Space (DNS) for CMCL}

\begin{wrapfigure}{r}{0.5\textwidth}
    \centering
    \vspace{-4.6mm}
    \includegraphics[width=0.5\textwidth]{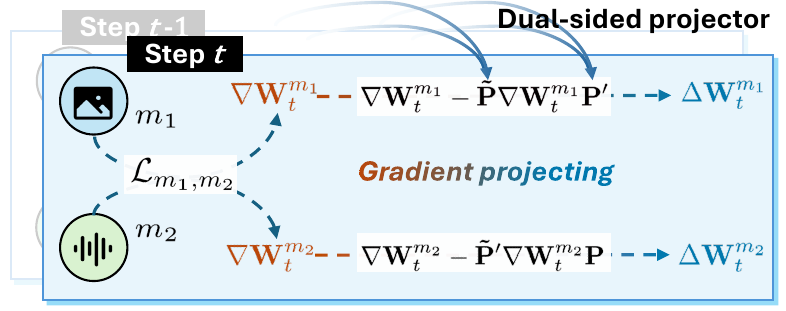}
    \caption{
    \textbf{The paradigm of the proposed method (DNS) for CMCL}. 
    At the current step $t$, gradients ({$\nabla\mathbf{W}_{t}^{m_1}$} and  {$\nabla\mathbf{W}_{t}^{m_2}$}) derived by contrastive learning between modalities $m_1$ and $m_2$ are projected from dual sides to {$\Delta\mathbf{W}_{t}^{m_1}$} and  {$\Delta\mathbf{W}_{t}^{m_2}$}. The projectors are built upon the features from the previous steps to constrain the gradients. By substituting for the gradients, model parameters are optimized from new modality pair data while retraining the prior knowledge.
    }
    \label{fig:framework}
    \vspace{-12mm}
\end{wrapfigure}

In this section, we propose projecting gradients to the dual-sided null space to satisfy both stability and plasticity, as depicted in Figure~\ref{fig:framework}.

\textbf{Stability requirement.}
Recall the stability requirement in CMCL of $(\mathbf{Z}_{t-1;t}^{m_1})^\top \mathbf{Z}_{t-1;t}^{m_2} = \mathbf{A}^{m_1,m_2}_{t-1;t-1}$. That is to say, given the same modality pair data from the previous step, the model trained on new datasets should maintain the alignments on modality pairs from earlier steps. This implies that new updates (\textit{i.e.}, gradients) should not interfere with effective parts within the parameter space.

\textbf{Plasticity requirement.} 
According to the contrastive learning objective in Eq.~(\ref{eq:continual_learning}), unconstrained gradients naturally encourage parameters optimized from previous datasets to adapt to new datasets. Therefore, plasticity emerges directly from the inherent learning process. In vanilla CMCL, where the model sequentially trains on datasets without any explicit constraints, plasticity is inherently maintained. Introducing additional regularization strategies inherently increases risks to plasticity. 

\textbf{Theoretical motivation.}
Building upon the above analyses, we explore gradient updates in detail to establish theoretical motivations.
Let us consider the left-hand side of Eq.~(\ref{eq:stability}), \textit{i.e.}, $(\mathbf{Z}_{t-1;t}^{m_1})^\top \mathbf{Z}_{t-1;t}^{m_2}$.
We introduce the prior encoders of $g^*_{m}$ (\textit{cf.}, plasticity in Definition~\ref{def:plasticity}) and a linear operator that transfers the priors $\mathbf{Z}^{m_1}_{t-1;*}$ and $\mathbf{Z}^{m_2}_{t-1;*}$, with $\mathbf{Z}^{m_1}_{t-1;t} = \mathbf{W}^{m_1}_{t}\mathbf{Z}^{ m_1}_{t-1;*}$ and $\mathbf{W}^{m_1}_{t}\in\mathbb{R}^{d\times d}, {\mathbf{Z}^{ m_1}_{t-1;*}\in\mathbb{R}^{d\times n}}$, where $d$ is the feature dimension and $n$ denotes the batch size.
Besides, we consider the gradient update rule as $\mathbf{W}_{t}^m = \mathbf{W}_{t-1}^m - \eta{\nabla\mathbf{W}_t^m}$, where $\eta$ is the learning rate. Therefore, we have:
{
\small
\begin{align}
    & \quad (\mathbf{Z}_{t-1;t}^{m_1})^\top \mathbf{Z}_{t-1;t}^{m_2} \nonumber\\
    & = (\mathbf{Z}^{ m_1}_{t-1;*})^\top \underbrace{(\mathbf{W}^{m_1}_{t})^\top \mathbf{W}^{m_2}_{t}}_{\text{global parameters at step }t} \mathbf{Z}^{m_2}_{t-1;*}  = (\mathbf{Z}^{ m_1}_{t-1;*})^\top{(\mathbf{W}^{m_1}_{t-1} - \eta{\nabla\mathbf{W}_{t}^{m_1})}^\top (\mathbf{W}^{m_2}_{t-1}} - \eta{\nabla\mathbf{W}_{t}^{m_2})}\mathbf{Z}^{m_2}_{t-1;*} \nonumber\\
    & = (\mathbf{Z}^{ m_1}_{t-1;*})^\top\underbrace{(\mathbf{W}^{m_1}_{t-1})^\top \mathbf{W}^{m_2}_{t-1}}_{\text{global parameters at step }t-1} \mathbf{Z}^{m_2}_{t-1;*} \nonumber\\
    & - \eta (\mathbf{Z}^{m_1}_{t-1;*})^\top \underbrace{\left( (\mathbf{W}^{m_1}_{t-1})^\top {\nabla\mathbf{W}^{m_2}_t} + ({\nabla \mathbf{W}^{m_1}_t)}^\top \mathbf{W}^{m_2}_{t-1} - \eta({\nabla \mathbf{W}^{m_1}_t)}^\top {\nabla\mathbf{W}^{m_2}_t} \right)}_{\text{updated global parameters } {\tilde{\mathbf{W}}}} \mathbf{Z}^{m_2}_{t-1;*}.
\end{align}
}

\textbf{Projecting ${\tilde{\mathbf{W}}}$ (globally).} 
We consider the parameter update globally, where ${\tilde{\mathbf{W}}}$ is used to update  $(\mathbf{W}^{m_1}_{t-1})^\top\mathbf{W}^{m_2}_{t-1}$ via $(\mathbf{W}^{m_1}_{t})^\top\mathbf{W}^{m_2}_{t}= (\mathbf{W}^{m_1}_{t-1})^\top\mathbf{W}^{m_2}_{t-1} - \eta {\tilde{\mathbf{W}}}$. 
To keep \textit{stability} for the previous datasets, we need to find a new parameter update ${\bar{\mathbf{W}}}$ that satisfies $(\mathbf{Z}^{m_1}_{t-1;*})^\top {\bar{\mathbf{W}}} \mathbf{Z}^{m_2}_{t-1;*} = \mathbf{0}$. Therefore, $\mathbf{A}_{t-1;t}^{m_1,m_2} = \mathbf{A}_{t-1;t-1}^{m_1,m_2} + \mathbf{0}$.

\begin{thm}\textbf{}
\label{thm:W_to_nullspace}
Let the global parameter update follow $(\mathbf{W}_t^{m_1})^\top\mathbf{W}_t^{m_2} = (\mathbf{W}_{t-1}^{m_1})^\top\mathbf{W}_{t-1}^{m_2} - \eta{\bar{\mathbf{W}}}$, and the original parameter update be denoted as:
\begin{align}
    {\tilde{\mathbf{W}}} = \left( (\mathbf{W}^{m_1}_{t-1})^\top {\nabla\mathbf{W}^{m_2}_t} +({\nabla\mathbf{W}^{m_1}_t})^\top \mathbf{W}^{m_2}_{t-1} - \eta({\nabla \mathbf{W}^{m_1}_t})^\top {\nabla \mathbf{W}^{m_2}_t} \right).
\end{align}
Then, the stability holds if 
${\bar{\mathbf{W}}} := {\tilde{\mathbf{W}}} - \mathbf{P'}{\tilde{\mathbf{W}}}\mathbf{P}$,
where $\mathbf{P'}$ and $\mathbf{P}$ are the space projectors for $\mathbf{Z}_{t-1;*}^{m_1}$ and $\mathbf{Z}_{t-1;*}^{m_2}$, respectively.
See proof in Appendix~\ref{sec:proof_W_to_nullspace}.
\end{thm}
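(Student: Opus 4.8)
The plan is to show directly that the replacement update $\bar{\mathbf{W}}$ leaves the previous alignment scores invariant, i.e. $\mathbf{A}^{m_1,m_2}_{t-1;t} = \mathbf{A}^{m_1,m_2}_{t-1;t-1}$, which is exactly Definition~\ref{def:stability}. Writing $\mathbf{Z}_1 := \mathbf{Z}^{m_1}_{t-1;*}$ and $\mathbf{Z}_2 := \mathbf{Z}^{m_2}_{t-1;*}$ for the frozen priors and inserting the prescribed global update $(\mathbf{W}_t^{m_1})^\top \mathbf{W}_t^{m_2} = (\mathbf{W}_{t-1}^{m_1})^\top \mathbf{W}_{t-1}^{m_2} - \eta\bar{\mathbf{W}}$, I would expand
\begin{align}
(\mathbf{Z}_{t-1;t}^{m_1})^\top \mathbf{Z}_{t-1;t}^{m_2} = \mathbf{Z}_1^\top (\mathbf{W}_{t-1}^{m_1})^\top \mathbf{W}_{t-1}^{m_2} \mathbf{Z}_2 - \eta\, \mathbf{Z}_1^\top \bar{\mathbf{W}} \mathbf{Z}_2 = \mathbf{A}^{m_1,m_2}_{t-1;t-1} - \eta\, \mathbf{Z}_1^\top \bar{\mathbf{W}} \mathbf{Z}_2 .
\end{align}
Thus the whole claim collapses to the single identity $\mathbf{Z}_1^\top \bar{\mathbf{W}} \mathbf{Z}_2 = \mathbf{0}$.

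Next I would fix the meaning of the two projectors and record their defining properties. Here $\mathbf{P'}$ and $\mathbf{P}$ are the orthogonal range-space projectors of $\mathbf{Z}_1$ and $\mathbf{Z}_2$, concretely $\mathbf{P'} = \mathbf{Z}_1\mathbf{Z}_1^\dagger$ and $\mathbf{P} = \mathbf{Z}_2\mathbf{Z}_2^\dagger$, each a symmetric idempotent $d\times d$ matrix. Using the pseudo-inverse identity $\mathbf{A}\mathbf{A}^\dagger\mathbf{A} = \mathbf{A}$ from the preliminaries, I get $\mathbf{P}\mathbf{Z}_2 = \mathbf{Z}_2\mathbf{Z}_2^\dagger\mathbf{Z}_2 = \mathbf{Z}_2$, so $\mathbf{P}$ acts as the identity on the columns of $\mathbf{Z}_2$ from the right. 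Symmetry of $\mathbf{P'}$ then yields $\mathbf{Z}_1^\top \mathbf{P'} = (\mathbf{P'}\mathbf{Z}_1)^\top = (\mathbf{Z}_1\mathbf{Z}_1^\dagger\mathbf{Z}_1)^\top = \mathbf{Z}_1^\top$, so $\mathbf{P'}$ acts as the identity on the columns of $\mathbf{Z}_1$ from the left.

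With these two one-sided invariances in hand, the final step is a one-line cancellation: substituting $\bar{\mathbf{W}} = \tilde{\mathbf{W}} - \mathbf{P'}\tilde{\mathbf{W}}\mathbf{P}$ and pushing $\mathbf{Z}_1^\top$ and $\mathbf{Z}_2$ through the projectors gives
\begin{align}
\mathbf{Z}_1^\top \bar{\mathbf{W}} \mathbf{Z}_2 = \mathbf{Z}_1^\top \tilde{\mathbf{W}} \mathbf{Z}_2 - (\mathbf{Z}_1^\top \mathbf{P'})\,\tilde{\mathbf{W}}\,(\mathbf{P}\mathbf{Z}_2) = \mathbf{Z}_1^\top \tilde{\mathbf{W}} \mathbf{Z}_2 - \mathbf{Z}_1^\top \tilde{\mathbf{W}} \mathbf{Z}_2 = \mathbf{0},
\end{align}
which, fed back into the first display, delivers $\mathbf{A}^{m_1,m_2}_{t-1;t} = \mathbf{A}^{m_1,m_2}_{t-1;t-1}$.

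The computation is short, so the real obstacle is conceptual rather than algebraic. The crux is to pin down precisely which projector acts on which side — that $\mathbf{P'}$ is the column-space projector of $\mathbf{Z}_1$ applied from the left and $\mathbf{P}$ the column-space projector of $\mathbf{Z}_2$ applied from the right — and to verify the matching invariances $\mathbf{Z}_1^\top\mathbf{P'} = \mathbf{Z}_1^\top$ and $\mathbf{P}\mathbf{Z}_2 = \mathbf{Z}_2$, since the asymmetric two-sided placement is exactly what makes the cancellation work. A secondary point I would flag is that the quadratic-in-gradient term $-\eta(\nabla\mathbf{W}_t^{m_1})^\top\nabla\mathbf{W}_t^{m_2}$ buried inside $\tilde{\mathbf{W}}$ needs no special treatment: it is carried along as part of $\tilde{\mathbf{W}}$ and annihilated by the same projection, so the identity holds exactly with no small-$\eta$ linearization.
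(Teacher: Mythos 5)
Your proof is correct, and its opening move---expanding $(\mathbf{Z}_{t-1;t}^{m_1})^\top\mathbf{Z}_{t-1;t}^{m_2}$ under the prescribed global update so that stability collapses to the single identity $\mathbf{Z}_1^\top\bar{\mathbf{W}}\mathbf{Z}_2=\mathbf{0}$---coincides with the paper's. From there, however, you take a genuinely different and more elementary route: you verify directly that $\bar{\mathbf{W}}=\tilde{\mathbf{W}}-\mathbf{P'}\tilde{\mathbf{W}}\mathbf{P}$ satisfies the identity, using only the one-sided invariances $\mathbf{Z}_1^\top\mathbf{P'}=\mathbf{Z}_1^\top$ and $\mathbf{P}\mathbf{Z}_2=\mathbf{Z}_2$ of the orthogonal range projectors $\mathbf{P'}=\mathbf{Z}_1\mathbf{Z}_1^\dagger$ and $\mathbf{P}=\mathbf{Z}_2\mathbf{Z}_2^\dagger$. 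The paper instead derives $\bar{\mathbf{W}}$ constructively: it invokes the Penrose lemma for $\mathbf{AXB}=\mathbf{C}$ (Lemma~\ref{lem:AXB=C}) to parametrize the \emph{entire} solution set $S=\{\mathbf{Y}-\mathbf{P'}\mathbf{Y}\mathbf{P}\mid\mathbf{Y}\in\mathbb{R}^{d\times d}\}$ of the constraint, then poses $\min_{\mathbf{Y}}\|\mathbf{Y}-\mathbf{P'}\mathbf{Y}\mathbf{P}-\tilde{\mathbf{W}}\|_F^2$, establishes convexity via the Kronecker-product Hessian $2(\mathbf{I}-\mathbf{P}\otimes\mathbf{P'})$ having eigenvalues in $\{0,2\}$, and checks that $\mathbf{Y}=\tilde{\mathbf{W}}$ is stationary. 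What the longer argument buys is the minimality property recorded in the Remark---that $\bar{\mathbf{W}}$ is the \emph{closest} stability-preserving update to $\tilde{\mathbf{W}}$ in Frobenius norm---which justifies choosing this particular element of $S$ and underwrites the plasticity side of the method; your direct verification establishes exactly the implication asserted in the theorem (``stability holds if\dots'') but not that optimality, so it is a complete proof of the statement while being strictly weaker than the paper's derivation. Your two side remarks are also sound: the asymmetric projector placement you flag is precisely what the paper's general solution produces after converting $(\mathbf{Z}_1^\top)^\dagger\mathbf{Z}_1^\top$ into $\mathbf{Z}_1\mathbf{Z}_1^\dagger$ by symmetry of the projector, and the quadratic term $-\eta(\nabla\mathbf{W}_t^{m_1})^\top\nabla\mathbf{W}_t^{m_2}$ indeed requires no small-$\eta$ treatment at this global level---that approximation only enters later, in the local result of Theorem~\ref{thm:gradient_update}.
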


\vspace{-3mm}
\begin{tcolorbox}[colback=gray!10, colframe=gray!10, boxrule=0pt, arc=0pt, boxsep=0pt, left=2pt, right=2pt, top=2pt, bottom=2pt]
\textbf{Remark.} 
    Once obtaining the global gradient update ${\tilde{\mathbf{W}}}$, we are able to find the optimal solution to project it such that $(\mathbf{Z}_{t-1}^{m_1})^\top{\bar{\mathbf{W}}}\mathbf{Z}_{t-1}^{m_2} = \mathbf{0}$. ${\bar{\mathbf{W}}}$ is the closest point to ${\tilde{\mathbf{W}}}$ but lies in the null space.  In other words, the projected global gradient does not cause any effect on previous modality pairs at step $t-1$. 
Here we call the space spanned by ${\tilde{\mathbf{W}}} - \mathbf{P'}{\tilde{\mathbf{W}}}\mathbf{P}$ as null space as well\footnote{The formal definition of null space is the solution space for $\mathbf{Ax=0}$, differing from our problem.}.

\textbf{Discussion on ${\bar{\mathbf{W}}}$.} 
${\bar{\mathbf{W}}}$ is derived from the original ${\tilde{\mathbf{W}}}$ while removing its part relevant to the previous training. Such relevance is probed and extracted via pre-curated projectors corresponding to the dual-side modality data $\mathbf{Z}_{t-1}^{m_1}$ and $\mathbf{Z}_{t-1}^{m_2}$ (see Section~\ref{sec:background} Range-Null space for constructing projectors). This projection reflects the modality interaction in multimodal contrastive learning, where the parameters are optimized from both modalities corresponding to different sides of the parameter matrix ${\tilde{\mathbf{W}}}$ (\textit{i.e.}, the row space for $m_1$ and column space for $m_2$).
\end{tcolorbox}

\textbf{Projecting ${\nabla\mathbf{W}}$ (locally).} Following this, we hope to update the local gradients in practice, \textit{i.e.}, optimizing the model parameters corresponding to each modality, to satisfy the stability.
Here we find the deviation of these gradients, thus approaching the ${\tilde{\mathbf{W}}} - \mathbf{P'}{\tilde{\mathbf{W}}}\mathbf{P}$ lies in the null space.

\begin{thm}
\label{thm:gradient_update}
Let the local parameter update follow $\mathbf{W}_{t}^m = \mathbf{W}_{t-1}^m - \eta{\Delta\mathbf{W}_t^m}$, the stability holds if 
\begin{align}
    \left\{
        \begin{aligned}
            {\Delta\mathbf{W}_{t}^{m_1}} & := {\nabla \mathbf{W}_{t}^{m_1}} - 
            \tilde{\mathbf{P}}{\nabla \mathbf{W}_{t}^{m_1}}\mathbf{P'},\\
            {\Delta\mathbf{W}_{t}^{m_2}} & := {\nabla\mathbf{W}_{t}^{m_2}} - \tilde{\mathbf{P}}' {\nabla \mathbf{W}_{t}^{m_2}} \mathbf{P},
        \end{aligned}
    \right.
\end{align}
where $\tilde{\mathbf{P}}'$ and $\tilde{\mathbf{P}}$ are the space projectors for $\mathbf{Z}_{t-1;t-1}^{m_1}$ and $\mathbf{Z}_{t-1;t-1}^{m_2}$, respectively.
See proof in Appendix~\ref{sec:proof_gradient_update}.
\end{thm}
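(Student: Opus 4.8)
The plan is to verify the stability condition by direct substitution of the local update rule $\mathbf{W}_t^m = \mathbf{W}_{t-1}^m - \eta\Delta\mathbf{W}_t^m$ and then reducing everything to a single sandwiched-update identity. First I would repeat the expansion carried out just before Theorem~\ref{thm:W_to_nullspace}, but with $\Delta\mathbf{W}$ in place of $\nabla\mathbf{W}$, obtaining $(\mathbf{W}_t^{m_1})^\top\mathbf{W}_t^{m_2} = (\mathbf{W}_{t-1}^{m_1})^\top\mathbf{W}_{t-1}^{m_2} - \eta\bar{\mathbf{W}}$ with $\bar{\mathbf{W}} = (\mathbf{W}_{t-1}^{m_1})^\top\Delta\mathbf{W}_t^{m_2} + (\Delta\mathbf{W}_t^{m_1})^\top\mathbf{W}_{t-1}^{m_2} - \eta(\Delta\mathbf{W}_t^{m_1})^\top\Delta\mathbf{W}_t^{m_2}$. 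Conjugating by $(\mathbf{Z}_{t-1;*}^{m_1})^\top(\cdot)\mathbf{Z}_{t-1;*}^{m_2}$ gives $\mathbf{A}_{t-1;t}^{m_1,m_2} = \mathbf{A}_{t-1;t-1}^{m_1,m_2} - \eta\,(\mathbf{Z}_{t-1;*}^{m_1})^\top\bar{\mathbf{W}}\,\mathbf{Z}_{t-1;*}^{m_2}$, so stability is equivalent to the single identity $(\mathbf{Z}_{t-1;*}^{m_1})^\top\bar{\mathbf{W}}\,\mathbf{Z}_{t-1;*}^{m_2}=\mathbf{0}$, and the entire proof collapses to checking it.

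The working tools would be the absorption identities of the four range-space projectors. Each is symmetric and idempotent and projects onto the column space of its defining matrix, so I would record $(\mathbf{Z}_{t-1;*}^{m_1})^\top\mathbf{P}' = (\mathbf{Z}_{t-1;*}^{m_1})^\top$ and $\mathbf{P}\,\mathbf{Z}_{t-1;*}^{m_2} = \mathbf{Z}_{t-1;*}^{m_2}$ for the step-$*$ projectors, together with $(\mathbf{Z}_{t-1;t-1}^{m_1})^\top\tilde{\mathbf{P}}' = (\mathbf{Z}_{t-1;t-1}^{m_1})^\top$ and $\tilde{\mathbf{P}}\,\mathbf{Z}_{t-1;t-1}^{m_2} = \mathbf{Z}_{t-1;t-1}^{m_2}$ for the step-$(t-1)$ projectors. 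The bridge between the two families is the linear-operator relation $\mathbf{Z}_{t-1;t-1}^m = \mathbf{W}_{t-1}^m\mathbf{Z}_{t-1;*}^m$, which lets a trailing $(\mathbf{W}_{t-1}^{m_1})^\top$ fold into $(\mathbf{Z}_{t-1;t-1}^{m_1})^\top$ on the left and a leading $\mathbf{W}_{t-1}^{m_2}$ fold into $\mathbf{Z}_{t-1;t-1}^{m_2}$ on the right.

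Next I would treat $\bar{\mathbf{W}}$ term by term. For the first linear term, $(\mathbf{Z}_{t-1;*}^{m_1})^\top(\mathbf{W}_{t-1}^{m_1})^\top = (\mathbf{Z}_{t-1;t-1}^{m_1})^\top$ absorbs $\tilde{\mathbf{P}}'$ on the left while $\mathbf{P}$ absorbs $\mathbf{Z}_{t-1;*}^{m_2}$ on the right, so the subtracted projection inside $\Delta\mathbf{W}_t^{m_2}$ reproduces the unprojected piece and cancels it exactly. Symmetrically, for the second linear term $\mathbf{W}_{t-1}^{m_2}\mathbf{Z}_{t-1;*}^{m_2} = \mathbf{Z}_{t-1;t-1}^{m_2}$ is absorbed by $\tilde{\mathbf{P}}$ (which, by symmetry, is what $\tilde{\mathbf{P}}^\top$ contributes after transposing $\Delta\mathbf{W}_t^{m_1}$) while $\mathbf{P}'$ absorbs $(\mathbf{Z}_{t-1;*}^{m_1})^\top$, again yielding exact cancellation. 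Both linear contributions therefore sandwich to $\mathbf{0}$, and this is the heart of the argument.

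The main obstacle is the quadratic cross-term $-\eta(\Delta\mathbf{W}_t^{m_1})^\top\Delta\mathbf{W}_t^{m_2}$. Carrying out the same absorptions on the step-$*$ side reduces its two factors to $(\mathbf{Z}_{t-1;*}^{m_1})^\top(\nabla\mathbf{W}_t^{m_1})^\top(\mathbf{I}-\tilde{\mathbf{P}})$ and $(\mathbf{I}-\tilde{\mathbf{P}}')\nabla\mathbf{W}_t^{m_2}\mathbf{Z}_{t-1;*}^{m_2}$, whose product does not generally vanish because $(\mathbf{I}-\tilde{\mathbf{P}})(\mathbf{I}-\tilde{\mathbf{P}}')\neq\mathbf{0}$, unlike the global projection of Theorem~\ref{thm:W_to_nullspace} which annihilates this piece wholesale. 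Since this residual already carries one factor of $\eta$ and is multiplied by the outer $\eta$ of $-\eta\bar{\mathbf{W}}$, it is of order $\eta^2$; I would therefore close the proof by discarding it as the standard second-order-in-learning-rate term, consistent with the first-order regime in which the gradient update itself is read, leaving stability exact to first order. I expect this quadratic term to be the sole point where the dual-sided local projection cannot verbatim match the global projector, so it is the step to flag explicitly.
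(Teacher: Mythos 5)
Your proposal is correct, and it reaches the theorem by a genuinely different route than the paper. The paper's proof is \emph{constructive}: after making the same first-order reduction you invoke at the end (an explicit assumption in Appendix~\ref{sec:proof_gradient_update} discards the cross term $\eta(\nabla\mathbf{W}_t^{m_1})^\top\nabla\mathbf{W}_t^{m_2}$ as high order), it poses the matrix equation $\mathbf{AX}+\mathbf{YB}=\mathbf{C}$ with $\mathbf{A}=(\mathbf{W}_{t-1}^{m_1})^\top$, $\mathbf{B}=\mathbf{W}_{t-1}^{m_2}$, $\mathbf{C}=\tilde{\mathbf{W}}-\mathbf{P'}\tilde{\mathbf{W}}\mathbf{P}$, verifies the solvability condition $(\mathbf{I}-\mathbf{A}\mathbf{A}^\dagger)\mathbf{C}(\mathbf{I}-\mathbf{B}^\dagger\mathbf{B})=\mathbf{0}$ --- which is exactly where the side hypothesis $\text{col}(\mathbf{Z}_{t-1}^{m})\subseteq\text{col}(\mathbf{W}_{t-1}^{m})$ enters, justified in practice by the identity initialization of the learners --- and then specializes the free matrices $\mathbf{M},\mathbf{N}$ in the general solution of Lemma~\ref{lem:AX+YB=C} so that the projected gradients $\Delta\mathbf{W}_t^{m_1},\Delta\mathbf{W}_t^{m_2}$ emerge, with $\tilde{\mathbf{P}}=\mathbf{B}\mathbf{P}\mathbf{B}^\dagger$ surfacing as the output-space projector. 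You instead \emph{verify} the stated formulas by direct substitution, reducing stability to the single sandwiched identity $(\mathbf{Z}_{t-1;*}^{m_1})^\top\bar{\mathbf{W}}\,\mathbf{Z}_{t-1;*}^{m_2}=\mathbf{0}$ and killing both linear terms via the absorption identities $(\mathbf{Z}_{t-1;*}^{m_1})^\top\mathbf{P'}=(\mathbf{Z}_{t-1;*}^{m_1})^\top$, $\mathbf{P}\mathbf{Z}_{t-1;*}^{m_2}=\mathbf{Z}_{t-1;*}^{m_2}$, $(\mathbf{Z}_{t-1;t-1}^{m_1})^\top\tilde{\mathbf{P}}'=(\mathbf{Z}_{t-1;t-1}^{m_1})^\top$, $\tilde{\mathbf{P}}\mathbf{Z}_{t-1;t-1}^{m_2}=\mathbf{Z}_{t-1;t-1}^{m_2}$, together with the fold-in relation $\mathbf{Z}_{t-1;t-1}^{m}=\mathbf{W}_{t-1}^{m}\mathbf{Z}_{t-1;*}^{m}$; your cancellations check out. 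What each approach buys: your verification is more elementary (pure symmetric-idempotent projector algebra, no Sylvester-type lemma) and, notably, never needs the column-space inclusion $\text{col}(\mathbf{Z}_{t-1}^{m})\subseteq\text{col}(\mathbf{W}_{t-1}^{m})$, since you only match $\mathbf{C}$ after sandwiching by the features rather than as a full matrix identity; the paper's derivation, in exchange, explains \emph{where} the two projection formulas come from instead of taking them as given. Finally, your handling of the quadratic residual is exactly aligned with the paper: the leftover you flag, $\eta^2(\mathbf{Z}_{t-1;*}^{m_1})^\top(\nabla\mathbf{W}_t^{m_1})^\top(\mathbf{I}-\tilde{\mathbf{P}})(\mathbf{I}-\tilde{\mathbf{P}}')\nabla\mathbf{W}_t^{m_2}\mathbf{Z}_{t-1;*}^{m_2}$, is precisely the quantity whose norm Theorem~\ref{thm:upper_bound_stability} bounds, so in both treatments the theorem's ``stability'' is to be read as exact at first order in $\eta$, with the second-order deviation quantified separately.
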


\vspace{-3mm}
\begin{tcolorbox}[colback=gray!10, colframe=gray!10, boxrule=0.2pt, arc=1pt, boxsep=0pt, left=2pt, right=2pt, top=2pt, bottom=2pt]
\textbf{Remark.} 
We find the local gradient transformation for ${\nabla \mathbf{W}_{t}^{m}}, \forall m\in\{m_1, m_2\}$ to achieve the global gradient update ${\tilde{\mathbf{W}}} - \mathbf{P'}{\tilde{\mathbf{W}}}\mathbf{P}$. In this case, we can transform the gradient from independent models (\textit{i.e.}, different encoders), yet still keep the stability.

\vspace{2mm}
\textbf{Discussion on ${\Delta\mathbf{W}_{t}^{m}}$.} We introduce an additional projector of the previously generated features, denoted by $\mathbf{Z}_{t-1;t-1}^{m'}$.  ${\Delta\mathbf{W}_{t}^{m}}$ is similarly derived from the original gradient ${\nabla\mathbf{W}_{t}^{m}}$ while removing the parts determined by both the own input modality data $\mathbf{Z}_{t-1;*}^{m}$ (\textit{cf.}, Theorem~\ref{thm:W_to_nullspace}) and the generated features of another modality $\mathbf{Z}_{t-1;t-1}^{m'}$. 
$\tilde{\mathbf{P}}$ and $\tilde{\mathbf{P}}'$ project vectors onto modal-specific \textit{output} subspaces, which reside within a unified representation space due to the contrastive learning paradigm.
Despite sharing a unified space, the subspaces of different modality features are not exactly identical. Therefore, we employ two separate projectors for a more precise projection. 
This projection reflects the effective mapping from the modality input to the aligned output space. 
\end{tcolorbox}

\subsection{The Theoretical Guarantee: Bounds and Capabilities}

After inferring the method to achieve CMCL, we examine the performance guarantees theoretically. Besides, we introduce the extensions for any steps and any modality pairs during the practical training procedure. Note that we also provide empirical support for our theoretical results in Section~\ref{sec:results}.

\begin{thm}[The upper bound of loss of stability]
\label{thm:upper_bound_stability}
In CMCL, the upper bound of loss of stability is:
\begin{align}
    \| \mathbf{A}^{m_1,m_2}_{t-1;t} - \mathbf{A}^{m_1,m_2}_{t-1;t-1} \|_2 \leq \eta^2 \cdot \|\mathbf{Z}^{m_1}_{t-1;*}\|_2\|\mathbf{Z}^{m_2}_{t-1;*}\|_2 \cdot \mathcal{F}(\nabla\mathbf{W}),
\end{align}
where $\mathcal{F}$ represents the interactions between different updated gradients $\nabla \mathbf{W}_{t}^{m_1}$ and $\nabla \mathbf{W}_{t}^{m_2}$. Specifically, 
$\mathcal{F}(\nabla\mathbf{W}):=2 \|\nabla\mathbf{W}^{m_1}_{t}\|_2\|\nabla\mathbf{W}^{m_2}_{t}\|_2  
+ \|\nabla\mathbf{W}^{m_1}_{t}\|^2_2
+ \|\nabla\mathbf{W}^{m_2}_{t}\|^2_2$. 
See proof in Appendix~\ref{sec:proof_uppper_bound}.
\end{thm}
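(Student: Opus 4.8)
The plan is to follow the one-step expansion already set up in the theoretical-motivation paragraph, but now with the dual-sided projected update $\mathbf{W}_t^{m} = \mathbf{W}_{t-1}^{m} - \eta\,\Delta\mathbf{W}_t^{m}$ in force, and to show that the stability projection of Theorem~\ref{thm:gradient_update} cancels the first-order contribution, leaving a purely second-order remainder whose norm is then estimated. First I would write $\mathbf{A}^{m_1,m_2}_{t-1;t} = (\mathbf{Z}^{m_1}_{t-1;*})^\top (\mathbf{W}^{m_1}_t)^\top \mathbf{W}^{m_2}_t \mathbf{Z}^{m_2}_{t-1;*}$ and expand the sandwiched product $(\mathbf{W}^{m_1}_{t-1} - \eta\Delta\mathbf{W}^{m_1}_t)^\top(\mathbf{W}^{m_2}_{t-1} - \eta\Delta\mathbf{W}^{m_2}_t)$ into an $O(\eta^0)$, an $O(\eta)$, and an $O(\eta^2)$ group. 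The $O(\eta^0)$ term is exactly $\mathbf{A}^{m_1,m_2}_{t-1;t-1}$ and cancels against the subtracted reference term on the left-hand side of the claim.

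The decisive step is that the $O(\eta)$ cross terms vanish once sandwiched between $(\mathbf{Z}^{m_1}_{t-1;*})^\top$ and $\mathbf{Z}^{m_2}_{t-1;*}$. This is precisely the content of Theorem~\ref{thm:gradient_update}: the projected gradients are defined so that $(\mathbf{Z}^{m_1}_{t-1;*})^\top\big[(\mathbf{W}^{m_1}_{t-1})^\top\Delta\mathbf{W}^{m_2}_t + (\Delta\mathbf{W}^{m_1}_t)^\top\mathbf{W}^{m_2}_{t-1}\big]\mathbf{Z}^{m_2}_{t-1;*} = \mathbf{0}$. Invoking this directly leaves only the second-order term, so that
\[
\mathbf{A}^{m_1,m_2}_{t-1;t} - \mathbf{A}^{m_1,m_2}_{t-1;t-1} = \eta^2\,(\mathbf{Z}^{m_1}_{t-1;*})^\top (\Delta\mathbf{W}^{m_1}_t)^\top \Delta\mathbf{W}^{m_2}_t\,\mathbf{Z}^{m_2}_{t-1;*},
\]
which already accounts for the $\eta^2$ prefactor in the claimed bound. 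Taking the spectral norm and applying sub-multiplicativity $\|\mathbf{X}\mathbf{Y}\|_2 \le \|\mathbf{X}\|_2\|\mathbf{Y}\|_2$ peels off the factors $\|\mathbf{Z}^{m_1}_{t-1;*}\|_2$ and $\|\mathbf{Z}^{m_2}_{t-1;*}\|_2$ and reduces the problem to bounding $\|(\Delta\mathbf{W}^{m_1}_t)^\top\Delta\mathbf{W}^{m_2}_t\|_2$. Since each $\Delta\mathbf{W}^{m}_t$ is $\nabla\mathbf{W}^{m}_t$ minus a two-sided orthogonal projection, and orthogonal projectors are non-expansive, I would bound this interaction by the raw gradient norms; recognizing that $\mathcal{F}(\nabla\mathbf{W})$ is the perfect square $(\|\nabla\mathbf{W}^{m_1}_t\|_2 + \|\nabla\mathbf{W}^{m_2}_t\|_2)^2$ shows that the cross term $2\|\nabla\mathbf{W}^{m_1}_t\|_2\|\nabla\mathbf{W}^{m_2}_t\|_2$ together with the two square terms is a valid (if loose) upper bound, completing the estimate.

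The step I expect to be the main obstacle is the first-order cancellation, because the projector identities of Theorem~\ref{thm:gradient_update} must be shown to survive composition with the input-feature matrices $\mathbf{Z}^{m_i}_{t-1;*}$ rather than holding only in isolation. I would handle this by appealing to the range/null-space relations of Section~\ref{sec:background}: the output-space projectors $\tilde{\mathbf{P}},\tilde{\mathbf{P}}'$ and the input-space projectors $\mathbf{P},\mathbf{P}'$ are constructed exactly so that the components of $\Delta\mathbf{W}^{m}_t$ which act on the previously learned subspaces are annihilated when sandwiched, which is what forces the $O(\eta)$ group to disappear. Once this cancellation is secured, the remainder is routine norm bookkeeping through the triangle and sub-multiplicativity inequalities.
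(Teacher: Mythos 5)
Your proposal is correct and follows essentially the same route as the paper's proof in Appendix~\ref{sec:proof_uppper_bound}: expand the update $\mathbf{W}_t^{m}=\mathbf{W}_{t-1}^{m}-\eta\,\Delta\mathbf{W}_t^{m}$, use the construction of Theorem~\ref{thm:gradient_update} (via Theorem~\ref{thm:W_to_nullspace}) to make the sandwiched first-order term $(\mathbf{Z}_{t-1;*}^{m_1})^\top(\tilde{\mathbf{W}}-\mathbf{P}'\tilde{\mathbf{W}}\mathbf{P})\mathbf{Z}_{t-1;*}^{m_2}$ vanish exactly, and then bound the surviving $\eta^2\,(\mathbf{Z}_{t-1;*}^{m_1})^\top(\Delta\mathbf{W}_t^{m_1})^\top\Delta\mathbf{W}_t^{m_2}\,\mathbf{Z}_{t-1;*}^{m_2}$ by sub-multiplicativity, the triangle inequality, and $\|\mathbf{P}\|_2\le 1$. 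The only cosmetic difference is in the final bookkeeping — the paper expands $(\Delta\mathbf{W}_t^{m_1})^\top\Delta\mathbf{W}_t^{m_2}$ into four terms and bounds each separately, while you bound $\|\Delta\mathbf{W}_t^{m_1}\|_2\|\Delta\mathbf{W}_t^{m_2}\|_2\le 4\|\nabla\mathbf{W}_t^{m_1}\|_2\|\nabla\mathbf{W}_t^{m_2}\|_2$ and observe via AM--GM that this is dominated by the perfect square $\mathcal{F}(\nabla\mathbf{W})=\bigl(\|\nabla\mathbf{W}_t^{m_1}\|_2+\|\nabla\mathbf{W}_t^{m_2}\|_2\bigr)^2$, which yields the same stated bound.
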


\vspace{-3mm}
\begin{tcolorbox}[colback=gray!10, colframe=gray!10, boxrule=0pt, arc=1pt, boxsep=0pt, left=2pt, right=2pt, top=2pt, bottom=2pt]
\textbf{Remark.} 
The loss of stability is primarily bounded by the square of the learning rate $\eta^2$, and the norm of the input features $\{\mathbf{Z}_{t-1;*}^{m}\}$. The input features are typically normalized to stabilize the prior encoders' training. Therefore, their norm does not become so large as to interfere with the bound. Furthermore, low-dimensional or sparse features have been observed to better preserve basic objectives for instance discrimination~\cite{chen2022learning}, potentially tightening the bound. A smaller learning rate results in a tighter bound as well. This is reasonable when considering the extreme case ($\eta = 0$), where the parameters are not updated during training without any loss of stability. 
\end{tcolorbox}

\begin{thm}[The upper bound of plasticity]
\label{thm:lower_bound_consisteny}
By transforming the gradients, the loss $\mathcal{L}_{t}$ update at step $t$ from the previous loss $\mathcal{L}_{t-1}$ is following:
    \begin{align}
    \mathcal{L}_{t} - \mathcal{L}_{t-1} \leq 0
    \end{align}
to keep the plasticity when $\frac{o(\eta)}{\eta}\leq 0$ at the time $t$.
See proof in Appendix~\ref{sec:proof_lower_bound}.
\end{thm}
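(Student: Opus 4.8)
The plan is to read the claim as a single-step descent guarantee: interpreting $\mathcal{L}_{t-1}$ and $\mathcal{L}_{t}$ as the step-$t$ contrastive loss evaluated at the parameters before and after the projected update $\mathbf{W}_t^m = \mathbf{W}_{t-1}^m - \eta\,\Delta\mathbf{W}_t^m$, I want to show this projected update is still a descent step for the new data. First I would perform a joint first-order Taylor expansion of $\mathcal{L}$ in the two encoder parameters $\mathbf{W}^{m_1}$ and $\mathbf{W}^{m_2}$ about $(\mathbf{W}_{t-1}^{m_1},\mathbf{W}_{t-1}^{m_2})$. Writing the Frobenius inner product as $\langle \mathbf{A},\mathbf{B}\rangle=\Tr(\mathbf{A}^\top\mathbf{B})$ and using $\nabla\mathbf{W}_t^{m}=\partial\mathcal{L}/\partial\mathbf{W}^m$, this yields
\begin{align}
\mathcal{L}_{t}-\mathcal{L}_{t-1} = -\eta\big(\langle \nabla\mathbf{W}_t^{m_1},\Delta\mathbf{W}_t^{m_1}\rangle + \langle \nabla\mathbf{W}_t^{m_2},\Delta\mathbf{W}_t^{m_2}\rangle\big) + o(\eta),
\end{align}
where the $o(\eta)$ absorbs all higher-order terms, which is legitimate because the contrastive loss is differentiable in the parameters.

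The crux is showing that each first-order coefficient $\langle \nabla\mathbf{W}_t^{m},\Delta\mathbf{W}_t^{m}\rangle$ is non-negative, so the projected direction stays aligned with the true negative gradient. Substituting $\Delta\mathbf{W}_t^{m_1}=\nabla\mathbf{W}_t^{m_1}-\tilde{\mathbf{P}}\nabla\mathbf{W}_t^{m_1}\mathbf{P'}$ from Theorem~\ref{thm:gradient_update} gives
\begin{align}
\langle \nabla\mathbf{W}_t^{m_1},\Delta\mathbf{W}_t^{m_1}\rangle = \|\nabla\mathbf{W}_t^{m_1}\|_F^2 - \langle \nabla\mathbf{W}_t^{m_1},\tilde{\mathbf{P}}\nabla\mathbf{W}_t^{m_1}\mathbf{P'}\rangle,
\end{align}
and I would control the second term. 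Because the range--null space projectors of Section~\ref{sec:background} are orthogonal (symmetric and idempotent), left- and right-multiplication by them is non-expansive in the Frobenius norm, so $\|\tilde{\mathbf{P}}\nabla\mathbf{W}_t^{m_1}\mathbf{P'}\|_F\leq\|\nabla\mathbf{W}_t^{m_1}\|_F$; combined with Cauchy--Schwarz this forces $\langle \nabla\mathbf{W}_t^{m_1},\tilde{\mathbf{P}}\nabla\mathbf{W}_t^{m_1}\mathbf{P'}\rangle\leq\|\nabla\mathbf{W}_t^{m_1}\|_F^2$, whence $\langle \nabla\mathbf{W}_t^{m_1},\Delta\mathbf{W}_t^{m_1}\rangle\geq 0$. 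The identical argument applied to $m_2$ with the projectors $\tilde{\mathbf{P}}'$ and $\mathbf{P}$ shows $\langle \nabla\mathbf{W}_t^{m_2},\Delta\mathbf{W}_t^{m_2}\rangle\geq 0$.

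To finish, I would divide the expansion by $\eta>0$ to obtain $(\mathcal{L}_t-\mathcal{L}_{t-1})/\eta = -\big(\langle\nabla\mathbf{W}_t^{m_1},\Delta\mathbf{W}_t^{m_1}\rangle+\langle\nabla\mathbf{W}_t^{m_2},\Delta\mathbf{W}_t^{m_2}\rangle\big) + o(\eta)/\eta$. Since the bracketed sum is non-negative and the hypothesis supplies $o(\eta)/\eta\leq 0$, the right-hand side is $\leq 0$; multiplying back by $\eta>0$ gives $\mathcal{L}_t-\mathcal{L}_{t-1}\leq 0$, as claimed.

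I expect the main obstacle to be the non-negativity of the first-order coefficient in the second paragraph: it rests entirely on the projectors being genuine orthogonal projectors, hence norm non-expansive, which should be traced back to their SVD-based construction rather than assumed. A secondary point deserving care is the precise meaning of the remainder and the hypothesis $o(\eta)/\eta\leq 0$: since $\langle\nabla\mathbf{W}_t^m,\Delta\mathbf{W}_t^m\rangle\geq 0$, the weaker condition $o(\eta)/\eta\leq \langle\nabla\mathbf{W}_t^{m_1},\Delta\mathbf{W}_t^{m_1}\rangle+\langle\nabla\mathbf{W}_t^{m_2},\Delta\mathbf{W}_t^{m_2}\rangle$ would already close the argument, so the stated hypothesis is a clean sufficient condition; I would remark that for sufficiently small $\eta$ the first-order descent term dominates the remainder automatically.
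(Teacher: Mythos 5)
Your proposal is correct and follows essentially the same route as the paper's proof in Appendix~\ref{sec:proof_lower_bound}: the same first-order expansion $\mathcal{L}_t-\mathcal{L}_{t-1}=-\eta\bigl(\langle\nabla\mathbf{W}_t^{m_1},\Delta\mathbf{W}_t^{m_1}\rangle+\langle\nabla\mathbf{W}_t^{m_2},\Delta\mathbf{W}_t^{m_2}\rangle\bigr)+o(\eta)$, the same decomposition showing each first-order coefficient is non-negative, and the same use of the remainder hypothesis to conclude. Your explicit justification of the key inequality $\langle\nabla\mathbf{W}_t^{m},\tilde{\mathbf{P}}\nabla\mathbf{W}_t^{m}\mathbf{P'}\rangle\leq\|\nabla\mathbf{W}_t^{m}\|_F^2$ via orthogonality (symmetry and idempotence, hence non-expansiveness plus Cauchy--Schwarz) is exactly what the paper's unannotated ``$\geq$'' step relies on (via its Lemmas~\ref{lem:Idempotent property}--\ref{lem:Eigenvalues of idempotent matrix}), and your closing remark correctly observes that the stated hypothesis $o(\eta)/\eta\leq 0$ is a clean sufficient condition, which is tidier than the paper's small-$\eta$ argument.
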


\vspace{-3mm}
\begin{tcolorbox}[colback=gray!10, colframe=gray!10, boxrule=0pt, arc=0pt, boxsep=0pt,left=2pt, right=2pt, top=2pt, bottom=2pt]
\textbf{Remark.} 
The loss decreases, indicating learning from the new datasets. $\mathcal{L}_t = \mathcal{L}_{t-1}$ if and only if the row space of $\tilde{\mathbf{W}}$ lies in the column space of $\mathbf{Z}_{t-1}^{m_1}$ and the column space of $\tilde{\mathbf{W}}$ lies in the column space of $\mathbf{Z}_{t-1}^{m_2}$. In this case, $\tilde{\mathbf{W}}$ can only make effects to the features \textit{within} the space of previous datasets. However, $\tilde{\mathbf{W}}$ is calculated via current datasets $\mathbf{Z}_t^m$, which makes the equality rare. 
\end{tcolorbox}

\noindent\textbf{Extension for any steps $\{\mathbf{A}^{m,m'}_{i;*}\}_{<t}$.} 
In general, on continual learning, we hope to maintain stability for any seen datasets. Therefore, we have:
\begin{align}
   {\textbf{A}^{m_1,m_2}_{i;t}}  = {\textbf{A}^{m_1,m_2}_{i;i}}, \quad \forall i < t.
\end{align}
That means we need to project $\tilde{\mathbf{W}}$ onto the null space of all previous datasets, via $\mathbf{P}_{<t}$. 
We utilize the uncentered feature covariance~\cite{wang2021training} to obtain the null space projector via SVD approximation (Section~\ref{sec:background}).
Let $\bar{\mathbf{Z}}^m_{<t} = \frac{\bar{n}_{t-2}}{\bar{n}_{t-1}}\bar{\mathbf{Z}}^m_{<t-1} + \frac{{n}_{t-1}}{\bar{n}_{t-1}}\tilde{\mathbf{Z}}^m_{t-1}$, where $\tilde{\mathbf{Z}}^m_{t-1} \triangleq \frac{1}{n_{t-1}} (\mathbf{Z}^m_{t-1})^\top \mathbf{Z}^m_{t-1}$ (the equality is proved in Appendix~\ref{sec:proof_any_steps}).
Similarly, for global gradient updates, 
$\bar{\mathbf{W}}:=\tilde{\mathbf{W}} - \mathbf{P'}_{<t}\tilde{\mathbf{W}}\mathbf{P}_{<t}$ for extension.
For the local gradient update, we are required to maintain an additional feature covariance constructed from the previous models. We can achieve this in efficiency. 
For instance, $\Delta \mathbf{W}_{t}^{m_2} := \nabla\mathbf{W}_{t}^{m_2} - \tilde{\mathbf{P}}_{<t}'\nabla\mathbf{W}_{t}^{m_2}\mathbf{P}_{<t}$, {where $\tilde{\mathbf{P}}_{<t}'$ is the space projector for $\{\mathbf{Z}_{i;i}^{m_1}\}_{i<t}$. }
Note that, the guarantee for stability and plasticity can be easily extended to any step, as we only modify the projectors that are able to project the features onto the space that covers all previous datasets. 

\noindent\textbf{Extension for any modality pairs $\{m, m'\}_{< t}$.} In particular, on CMCL, diverse modality pairs are involved to optimize corresponding encoders. In this case, we can simply extend the previous gradient update on $\{m_1, m_2\}$ in step $t-1$ and $t$ to any different 
adjacent modality pairs. Suppose training on modality pairs of $\{m_1, m_2\}$ at step $t-1$ and $\{m_1, m_3\}$ at step $t$. If $m_3:=m_2$, it follows Theorem~\ref{thm:gradient_update}. If $m_3\ne m_2$, we set $\nabla\mathbf{W}_{t}^{m_2} = \mathbf{0}$. Furthermore, the corresponding feature projector update can be omitted as well to save computational cost. Therefore, only features at relevant steps are required to be updated rather than considering all steps. The algorithm flow and pseudo-code of training and inference procedures of our method are provided in Appendix~\ref{sec:training_inference}.

\section{Experiments}
\subsection{Experimental Setups}
\label{sec:experimental_setups}
\textbf{Tasks \& datasets.}
We evaluate on 7 multimodal datasets, 
including 
{UCF101}~\cite{soomro2012ucf101},
{ESC50}~\cite{piczak2015esc},
{NYUDv2}~\cite{SilbermanNYUDv2},
{VGGSound-S}~\cite{chen2020vggsound}\footnote{We downsample the original VGGSound dataset into a small version, denoted as VGGSound-S.},
{Clotho}~\cite{drossos2020clotho},
{TVL}~\cite{fu2024touch}, and
{LLVIP}~\cite{jia2021llvip}.
For performance comparison, we categorize the evaluation tasks as retrieval and classification. The important statistics are provided in Table~\ref{tab:datasets_train}. More details are provided in Appendix~\ref{sec:datasets}.
For datasets with more than two modalities, we split them into bi-modal ones to cater to the modality pair training. For instance, VGGSound-S will be reorganized into three versions (VI-A, VI-T, and T-A). As a result, 11 training steps are involved in our training and evaluation.

\begin{table}[!t]
\centering
\caption{The statistics of datasets for modality pair training, which involve modalities of V$\leftrightarrow$Vision, A$\leftrightarrow$Audio, T$\leftrightarrow$Text, VI$\leftrightarrow$VIdeo, TH$\leftrightarrow$THermal, TA$\leftrightarrow$TActile, and D$\leftrightarrow$Depth. }
\label{tab:datasets_train}
\resizebox{\textwidth}{!}{
\begin{tabular}{l|ccccccc|l|cc|l|c|c} 
\toprule
\multirow{2}{*}{\textbf{ Dataset}} & \multicolumn{7}{c|}{\textbf{Modalities }}           & \multirow{2}{*}{\textbf{ Evaluation}} & \multicolumn{2}{c|}{\textbf{Tasks }} & \multirow{2}{*}{\textbf{ Modes}} & \multirow{2}{*}{\textbf{ \#Examples}} & \multirow{2}{*}{\textbf{\#Classes}}  \\
                                   & V      & A      & T      & VI     & TH & TA     & D      &                                       & Retrieval & Classification           &                                  &                                      &                             \\ 
\midrule
 UCF101     & -      & -      & \ding{51} & \ding{51} & -      & -      & -      & Acc         & -           & \ding{51}           & Train, Test & 8,080       & 101        \\
 ESC50      & -      & \ding{51} & \ding{51} & -      & -      & -      & -      & Recall, Acc & \ding{51}      & \ding{51}           & Train, Test & 2,000       & 50         \\
 NYUDv2     & \ding{51} & -      & - & -      & -      & -      & \ding{51} & Recall      & \ding{51}      & -                & Train, Test & 48,238      & -          \\
 VGGSound-S & -      & \ding{51} & \ding{51} & \ding{51} & -      & -      & -      & Recall, Acc & \ding{51}      & \ding{51}           & Train, Test & 12,000      & 309        \\
 Clotho     & -      & \ding{51} & \ding{51} & -      & -      & -      & -      & Recall      & \ding{51}      & -                & Train, Test & 4,885       & -          \\
 TVL        & \ding{51} & -      & \ding{51} & -      & -      & \ding{51} & -      & Recall      & \ding{51}      & -                & Train, Test & 43,741      & -          \\
 LLVIP      & \ding{51} & -      & -      & -      & \ding{51} & -      & -      & Recall      & \ding{51}      & -                & Train, Test & 15,488      & -          \\
\bottomrule
\end{tabular}
}
\vspace{-5mm}
\end{table}

\textbf{Implementation details.}
We utilize pre-trained modality-binding methods as priors, followed by linear learners for each modality for training. 
Specifically, we employ \text{ImageBind}~\cite{girdhar2023imagebind}, LanguageBind~\cite{zhu2023languagebind}, and UniBind~\cite{lyu2024unibind}, which are already trained with a sequence of modality pairs and are capable of generating multimodal representations in a joint space. 
Upon these powerful priors, we construct the projection module that projects the novel knowledge onto the dual-sided null space of the previous one within the trainable parameters. We utilize PyTorch~\cite{paszke2019pytorch} for implementation. 
The model is optimized via an AdamW optimizer~\cite{loshchilovdecoupled} with a learning rate of 0.0001 and weight decay of 0.001. The batch size is set to 64. We approximate the null space projection with truncated SVD, where a minimum eigenvalue $\lambda_{\min}$ is set to 0.01 for ImageBind and Unibind, while 0.0001 for LanguageBind. See more implementation details in Appendix~\ref{sec:modality-binding},~\ref{sec:metrics}, and~\ref{sec:experimental_settings}.

\textbf{Baselines.}
We consider various methods from continual learning fields. 
Specifically, we select 
\textit{Vanilla} (continual training without any strategy)~\cite{girdhar2023imagebind},
Gradient Episodic Memory (\textit{GEM})~\cite{lopez2017gradient},
Dark Experience Replay (\textit{DER} \& \textit{DER++})~\cite{buzzega2020dark},
Elastic Weight Consolidation (\textit{EWC})~\cite{kirkpatrick2017overcoming},
Contrastive Continual Learning (\textit{Co$^2$L})~\cite{cha2021co2l},
\textit{C-Flat}~\cite{bian2024make}, 
and Contrastive Incremental Learning with
Adaptive distillation (\textit{CILA})~\cite{wen2024provable}.
Notably, our method is replay-free, which maintains efficiency. The feature covariances are updated only at the end of each training step.
We adapt these methods for the CMCL setting. More details of the baselines are provided in Appendix~\ref{sec:baselines} and~\ref{sec:experimental_settings}.

\textbf{Evaluation metrics.} 
We consider the evaluation metrics from both plasticity (learnability) and stability (anti-forgetting) perspectives. 
For plasticity, we employ \textit{Recall@$k$} for cross-modal retrieval tasks, where $k$ is set in the range of $\{1, 5, 10\}$. Besides, we use average accuracy (\textit{Acc}) for classification tasks. 
For stability, we employ backward transfer (\textit{BWT}), which computes the average drop of performance at the previous step after learning the current dataset.
In this case, higher appearance scores of these metrics indicate higher consistency and determinism. More details of the metrics can be checked in Appendix~\ref{sec:metrics}.

\begin{table}[!t]
\centering
\caption{Average performance~(mean$\pm$std.) on classification and retrieval tasks. Here ``R@$k$'' is the abbreviation of ``Recall@$k$''. The results are calculated over ten independent trials. The best result in each case is marked as bold.}
\label{tab:overall}
\resizebox{\textwidth}{!}{
\setlength{\tabcolsep}{3mm}
\begin{tabular}{ll|cc|cccc} 
\toprule
\multicolumn{2}{l|}{\multirow{2}{*}{{ Method}}} & \multicolumn{2}{c|}{\textit{Classification }} & \multicolumn{4}{c}{\textit{ Retrieval }}                                                    \\
\multicolumn{2}{l|}{}                                  & Acc & BWT$_\text{A}$ & R@1 & R@5 & R@10 & BWT$_\text{R10}$  \\ 
\midrule
\multirow{10}{*}{\rotatebox{90}{ImageBind}} & {Vanilla}& 47.32$\ {\pm 0.27}$          & -5.72$\ {\pm 0.44}$          & 9.93$\ {\pm 0.09}$           & 27.86$\ {\pm 0.18}$          & 38.56$\ {\pm 0.14}$          & -3.34$\ {\pm 0.09}$ \\
\cmidrule{2-8}
                  & {GEM} & 40.98$\ {\pm 1.65}$          & -9.75$\ {\pm 1.64}$          & 9.53$\ {\pm 0.14}$           & 27.00$\ {\pm 0.28}$          & 37.38$\ {\pm 0.31}$          & -2.91$\ {\pm 0.26}$ \\
                  & {DER} & 47.34$\ {\pm 0.41}$          & -5.70$\ {\pm 0.54}$          & 9.91$\ {\pm 0.08}$           & 27.88$\ {\pm 0.18}$          & 38.55$\ {\pm 0.20}$          & -3.32$\ {\pm 0.08}$ \\\
                  & {DER++}  & 47.40$\ {\pm 0.15}$          & -4.71$\ {\pm 0.48}$         & 9.94$\ {\pm 0.12}$           & 27.95$\ {\pm 0.18}$          & 38.72$\ {\pm 0.13}$  & -3.09$\ {\pm 0.11}$  \\
                  & {EWC}  & 48.34$\ {\pm 1.53}$          & -4.82$\ {\pm 1.18}$          & 10.07$\ {\pm 0.24}$          & 28.35$\ {\pm 0.55}$          & 39.22$\ {\pm 0.69}$          & -2.55$\ {\pm 0.76}$   \\
                  & Co$^2$L  & 50.13$\ {\pm 0.52}$          & -3.74$\ {\pm 0.51}$          & 9.79$\ {\pm 0.04}$           & 27.99$\ {\pm 0.21}$          & 38.66$\ {\pm 0.23}$          & -1.77$\ {\pm 0.19}$ \\
                  & {C-FLAT}  & 50.96$\ {\pm 0.34}$          & -4.17$\ {\pm 0.48}$          & 9.92$\ {\pm 0.09}$           & 28.90$\ {\pm 0.14}$          & 39.97$\ {\pm 0.19}$          & -1.41$\ {\pm 0.19}$       \\
                  & {CILA} & 50.73$\ {\pm 0.48}$          & -3.20$\ {\pm 0.74}$          & 9.68$\ {\pm 0.11}$           & 27.76$\ {\pm 0.20}$          & 38.25$\ {\pm 0.20}$          & -1.28$\ {\pm 0.21}$  \\
                  \cmidrule{2-8} 
                  & {DNS (ours)}    & \textbf{52.52}$\ {\pm 0.23}$ & \textbf{-0.02}$\ {\pm 0.30}$ & \textbf{10.38}$\ {\pm 0.09}$ & \textbf{29.53}$\ {\pm 0.17}$ & \textbf{40.89}$\ {\pm 0.10}$ & \textbf{-1.07}$\ {\pm 0.12}$\\
\bottomrule
\\
\toprule
\multirow{10}{*}{\rotatebox{90}{LanguageBind}} & {Vanilla} & 51.86$\ {\pm 1.13}$          & -15.71$\ {\pm 1.22}$         & 7.51$\ {\pm 0.10}$          & 25.36$\ {\pm 0.32}$          & 36.02$\ {\pm 0.49}$          & -10.19$\ {\pm 0.53}$     \\
\cmidrule{2-8}
                  & {GEM} & 60.15$\ {\pm 0.63}$          & -4.98$\ {\pm 0.52}$          & 8.32$\ {\pm 0.13}$          & 28.92$\ {\pm 0.14}$          & 41.24$\ {\pm 0.22}$          & -4.12$\ {\pm 0.18}$ \\
                  & {DER}  & 52.55$\ {\pm 1.08}$          & -14.96$\ {\pm 0.88}$         & 7.55$\ {\pm 0.16}$          & 25.47$\ {\pm 0.36}$          & 36.08$\ {\pm 0.46}$          & -10.11$\ {\pm 0.46}$        \\
                  & {DER++} & 55.86$\ {\pm 1.64}$          & -10.45$\ {\pm 1.58}$         & 7.33$\ {\pm 0.20}$          & 25.66$\ {\pm 0.53}$          & 36.50$\ {\pm 0.61}$          & -9.58$\ {\pm 0.62}$   \\
                  & {EWC}   & 54.03$\ {\pm 1.94}$          & -13.45$\ {\pm 2.34}$         & 8.00$\ {\pm 0.26}$          & 27.11$\ {\pm 0.92}$          & 38.14$\ {\pm 1.26}$          & -7.71$\ {\pm 1.44}$    \\
                  & {Co$^2$L}  & 58.08$\ {\pm 0.64}$          & -5.79$\ {\pm 1.22}$          & 8.33$\ {\pm 0.12}$          & 28.33$\ {\pm 0.24}$          & 40.15$\ {\pm 0.31}$          & -3.92$\ {\pm 0.37}$         \\
                  & {C-FLAT}  & 57.84$\ {\pm 0.75}$          & -7.57$\ {\pm 0.92}$          & 8.23$\ {\pm 0.10}$          & 28.02$\ {\pm 0.17}$          & 39.72$\ {\pm 0.25}$          & -5.00$\ {\pm 0.32}$         \\
                  & {CILA}    & 59.30$\ {\pm 0.62}$          & -2.63$\ {\pm 0.91}$          & 8.32$\ {\pm 0.14}$          & 28.31$\ {\pm 0.26}$          & 40.48$\ {\pm 0.27}$          & \textbf{-1.09}$\ {\pm 0.40}$   \\ 
                  \cmidrule{2-8}
                  & {DNS (ours)} & \textbf{64.07}$\ {\pm 0.37}$ & \textbf{-0.09}$\ {\pm 0.59}$ & \textbf{8.71}$\ {\pm 0.06}$ & \textbf{29.81}$\ {\pm 0.23}$ & \textbf{42.44}$\ {\pm 0.20}$ & {-3.00}$\ {\pm 0.21}$\\
\bottomrule
\\
\toprule
\multirow{10}{*}{\rotatebox{90}{UniBind}} & {Vanilla}  & 47.48$\ {\pm 0.56}$ & -5.46$\ {\pm 0.61}$  & 10.09$\ {\pm 0.10}$ & 28.16$\ {\pm 0.26}$ & 38.93$\ {\pm 0.24}$ & -3.52$\ {\pm 0.20}$\\
\cmidrule{2-8}
                  & {GEM} & 41.37$\ {\pm 1.52}$\ & -9.29$\ {\pm 1.20}$  & 9.61$\ {\pm 0.14}$  & 27.32$\ {\pm 0.31}$ & 37.94$\ {\pm 0.34}$ & -2.97$\ {\pm 0.39}$\\
                  & {DER}  & 47.64$\ {\pm 0.57}$ & -5.35$\ {\pm 0.65}$  & 10.10$\ {\pm 0.09}$ & 28.16$\ {\pm 0.25}$ & 38.95$\ {\pm 0.26}$ & -3.51$\ {\pm 0.20}$\\
                  & {DER++}& 47.75$\ {\pm 0.23}$          & -4.69$\ {\pm 0.71}$         & 10.06$\ {\pm 0.08}$          & 28.30$\ {\pm 0.21}$          & 39.10$\ {\pm 0.15}$          & -3.39$\ {\pm 0.12}$ \\
                  & {EWC} & 48.21$\ {\pm 1.48}$ & -5.06$\ {\pm 1.19}$  & 10.24$\ {\pm 0.25}$ & 28.60$\ {\pm 0.54}$ & 39.47$\ {\pm 0.66}$ & -2.89$\ {\pm 0.67}$ \\
                  & {Co$^2$L}& 50.45$\ {\pm 0.59}$ & -3.47$\ {\pm 0.92}$  & 9.87$\ {\pm 0.08}$  & 28.30$\ {\pm 0.16}$ & 39.21$\ {\pm 0.20}$ & -1.87$\ {\pm 0.21}$\\
                  & {C-FLAT}  & 51.25$\ {\pm 0.42}$ & -4.36$\ {\pm 0.49}$  & 10.00$\ {\pm 0.06}$ & 29.19$\ {\pm 0.17}$ & 40.51$\ {\pm 0.12}$ & -1.48$\ {\pm 0.18}$ \\ 
                  & {CILA}  & 51.03$\ {\pm 0.41}$ & -2.93$\ {\pm 0.75}$  & 9.76$\ {\pm 0.08}$  & 28.07$\ {\pm 0.14}$ & 38.80$\ {\pm 0.17}$ & -1.40$\ {\pm 0.20}$ \\ 
                  \cmidrule{2-8}
                  & {DNS (ours)} & \textbf{52.86}$\ {\pm 0.29}$ & \textbf{0.31}$\ {\pm 0.46}$   & \textbf{10.52}$\ {\pm 0.06}$ & \textbf{29.97}$\ {\pm 0.10}$ & \textbf{41.44}$\ {\pm 0.11}$ & \textbf{-1.19}$\ {\pm 0.12}$ \\
\bottomrule
\end{tabular}
}
\vspace{-5mm}
\end{table}

\subsection{Results}
\label{sec:results}
\textbf{Overall evaluation.} 
We compare our method (DNS) with selected baselines on both classification and retrieval tasks with results being reported in Table~\ref{tab:overall}. This result demonstrates that DNS outperforms all baselines on two types of performance metrics (\textit{i.e.}, Acc and Recall). Particularly in the classification task, DNS (ours) consistently achieves the highest accuracy and exhibits minimal or even positive BWT$_\text{A}$, whereas other methods generally suffer from higher negative transfer.
In the retrieval task, DNS outperforms other methods in all Recall@$k$ (R@1, R@5, and R@10) metrics, and it also demonstrates good backward transfer ability. These results highlight the effectiveness of DNS in mitigating catastrophic forgetting (stability) and achieving superior performance (plasticity) across both classification and retrieval tasks.

\textbf{Stability analyses.}
We calculate the deviations between alignment scores to provide practical support for the theoretical guarantee of stability in Theorem~\ref{thm:upper_bound_stability}.
At each step, we average the scores of modality pairs, which will be compared at subsequent steps, with the results shown in Figure~\ref{fig:analyses} (a). We can observe that the deviations are small and even reach nearly zero at some steps, further sustaining the stability maintaining of DNS.

\textbf{Plasticity analyses.} 
We provide empirical support corresponding to the theoretical insights on plasticity in Theorem~\ref{thm:lower_bound_consisteny}. Here we compute the high-order term and verify whether it satisfies $\frac{o(\eta)}{\eta}\leq 0$. We report the results in Figure~\ref{fig:analyses} (b) and have the following observations. 
At each step $t>1$, the averaged $o(\eta)$ is negative and close to 0, which confirms our claim in Theorem~\ref{thm:lower_bound_consisteny} in a practical scenario.
Furthermore, the training loss converging supports the plasticity as well, in Section~\ref{sec:more_analyses}.

\begin{figure}
    \centering
    \includegraphics[width=0.99\linewidth]{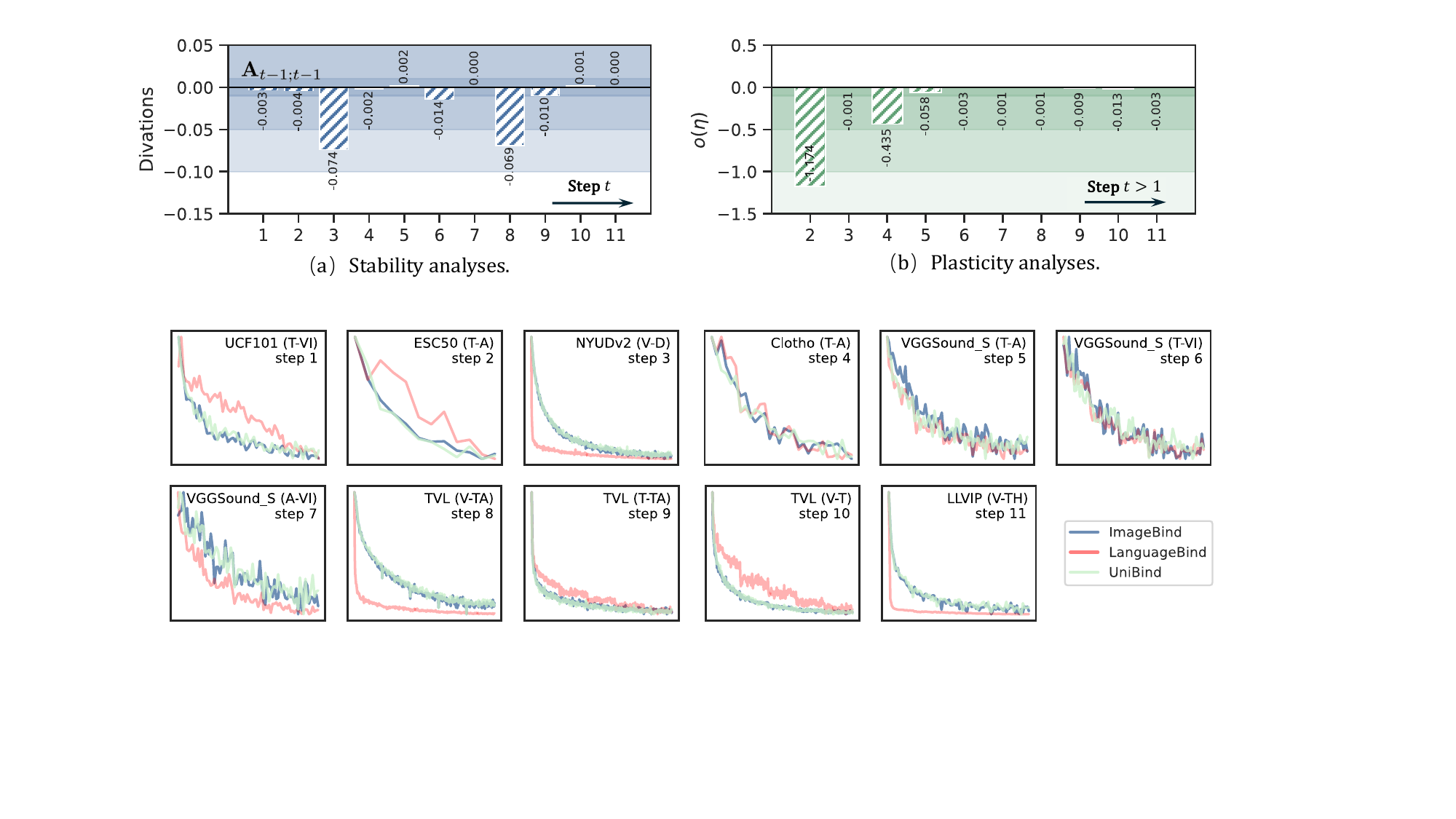}
    \vspace{-2mm}
    \caption{The illustrations on stability and plasticity: the left (a) showcases deviations in the alignment score from $\mathbf{A}_{t-1;t-1}$; the right (b) represents the average of high-order terms for step $t>1$. }
    \vspace{-3mm}
    \label{fig:analyses}
\end{figure}

\begin{figure}
    \centering
    \includegraphics[width=0.99\linewidth]{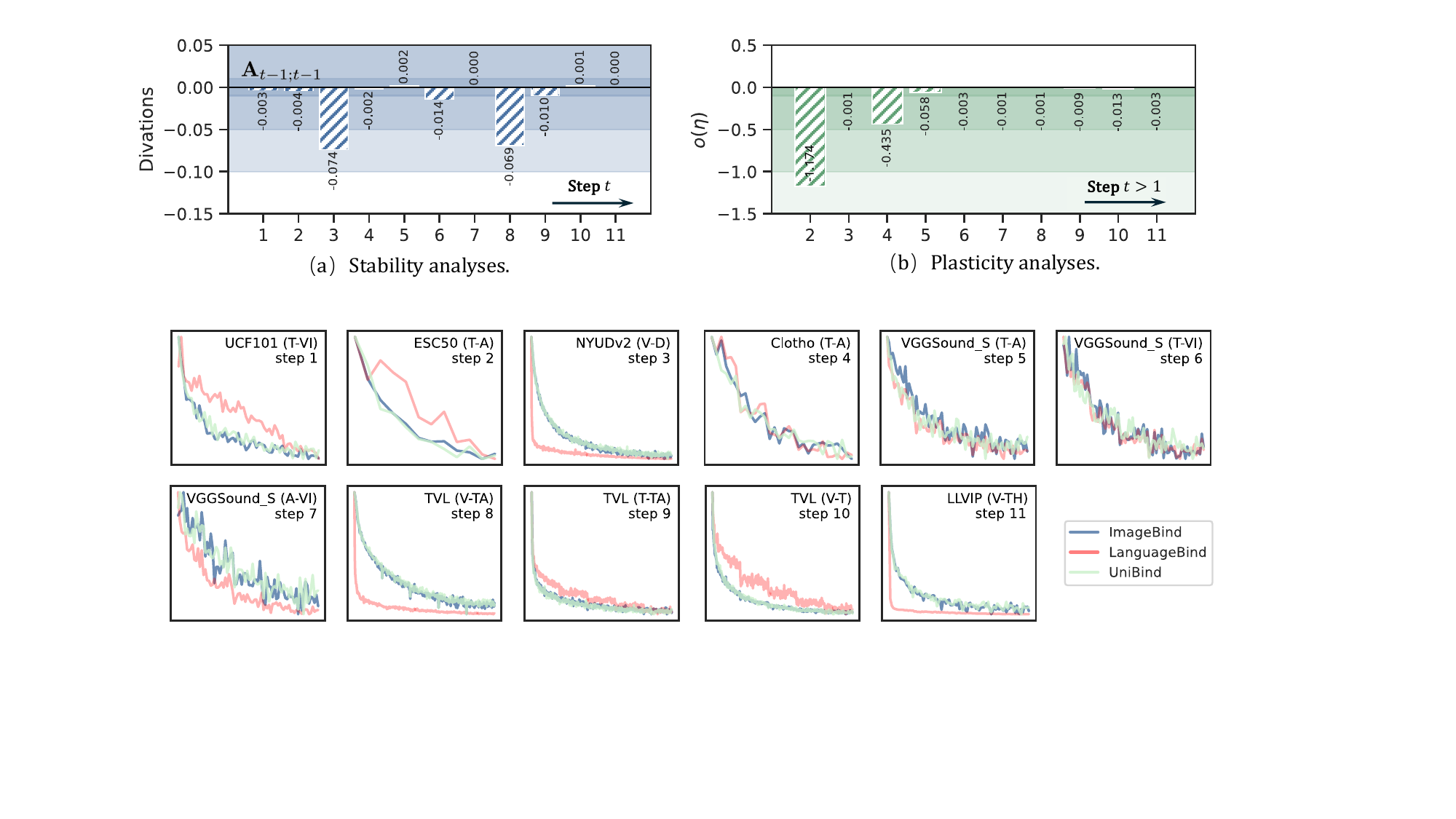}
    \vspace{-2mm}
    \caption{The curves plot batch-wise training losses \textit{w.r.t.} different modality-binding methods. }
    \vspace{-3mm}
    \label{fig:loss}
\end{figure}

\subsection{More Analyses and Justifications}
\label{sec:more_analyses}

\textbf{Training loss curves.} We evaluate the stability and plasticity through training losses over batches. The batch-wise training loss illustrates the learning status at the own step (plasticity). The results are showcased in Figure~\ref{fig:loss}. From the observation, our method can achieve a stable training loss even after the model is optimized by new datasets, and the batch-wise loss also successfully converges. These findings demonstrate that DNS reaches a balance between stability and plasticity. 
We provide more results on the step-wise training loss and performance changes on the data at the first step shown in Figures~\ref{fig:hyerparameter_imagebind}, \ref{fig:hyerparameter_languagebind}, and \ref{fig:hyerparameter_unibind}, under different hyperparameters settings (\textit{cf.}, Appendix~\ref{sec:additional_results}).  We also report the detailed variations at each step in Figures~\ref{fig:steps_imagebind}, \ref{fig:steps_languagebind}, and \ref{fig:steps_unibind} (\textit{cf.}, Appendix~\ref{sec:additional_results}). The performance of DNS under different scenarios is reported in the Appendix (\textit{e.g.}, imbalanced modality~\ref{sec:imbalance}, misaligned pairs~\ref{sec:misalign}, and noisy data~\ref{sec:misalign}). 

\begin{wrapfigure}{r}{0.45\textwidth}
    \begin{center}
    \vspace{-5mm}
    \includegraphics[width=0.45\textwidth]{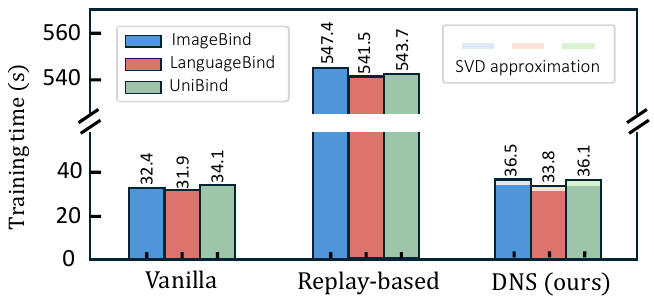}
  \end{center}
  \caption{The time recording (s) for vanilla, replay (DER), and DNS training for one step.}
  \label{fig:efficiency}
  \vspace{-5mm}
\end{wrapfigure}

\textbf{Efficiency analyses.} DNS maintains efficiency for CMCL, which differs from replay-based methods that require additional buffer memory and further introduce more computational costs. Here we evaluate the efficiency of our methods by recording the time in training, as shown in Figure~\ref{fig:efficiency}.  The replay-based method requires more training time compared to both the vanilla method and DNS. DNS introduces a slight increase in training time due to the gradient projection and SVD approximation (less than 1s), while still preserving overall training efficiency.

\section{Conclusion}
In this paper, we introduce and formally define continual multimodal contrastive learning (CMCL), a research question on how to incrementally integrate multimodal data through contrastive learning. We specify the definitions of stability and plasticity in CMCL. Theoretically, we derive a novel method, termed DNS, which employs a dual-sided gradient-projection strategy to balance the incorporation of new modality pairs while preserving existing multimodal knowledge effectively.
The parameter gradients are projected from one side of its own modality and another side of the interacted modalities onto subspaces. Any newly-projected gradient within these subspaces makes no effect on the prior effective parameter spaces.
Furthermore, two upper bounds are provided with theoretical insights on both stability and plasticity, and support the effectiveness of our gradient-projection method. Empirical results across seven datasets demonstrate our method's superiority over current advanced continual learning baselines, which confirm its robustness and efficiency.

\section*{Acknowledgements}
This research/project is supported by the National Research
Foundation, Singapore under its National Large Language Models Funding Initiative (AISG Award No: AISG-NMLP2024-002). Any opinions, findings and conclusions or recommendations expressed in this material are those of the author(s) and do not reflect the views of National Research Foundation, Singapore.

Xiaobo Xia is partially supported by MoE Key Laboratory of Brain-inspired Intelligent Perception and Cognition, University of Science and Technology of China (Grant No. 2421002).

\bibliography{reference}

\begin{thebibliography}{10}

\bibitem{baltruvsaitis2018multimodal}
Tadas Baltru{\v{s}}aitis, Chaitanya Ahuja, and Louis-Philippe Morency.
\newblock Multimodal machine learning: A survey and taxonomy.
\newblock {\em IEEE Transactions on Pattern Analysis and Machine Intelligence}, 2018.

\bibitem{yu2024recent}
Dianzhi Yu, Xinni Zhang, Yankai Chen, Aiwei Liu, Yifei Zhang, Philip~S Yu, and Irwin King.
\newblock Recent advances of multimodal continual learning: A comprehensive survey.
\newblock {\em arXiv preprint arXiv:2410.05352}, 2024.

\bibitem{xu2023multimodal}
Peng Xu, Xiatian Zhu, and David~A Clifton.
\newblock Multimodal learning with transformers: A survey.
\newblock {\em IEEE Transactions on Pattern Analysis and Machine Intelligence}, 45(10):12113--12132, 2023.

\bibitem{liu2024towards}
Xiaohao Liu, Xiaobo Xia, Zhuo Huang, and Tat-Seng Chua.
\newblock Towards modality generalization: A benchmark and prospective analysis.
\newblock {\em arXiv preprint arXiv:2412.18277}, 2024.

\bibitem{liu2025principled}
Xiaohao Liu, Xiaobo Xia, See-Kiong Ng, and Tat-Seng Chua.
\newblock Principled multimodal representation learning.
\newblock {\em arXiv preprint arXiv:2507.17343}, 2025.

\bibitem{luo2024mmevol}
Run Luo, Haonan Zhang, Longze Chen, Ting-En Lin, Xiong Liu, Yuchuan Wu, Min Yang, Minzheng Wang, Pengpeng Zeng, Lianli Gao, et~al.
\newblock Mmevol: Empowering multimodal large language models with evol-instruct.
\newblock {\em arXiv preprint arXiv:2409.05840}, 2024.

\bibitem{luo2024deem}
Run Luo, Yunshui Li, Longze Chen, Wanwei He, Ting-En Lin, Ziqiang Liu, Lei Zhang, Zikai Song, Xiaobo Xia, Tongliang Liu, et~al.
\newblock Deem: Diffusion models serve as the eyes of large language models for image perception.
\newblock {\em arXiv preprint arXiv:2405.15232}, 2024.

\bibitem{chen2020simple}
Ting Chen, Simon Kornblith, Mohammad Norouzi, and Geoffrey Hinton.
\newblock A simple framework for contrastive learning of visual representations.
\newblock In {\em ICML}, pages 1597--1607, 2020.

\bibitem{he2020momentum}
Kaiming He, Haoqi Fan, Yuxin Wu, Saining Xie, and Ross Girshick.
\newblock Momentum contrast for unsupervised visual representation learning.
\newblock In {\em CVPR}, pages 9729--9738, 2020.

\bibitem{chen2020improved}
Xinlei Chen, Haoqi Fan, Ross Girshick, and Kaiming He.
\newblock Improved baselines with momentum contrastive learning.
\newblock {\em arXiv preprint arXiv:2003.04297}, 2020.

\bibitem{zhang2021cross}
Han Zhang, Jing~Yu Koh, Jason Baldridge, Honglak Lee, and Yinfei Yang.
\newblock Cross-modal contrastive learning for text-to-image generation.
\newblock In {\em CVPR}, pages 833--842, 2021.

\bibitem{radford2021learning}
Alec Radford, Jong~Wook Kim, Chris Hallacy, Aditya Ramesh, Gabriel Goh, Sandhini Agarwal, Girish Sastry, Amanda Askell, Pamela Mishkin, Jack Clark, et~al.
\newblock Learning transferable visual models from natural language supervision.
\newblock In {\em ICML}, pages 8748--8763, 2021.

\bibitem{girdhar2023imagebind}
Rohit Girdhar, Alaaeldin El-Nouby, Zhuang Liu, Mannat Singh, Kalyan~Vasudev Alwala, Armand Joulin, and Ishan Misra.
\newblock Imagebind: One embedding space to bind them all.
\newblock In {\em CVPR}, pages 15180--15190, 2023.

\bibitem{huang2023language}
Shaohan Huang, Li~Dong, Wenhui Wang, Yaru Hao, Saksham Singhal, Shuming Ma, Tengchao Lv, Lei Cui, Owais~Khan Mohammed, Barun Patra, et~al.
\newblock Language is not all you need: Aligning perception with language models.
\newblock In {\em NeurIPS}, pages 72096--72109, 2023.

\bibitem{lyu2024unibind}
Yuanhuiyi Lyu, Xu~Zheng, Jiazhou Zhou, and Lin Wang.
\newblock Unibind: Llm-augmented unified and balanced representation space to bind them all.
\newblock In {\em CVPR}, pages 26752--26762, 2024.

\bibitem{wang2024freebind}
Zehan Wang, Ziang Zhang, Xize Cheng, Rongjie Huang, Luping Liu, Zhenhui Ye, Haifeng Huang, Yang Zhao, Tao Jin, Peng Gao, et~al.
\newblock Freebind: Free lunch in unified multimodal space via knowledge fusion.
\newblock {\em arXiv preprint arXiv:2405.04883}, 2024.

\bibitem{zhu2023languagebind}
Bin Zhu, Bin Lin, Munan Ning, Yang Yan, Jiaxi Cui, HongFa Wang, Yatian Pang, Wenhao Jiang, Junwu Zhang, Zongwei Li, et~al.
\newblock Languagebind: Extending video-language pretraining to n-modality by language-based semantic alignment.
\newblock {\em arXiv preprint arXiv:2310.01852}, 2023.

\bibitem{wang2021training}
Shipeng Wang, Xiaorong Li, Jian Sun, and Zongben Xu.
\newblock Training networks in null space of feature covariance for continual learning.
\newblock In {\em {CVPR}}, pages 184--193, 2021.

\bibitem{cha2021co2l}
Hyuntak Cha, Jaeho Lee, and Jinwoo Shin.
\newblock Co2l: Contrastive continual learning.
\newblock In {\em ICCV}, pages 9516--9525, 2021.

\bibitem{wen2024provable}
Yichen Wen, Zhiquan Tan, Kaipeng Zheng, Chuanlong Xie, and Weiran Huang.
\newblock Provable contrastive continual learning.
\newblock In {\em ICML}, pages 52819--52838, 2024.

\bibitem{wang2024training}
Shipeng Wang, Xiaorong Li, Jian Sun, and Zongben Xu.
\newblock Training networks in null space of feature covariance with self-supervision for incremental learning.
\newblock {\em IEEE Transactions on Pattern Analysis and Machine Intelligence}, 2024.

\bibitem{wu2024next}
Shengqiong Wu, Hao Fei, Leigang Qu, Wei Ji, and Tat-Seng Chua.
\newblock Next-gpt: Any-to-any multimodal llm.
\newblock In {\em ICML}, 2024.

\bibitem{han2023imagebind}
Jiaming Han, Renrui Zhang, Wenqi Shao, Peng Gao, Peng Xu, Han Xiao, Kaipeng Zhang, Chris Liu, Song Wen, Ziyu Guo, et~al.
\newblock Imagebind-llm: Multi-modality instruction tuning.
\newblock {\em arXiv preprint arXiv:2309.03905}, 2023.

\bibitem{zhang2024hierarchical}
Lei Zhang, Yunshui Li, Jiaming Li, Xiaobo Xia, Jiaxi Yang, Run Luo, Minzheng Wang, Longze Chen, Junhao Liu, and Min Yang.
\newblock Hierarchical context pruning: Optimizing real-world code completion with repository-level pretrained code llms.
\newblock {\em arXiv preprint arXiv:2406.18294}, 2024.

\bibitem{fang2024alphaedit}
Junfeng Fang, Houcheng Jiang, Kun Wang, Yunshan Ma, Shi Jie, Xiang Wang, Xiangnan He, and Tat-Seng Chua.
\newblock Alphaedit: Null-space constrained knowledge editing for language models.
\newblock {\em arXiv preprint arXiv:2410.02355}, 2024.

\bibitem{elizalde2023clap}
Benjamin Elizalde, Soham Deshmukh, Mahmoud Al~Ismail, and Huaming Wang.
\newblock Clap learning audio concepts from natural language supervision.
\newblock In {\em ICASSP}, pages 1--5, 2023.

\bibitem{xu2021videoclip}
Hu~Xu, Gargi Ghosh, Po-Yao Huang, Dmytro Okhonko, Armen Aghajanyan, Florian Metze, Luke Zettlemoyer, and Christoph Feichtenhofer.
\newblock Videoclip: Contrastive pre-training for zero-shot video-text understanding.
\newblock {\em arXiv preprint arXiv:2109.14084}, 2021.

\bibitem{jia2021scaling}
Chao Jia, Yinfei Yang, Ye~Xia, Yi-Ting Chen, Zarana Parekh, Hieu Pham, Quoc Le, Yun-Hsuan Sung, Zhen Li, and Tom Duerig.
\newblock Scaling up visual and vision-language representation learning with noisy text supervision.
\newblock In {\em ICML}, pages 4904--4916, 2021.

\bibitem{chen2022learning}
Shuo Chen, Chen Gong, Jun Li, Jian Yang, Gang Niu, and Masashi Sugiyama.
\newblock Learning contrastive embedding in low-dimensional space.
\newblock In {\em NeurIPS}, pages 6345--6357, 2022.

\bibitem{soomro2012ucf101}
Khurram Soomro, Amir~Roshan Zamir, and Mubarak Shah.
\newblock Ucf101: A dataset of 101 human actions classes from videos in the wild.
\newblock {\em arXiv preprint arXiv:1212.0402}, 2012.

\bibitem{piczak2015esc}
Karol~J Piczak.
\newblock Esc: Dataset for environmental sound classification.
\newblock In {\em ACM MM}, pages 1015--1018, 2015.

\bibitem{SilbermanNYUDv2}
Pushmeet~Kohli Nathan~Silberman, Derek~Hoiem and Rob Fergus.
\newblock Indoor segmentation and support inference from rgbd images.
\newblock In {\em ECCV}, 2012.

\bibitem{chen2020vggsound}
Honglie Chen, Weidi Xie, Andrea Vedaldi, and Andrew Zisserman.
\newblock Vggsound: A large-scale audio-visual dataset.
\newblock In {\em ICASSP}, pages 721--725, 2020.

\bibitem{drossos2020clotho}
Konstantinos Drossos, Samuel Lipping, and Tuomas Virtanen.
\newblock Clotho: An audio captioning dataset.
\newblock In {\em ICASSP}, pages 736--740, 2020.

\bibitem{fu2024touch}
Letian Fu, Gaurav Datta, Huang Huang, William Chung-Ho Panitch, Jaimyn Drake, Joseph Ortiz, Mustafa Mukadam, Mike Lambeta, Roberto Calandra, and Ken Goldberg.
\newblock A touch, vision, and language dataset for multimodal alignment.
\newblock {\em arXiv preprint arXiv:2402.13232}, 2024.

\bibitem{jia2021llvip}
Xinyu Jia, Chuang Zhu, Minzhen Li, Wenqi Tang, and Wenli Zhou.
\newblock Llvip: A visible-infrared paired dataset for low-light vision.
\newblock In {\em ICCV}, pages 3496--3504, 2021.

\bibitem{paszke2019pytorch}
Adam Paszke, Sam Gross, Francisco Massa, Adam Lerer, James Bradbury, Gregory Chanan, Trevor Killeen, Zeming Lin, Natalia Gimelshein, Luca Antiga, et~al.
\newblock Pytorch: An imperative style, high-performance deep learning library.
\newblock In {\em NeurIPS}, 2019.

\bibitem{loshchilovdecoupled}
Ilya Loshchilov and Frank Hutter.
\newblock Decoupled weight decay regularization.
\newblock In {\em ICLR}, 2019.

\bibitem{lopez2017gradient}
David Lopez-Paz and Marc'Aurelio Ranzato.
\newblock Gradient episodic memory for continual learning.
\newblock In {\em NeurIPS}, 2017.

\bibitem{buzzega2020dark}
Pietro Buzzega, Matteo Boschini, Angelo Porrello, Davide Abati, and Simone Calderara.
\newblock Dark experience for general continual learning: a strong, simple baseline.
\newblock In {\em NeurIPS}, pages 15920--15930, 2020.

\bibitem{kirkpatrick2017overcoming}
James Kirkpatrick, Razvan Pascanu, Neil Rabinowitz, Joel Veness, Guillaume Desjardins, Andrei~A Rusu, Kieran Milan, John Quan, Tiago Ramalho, Agnieszka Grabska-Barwinska, et~al.
\newblock Overcoming catastrophic forgetting in neural networks.
\newblock {\em Proceedings of the National Academy of Sciences}, 114(13):3521--3526, 2017.

\bibitem{bian2024make}
Ang Bian, Wei Li, Hangjie Yuan, Mang Wang, Zixiang Zhao, Aojun Lu, Pengliang Ji, Tao Feng, et~al.
\newblock Make continual learning stronger via c-flat.
\newblock In {\em NeurIPS}, pages 7608--7630, 2024.

\bibitem{icra_2019_fastdepth}
{Wofk, Diana and Ma, Fangchang and Yang, Tien-Ju and Karaman, Sertac and Sze, Vivienne}.
\newblock {FastDepth: Fast Monocular Depth Estimation on Embedded Systems}.
\newblock In {\em {ICRA}}, {2019}.

\bibitem{kerr2022self}
Justin Kerr, Huang Huang, Albert Wilcox, Ryan Hoque, Jeffrey Ichnowski, Roberto Calandra, and Ken Goldberg.
\newblock Self-supervised visuo-tactile pretraining to locate and follow garment features.
\newblock {\em arXiv preprint arXiv:2209.13042}, 2022.

\bibitem{achiam2023gpt}
Josh Achiam, Steven Adler, Sandhini Agarwal, Lama Ahmad, Ilge Akkaya, Florencia~Leoni Aleman, Diogo Almeida, Janko Altenschmidt, Sam Altman, Shyamal Anadkat, et~al.
\newblock Gpt-4 technical report.
\newblock {\em arXiv preprint arXiv:2303.08774}, 2023.

\bibitem{gemmeke2017audio}
Jort~F Gemmeke, Daniel~PW Ellis, Dylan Freedman, Aren Jansen, Wade Lawrence, R~Channing Moore, Manoj Plakal, and Marvin Ritter.
\newblock Audio set: An ontology and human-labeled dataset for audio events.
\newblock In {\em ICASSP}, pages 776--780, 2017.

\bibitem{song2015sun}
Shuran Song, Samuel~P Lichtenberg, and Jianxiong Xiao.
\newblock Sun rgb-d: A rgb-d scene understanding benchmark suite.
\newblock In {\em CVPR}, pages 567--576, 2015.

\bibitem{grauman2022ego4d}
Kristen Grauman, Andrew Westbury, Eugene Byrne, Zachary Chavis, Antonino Furnari, Rohit Girdhar, Jackson Hamburger, Hao Jiang, Miao Liu, Xingyu Liu, et~al.
\newblock Ego4d: Around the world in 3,000 hours of egocentric video.
\newblock In {\em CVPR}, pages 18995--19012, 2022.

\bibitem{abu2016youtube}
Sami Abu-El-Haija, Nisarg Kothari, Joonseok Lee, Paul Natsev, George Toderici, Balakrishnan Varadarajan, and Sudheendra Vijayanarasimhan.
\newblock Youtube-8m: A large-scale video classification benchmark.
\newblock {\em arXiv preprint arXiv:1609.08675}, 2016.

\bibitem{xu2016msr}
Jun Xu, Tao Mei, Ting Yao, and Yong Rui.
\newblock Msr-vtt: A large video description dataset for bridging video and language.
\newblock In {\em CVPR}, pages 5288--5296, 2016.

\bibitem{lin2014microsoft}
Tsung-Yi Lin, Michael Maire, Serge Belongie, James Hays, Pietro Perona, Deva Ramanan, Piotr Doll{\'a}r, and C~Lawrence Zitnick.
\newblock Microsoft coco: Common objects in context.
\newblock In {\em ECCV}, pages 740--755, 2014.

\bibitem{gu2018ava}
Chunhui Gu, Chen Sun, David~A Ross, Carl Vondrick, Caroline Pantofaru, Yeqing Li, Sudheendra Vijayanarasimhan, George Toderici, Susanna Ricco, Rahul Sukthankar, et~al.
\newblock Ava: A video dataset of spatio-temporally localized atomic visual actions.
\newblock In {\em CVPR}, pages 6047--6056, 2018.

\bibitem{kuehne2011hmdb}
Hildegard Kuehne, Hueihan Jhuang, Est{\'\i}baliz Garrote, Tomaso Poggio, and Thomas Serre.
\newblock Hmdb: a large video database for human motion recognition.
\newblock In {\em ICCV}, pages 2556--2563, 2011.

\bibitem{deng2009imagenet}
Jia Deng, Wei Dong, Richard Socher, Li-Jia Li, Kai Li, and Li~Fei-Fei.
\newblock Imagenet: A large-scale hierarchical image database.
\newblock In {\em CVPR}, pages 248--255, 2009.

\bibitem{touvron2023llama}
Hugo Touvron, Thibaut Lavril, Gautier Izacard, Xavier Martinet, Marie-Anne Lachaux, Timoth{\'e}e Lacroix, Baptiste Rozi{\`e}re, Naman Goyal, Eric Hambro, Faisal Azhar, et~al.
\newblock Llama: Open and efficient foundation language models.
\newblock {\em arXiv preprint arXiv:2302.13971}, 2023.

\bibitem{li2023blip}
Junnan Li, Dongxu Li, Silvio Savarese, and Steven Hoi.
\newblock Blip-2: Bootstrapping language-image pre-training with frozen image encoders and large language models.
\newblock In {\em ICML}, pages 19730--19742, 2023.

\bibitem{zhang2023llama}
Renrui Zhang, Jiaming Han, Chris Liu, Peng Gao, Aojun Zhou, Xiangfei Hu, Shilin Yan, Pan Lu, Hongsheng Li, and Yu~Qiao.
\newblock Llama-adapter: Efficient fine-tuning of language models with zero-init attention.
\newblock {\em arXiv preprint arXiv:2303.16199}, 2023.

\bibitem{penrose1955generalized}
Roger Penrose.
\newblock A generalized inverse for matrices.
\newblock In {\em Mathematical proceedings of the Cambridge philosophical society}, volume~51, pages 406--413, 1955.

\bibitem{baksalary1979matrix}
JK~Baksalary and R~Kala.
\newblock The matrix equation ax- yb= c.
\newblock {\em Linear Algebra and its Applications}, 25:41--43, 1979.

\bibitem{zolfaghari2021crossclr}
Mohammadreza Zolfaghari, Yi~Zhu, Peter Gehler, and Thomas Brox.
\newblock Crossclr: Cross-modal contrastive learning for multi-modal video representations.
\newblock In {\em ICCV}, pages 1450--1459, 2021.

\bibitem{zhou2024few}
Yiwei Zhou, Xiaobo Xia, Zhiwei Lin, Bo~Han, and Tongliang Liu.
\newblock Few-shot adversarial prompt learning on vision-language models.
\newblock In {\em NeurIPS}, pages 3122--3156, 2024.

\bibitem{he2025llm2rec}
Yingzhi He, Xiaohao Liu, An~Zhang, Yunshan Ma, and Tat-Seng Chua.
\newblock Llm2rec: Large language models are powerful embedding models for sequential recommendation.
\newblock In {\em Proceedings of the 31st ACM SIGKDD Conference on Knowledge Discovery and Data Mining V. 2}, pages 896--907, 2025.

\bibitem{li2022selective}
Shikun Li, Xiaobo Xia, Shiming Ge, and Tongliang Liu.
\newblock Selective-supervised contrastive learning with noisy labels.
\newblock In {\em CVPR}, pages 316--325, 2022.

\bibitem{guzhov2022audioclip}
Andrey Guzhov, Federico Raue, J{\"o}rn Hees, and Andreas Dengel.
\newblock Audioclip: Extending clip to image, text and audio.
\newblock In {\em ICASSP}, pages 976--980, 2022.

\bibitem{wu2022wav2clip}
Ho-Hsiang Wu, Prem Seetharaman, Kundan Kumar, and Juan~Pablo Bello.
\newblock Wav2clip: Learning robust audio representations from clip.
\newblock In {\em ICASSP}, pages 4563--4567, 2022.

\bibitem{guo2023point}
Ziyu Guo, Renrui Zhang, Xiangyang Zhu, Yiwen Tang, Xianzheng Ma, Jiaming Han, Kexin Chen, Peng Gao, Xianzhi Li, Hongsheng Li, et~al.
\newblock Point-bind \& point-llm: Aligning point cloud with multi-modality for 3d understanding, generation, and instruction following.
\newblock {\em arXiv preprint arXiv:2309.00615}, 2023.

\bibitem{wang2024omnibind}
Zehan Wang, Ziang Zhang, Hang Zhang, Luping Liu, Rongjie Huang, Xize Cheng, Hengshuang Zhao, and Zhou Zhao.
\newblock Omnibind: Large-scale omni multimodal representation via binding spaces.
\newblock {\em arXiv preprint arXiv:2407.11895}, 2024.

\bibitem{barbany2024leveraging}
Oriol Barbany, Michael Huang, Xinliang Zhu, and Arnab Dhua.
\newblock Leveraging large language models for multimodal search.
\newblock In {\em CVPR}, pages 1201--1210, 2024.

\bibitem{linuniversal}
Sheng-Chieh Lin, Chankyu Lee, Mohammad Shoeybi, Jimmy Lin, Bryan Catanzaro, and Wei Ping.
\newblock Universal multimodal retrieval with multimodal llms.
\newblock In {\em ICLR}, 2025.

\bibitem{kreuk2022audiogen}
Felix Kreuk, Gabriel Synnaeve, Adam Polyak, Uriel Singer, Alexandre D{\'e}fossez, Jade Copet, Devi Parikh, Yaniv Taigman, and Yossi Adi.
\newblock Audiogen: Textually guided audio generation.
\newblock {\em arXiv preprint arXiv:2209.15352}, 2022.

\bibitem{copet2023simple}
Jade Copet, Felix Kreuk, Itai Gat, Tal Remez, David Kant, Gabriel Synnaeve, Yossi Adi, and Alexandre D{\'e}fossez.
\newblock Simple and controllable music generation.
\newblock In {\em NeurIPS}, pages 47704--47720, 2023.

\bibitem{rombach2022high}
Robin Rombach, Andreas Blattmann, Dominik Lorenz, Patrick Esser, and Bj{\"o}rn Ommer.
\newblock High-resolution image synthesis with latent diffusion models.
\newblock In {\em CVPR}, pages 10684--10695, 2022.

\bibitem{liang2024survey}
Zijing Liang, Yanjie Xu, Yifan Hong, Penghui Shang, Qi~Wang, Qiang Fu, and Ke~Liu.
\newblock A survey of multimodel large language models.
\newblock In {\em CAICE}, pages 405--409, 2024.

\bibitem{dongdreamllm}
Runpei Dong, Chunrui Han, Yuang Peng, Zekun Qi, Zheng Ge, Jinrong Yang, Liang Zhao, Jianjian Sun, Hongyu Zhou, Haoran Wei, et~al.
\newblock Dreamllm: Synergistic multimodal comprehension and creation.
\newblock In {\em ICLR}, 2024.

\bibitem{wang2024cliploss}
Yiping Wang, Yifang Chen, Wendan Yan, Alex Fang, Wenjing Zhou, Kevin~G Jamieson, and Simon~S Du.
\newblock Cliploss and norm-based data selection methods for multimodal contrastive learning.
\newblock In {\em NeurIPS}, pages 15028--15069, 2024.

\bibitem{schrodi2024two}
Simon Schrodi, David~T Hoffmann, Max Argus, Volker Fischer, and Thomas Brox.
\newblock Two effects, one trigger: on the modality gap, object bias, and information imbalance in contrastive vision-language representation learning.
\newblock {\em arXiv preprint arXiv:2404.07983}, 2024.

\bibitem{huangtowards}
Zhuo Huang, Gang Niu, Bo~Han, Masashi Sugiyama, and Tongliang Liu.
\newblock Towards out-of-modal generalization without instance-level modal correspondence.
\newblock In {\em ICLR}, 2025.

\bibitem{thrun1995lifelong}
Sebastian Thrun and Tom~M Mitchell.
\newblock Lifelong robot learning.
\newblock {\em Robotics and Autonomous Systems}, 15(1-2):25--46, 1995.

\bibitem{wang2024comprehensive}
Liyuan Wang, Xingxing Zhang, Hang Su, and Jun Zhu.
\newblock A comprehensive survey of continual learning: Theory, method and application.
\newblock {\em IEEE Transactions on Pattern Analysis and Machine Intelligence}, 2024.

\bibitem{zhao2024both}
Weixiang Zhao, Zhuojun Li, Shilong Wang, Yang Wang, Yulin Hu, Yanyan Zhao, Chen Wei, and Bing Qin.
\newblock Both matter: Enhancing the emotional intelligence of large language models without compromising the general intelligence.
\newblock In {\em Findings of the Association for Computational Linguistics: ACL 2024}, pages 11157--11176, 2024.

\bibitem{zhao2024sapt}
Weixiang Zhao, Shilong Wang, Yulin Hu, Yanyan Zhao, Bing Qin, Xuanyu Zhang, Qing Yang, Dongliang Xu, and Wanxiang Che.
\newblock Sapt: A shared attention framework for parameter-efficient continual learning of large language models.
\newblock In {\em Proceedings of the 62nd Annual Meeting of the Association for Computational Linguistics (Volume 1: Long Papers)}, pages 11641--11661, 2024.

\bibitem{prabhu2020gdumb}
Ameya Prabhu, Philip~HS Torr, and Puneet~K Dokania.
\newblock Gdumb: A simple approach that questions our progress in continual learning.
\newblock In {\em ECCV}, pages 524--540, 2020.

\bibitem{riemer2018learning}
Matthew Riemer, Ignacio Cases, Robert Ajemian, Miao Liu, Irina Rish, Yuhai Tu, and Gerald Tesauro.
\newblock Learning to learn without forgetting by maximizing transfer and minimizing interference.
\newblock {\em arXiv preprint arXiv:1810.11910}, 2018.

\bibitem{li2017learning}
Zhizhong Li and Derek Hoiem.
\newblock Learning without forgetting.
\newblock {\em IEEE Transactions on Pattern Analysis and Machine Intelligence}, 40(12):2935--2947, 2017.

\bibitem{liu2018rotate}
Xialei Liu, Marc Masana, Luis Herranz, Joost Van~de Weijer, Antonio~M Lopez, and Andrew~D Bagdanov.
\newblock Rotate your networks: Better weight consolidation and less catastrophic forgetting.
\newblock In {\em ICPR}, pages 2262--2268, 2018.

\bibitem{lee2019overcoming}
Kibok Lee, Kimin Lee, Jinwoo Shin, and Honglak Lee.
\newblock Overcoming catastrophic forgetting with unlabeled data in the wild.
\newblock In {\em ICCV}, pages 312--321, 2019.

\bibitem{evron2022catastrophic}
Itay Evron, Edward Moroshko, Rachel Ward, Nathan Srebro, and Daniel Soudry.
\newblock How catastrophic can catastrophic forgetting be in linear regression?
\newblock In {\em COLT}, pages 4028--4079, 2022.

\bibitem{zhaostatistical}
Xuyang Zhao, Huiyuan Wang, Weiran Huang, and Wei Lin.
\newblock A statistical theory of regularization-based continual learning.
\newblock In {\em ICML}, 2024.

\bibitem{yoonscalable}
Jaehong Yoon, Saehoon Kim, Eunho Yang, and Sung~Ju Hwang.
\newblock Scalable and order-robust continual learning with additive parameter decomposition.
\newblock In {\em ICLR}, 2020.

\bibitem{hung2019compacting}
Ching-Yi Hung, Cheng-Hao Tu, Cheng-En Wu, Chien-Hung Chen, Yi-Ming Chan, and Chu-Song Chen.
\newblock Compacting, picking and growing for unforgetting continual learning.
\newblock In {\em NeurIPS}, 2019.

\bibitem{yoon2018lifelong}
Jaehong Yoon, Eunho Yang, Jeongtae Lee, and Sung~Ju Hwang.
\newblock Lifelong learning with dynamically expandable networks.
\newblock In {\em ICLR}, 2018.

\bibitem{van2022three}
Gido~M Van~de Ven, Tinne Tuytelaars, and Andreas~S Tolias.
\newblock Three types of incremental learning.
\newblock {\em Nature Machine Intelligence}, 4(12):1185--1197, 2022.

\bibitem{masana2022class}
Marc Masana, Xialei Liu, Bart{\l}omiej Twardowski, Mikel Menta, Andrew~D Bagdanov, and Joost Van De~Weijer.
\newblock Class-incremental learning: survey and performance evaluation on image classification.
\newblock {\em IEEE Transactions on Pattern Analysis and Machine Intelligence}, 2022.

\bibitem{wang2022dualprompt}
Zifeng Wang, Zizhao Zhang, Sayna Ebrahimi, Ruoxi Sun, Han Zhang, Chen-Yu Lee, Xiaoqi Ren, Guolong Su, Vincent Perot, Jennifer Dy, et~al.
\newblock Dualprompt: Complementary prompting for rehearsal-free continual learning.
\newblock In {\em ECCV}, pages 631--648, 2022.

\bibitem{qian2023decouple}
Zi~Qian, Xin Wang, Xuguang Duan, Pengda Qin, Yuhong Li, and Wenwu Zhu.
\newblock Decouple before interact: Multi-modal prompt learning for continual visual question answering.
\newblock In {\em ICCV}, pages 2953--2962, 2023.

\bibitem{wang2022s}
Yabin Wang, Zhiwu Huang, and Xiaopeng Hong.
\newblock S-prompts learning with pre-trained transformers: An occam’s razor for domain incremental learning.
\newblock In {\em NeurIPS}, pages 5682--5695, 2022.

\bibitem{zhang2023generalization}
Qi~Zhang, Yifei Wang, and Yisen Wang.
\newblock On the generalization of multi-modal contrastive learning.
\newblock In {\em ICML}, pages 41677--41693, 2023.

\end{thebibliography}
\bibliographystyle{unsrt}
\clearpage
\onecolumn

\renewcommand{\cftsecfont}{\large} 
\renewcommand{\cftsubsecfont}{\normalsize} 
\renewcommand{\cftbeforesecskip}{16pt}      
\renewcommand{\cftbeforesubsecskip}{12pt}

\renewcommand{\contentsname}{Appendix}
\addtocontents{toc}{\protect\setcounter{tocdepth}{2}}
\appendix

\tableofcontents

\clearpage
\section{Implementation Details}
\label{sec:implementation details}

\subsection{Training and Inference Procedures}
\label{sec:training_inference}
Here we provide the training algorithm flow in Algorithm~\ref{alg:cmcl_training} and the corresponding pseudo-code. The inference algorithm flow is demonstrated in Algorithm~\ref{alg:cmcl_inference}. 
Algorithm~\ref{alg:cmcl_training} describes the Continual Multimodal Contrastive Learning training process with our method DNS, which enables a model to progressively learn from new multimodal data while maintaining stability in previously learned modality alignments.

\begin{algorithm}[h]
\caption{DNS for Continual Multimodal Contrastive Learning (Training)}
\label{alg:cmcl_training}
\begin{algorithmic}[1]
\REQUIRE 
    \textbf{Inputs:} \\
    \quad \; Pre-trained encoders $\{g^*_m\}_{m\in\mathcal{M}}$ and follow-up learner $\{\mathbf{W}^m\}_{m\in\mathcal{M}}$ for each modality $m$, learning rate $\eta$, \\
    \quad \; A sequence of training datasets $\{\mathcal{X}^{m,m'}_t\}_{t=1}^N$, each containing modality pairs. \\
    \quad \; Feature covariance matrix 
    $\{\bar{\mathbf{Z}}_{<t;*}^{m}\}_{m\in\mathcal{M}}$ and $\{\bar{\mathbf{Z}}_{<t;<t}^{m}\}_{m\in\mathcal{M}}$ (initially $t=1$). 
    
\ENSURE 
    Updated model parameters $\{\mathbf{W}_{t}^m\}$ (\textit{i.e.}, $\{\mathbf{\theta}_m^t\}$) that preserve alignment stability on previously seen data.  

\vspace{3pt}
\STATE \textbf{Initialize} features covariance storages $\{\bar{\mathbf{Z}}_{<t;*}^{m}\}_{m\in\mathcal{M}}$ and $\{\bar{\mathbf{Z}}_{<t;<t}^{m}\}_{m\in\mathcal{M}}$ to zero.
\STATE \textbf{Initialize} local parameters from $\{\mathbf{W}^m\}_{m\in\mathcal{M}}$. 

\FOR{$t = 1$ to $N$} 
    \STATE \textbf{Prior to the new step $\{m, m'\}$: }
    \STATE \quad \ding{172} Retrieve feature variance $\{\bar{\mathbf{Z}}_{<t;*}^{m}\}_{m\in\{m, m'\}}$ and $\{\bar{\mathbf{Z}}_{<t;<t}^{m}\}_{m\in\{m, m'\}}$.
    \STATE \quad \ding{173} Compute the projectors:
    \STATE \quad 
    \[\mathbf{P}_{<t}\gets \bar{\mathbf{Z}}_{<t;*}^{m}, \quad  \tilde{\mathbf{P}}_{<t}\gets \bar{\mathbf{Z}}_{<t;<t}^{m}, \quad 
    \mathbf{P'}_{<t}\gets \bar{\mathbf{Z}}_{<t;*}^{m'}, \quad 
    \tilde{\mathbf{P'}}_{<t}\gets \bar{\mathbf{Z}}_{<t;<t}^{m'}
    \]

    \STATE \textbf{Train on the new dataset} $\mathcal{X}^{m, m'}_t$:
    \STATE \quad \textbf{Repeat} until convergence:
    \STATE \quad \quad \ding{172}  Sample a mini-batch $\{(\mathbf{x}^{m}, \mathbf{x}^{m'})\}$ from $\mathcal{X}^{m,m'}_t$.
    \STATE \quad \quad \ding{173}  Compute representations and the multimodal contrastive loss: 
    \[
        \mathcal{L}_{m,m'} 
        = \mathcal{L}\bigl(g_{m}(\mathbf{x}^{m}; \mathbf{W}_{t-1}^{m}), \;
        g_{m'}(\mathbf{x}^{m'}; \mathbf{W}_{t-1}^{m'})\bigr).
    \]
    \STATE \quad \quad \ding{174} Compute the raw gradients:
    \[
        \nabla \mathbf{W}_{t}^{m} \gets 
        \frac{\partial \mathcal{L}_{m,m'}}{\partial \mathbf{W}_{t-1}^{m}}, 
        \quad
        \nabla \mathbf{W}_{t}^{m'} \gets 
        \frac{\partial \mathcal{L}_{m,m'}}{\partial \mathbf{W}_{t-1}^{m'}}.
    \]
    \STATE \quad \quad \ding{175} \textbf{Project the gradients} so that old alignments are preserved (Theorem~\ref{thm:gradient_update}):
    \[
        \Delta \mathbf{W}_{t}^{m} 
        \;\;=\;\; \nabla \mathbf{W}_{t}^{m} \;-\; \tilde{\mathbf{P}}_{<t}\,\nabla \mathbf{W}_{t}^{m}\,\mathbf{P}'_{<t},
    \]
    \[
        \Delta \mathbf{W}_{t}^{m'} 
        \;\;=\;\; \nabla \mathbf{W}_{t}^{m} \;-\; \tilde{\mathbf{P}}'_{<t}\,\nabla \mathbf{W}_{t}^{m}\,\mathbf{P}_{<t},
    \]
    \STATE \quad \quad \ding{176} \textbf{Update the parameters}:
    \[
        \mathbf{W}_{t}^{m} \;\leftarrow\; \mathbf{W}_{t-1}^{m} \;-\;\eta\, \Delta \mathbf{W}_{t}^{m}, 
        \quad
        \mathbf{W}_{t}^{m'} \;\leftarrow\; \mathbf{W}_{t-1}^{m'} \;-\;\eta\, \Delta \mathbf{W}_{t}^{m'}.
    \]

    \STATE \quad End mini-batch.

    \STATE \textbf{Update the feature covariances (step-wise or batch-wise):}
    \STATE \quad $\bar{\mathbf{Z}}^m_{<t+1;*} = \frac{\bar{n}_{t-1}}{\bar{n}_{t}}\bar{\mathbf{Z}}^m_{<t;*} + \frac{1}{\bar{n}_{t}}(\mathbf{Z}^m_{t;*})^\top \mathbf{Z}^m_{t;*}$, \quad 
    $\bar{\mathbf{Z}}^m_{<t+1;<t+1} = \frac{\bar{n}_{t-1}}{\bar{n}_{t}}\bar{\mathbf{Z}}^{m'}_{<t;<t} + \frac{1}{\bar{n}_{t}}(\mathbf{Z}^{m'}_{t;t})^\top \mathbf{Z}^{m'}_{t;t}$,
    \STATE \quad $\bar{\mathbf{Z}}^{m'}_{<t+1;*} = \frac{\bar{n}_{t-1}}{\bar{n}_{t}}\bar{\mathbf{Z}}^{m'}_{<t;*} + \frac{1}{\bar{n}_{t}}(\mathbf{Z}^{m'}_{t;*})^\top \mathbf{Z}^{m'}_{t;*}$, \quad 
    $\bar{\mathbf{Z}}^{m'}_{<t+1;<t+1} = \frac{\bar{n}_{t-1}}{\bar{n}_{t}}\bar{\mathbf{Z}}^{m}_{<t;<t} + \frac{1}{\bar{n}_{t}}(\mathbf{Z}^{m}_{t;t})^\top \mathbf{Z}^{m}_{t;t}$.
    \STATE \quad 
\ENDFOR
\STATE \textbf{Return:} $\{\mathbf{W}_{N}^m\}$ preserving stability for all previously learned pairs $(m,m')$.
\end{algorithmic}
\end{algorithm}

DNS provides a convenient way to modify gradients before the optimizer step. By applying structured transformations using precomputed projection matrices, it ensures better optimization while keeping the implementation modular and flexible. The pseudo-code for gradient updates is shown below. 

\begin{tcolorbox}[colframe=gray!20, colback=gray!10, coltitle=black, arc=0pt, title=\textbf{Pseudo-code for gradient updates},
bottom=1mm]
\begin{lstlisting}[language=Python, belowskip=0mm,]
# ...
def update_gradients(self):
    # retrieve the gradients ...
    gradient1 -=  self.P_1_ @ gradient1 @ self.P_1  
    gradient2 -=  self.P_2_ @ gradient2 @ self.P_2
    # overwrite the gradients ...

# ...
# invoke the function before backward propagation
model.update_gradients()
optimizer.step()
\end{lstlisting}
\end{tcolorbox}

Algorithm~\ref{alg:cmcl_inference} describes the inference process for Continual Multimodal Contrastive Learning. The goal is to perform inference on a given test sample from a specific modality and generate aligned representations.

\begin{algorithm}[h]
\caption{DNS for Continual Multimodal Contrastive Learning (Inference)}
\label{alg:cmcl_inference}
\begin{algorithmic}[1]
\REQUIRE 
    \textbf{Inputs:} \\
    \quad \; Pre-trained encoders $\{g^*_m\}_{m\in\mathcal{M}}$ and trained learner $\{\mathbf{W}^m\}_{m\in\mathcal{M}}$ for each modality $m$, \\
    \quad \; A test sample (or a query) in modality $m$, denoted $\mathbf{x}^{m}$.

\ENSURE 
    \quad Inference outputs (e.g., aligned representations, retrieval scores).

\vspace{3pt}
\STATE \textbf{Obtain representation}:
\[
    \mathbf{z}^{m} =\mathbf{W}_{N}^{m}g_{m}\bigl(\mathbf{x}^{m}\bigr),
    \quad \mathbf{z}^{m'} =\mathbf{W}_{N}^{m'}g_{m'}\bigl(\mathbf{x}^{m'}\bigr).
\]

\STATE \textbf{Perform downstream task}:
\STATE \quad For example, retrieve the best match in another modality $m'$ via 
\[
    \max_{ \mathbf{z}^{m'} } \; 
    (\mathbf{z}^{m})^\top \mathbf{z}^{m'} 
    \quad \text{(or apply other alignment criteria)}.
\]
\STATE \textbf{Output} the alignment scores, predicted labels, or retrieved results, \textit{etc}.
\end{algorithmic}
\end{algorithm}

\subsection{Datasets}
\label{sec:datasets}
Below, we introduce the used datasets in more detail.
\begin{itemize}
  
 \item\textbf{UCF101}~\cite{soomro2012ucf101}\footnote{\url{https://www.crcv.ucf.edu/data/UCF101.php}} is a widely used dataset for action recognition across 101 action categories, sourced from YouTube. It extends the UCF50 dataset, introducing greater diversity and real-world complexity with significant variations in camera motion, object appearance, viewpoint, background clutter, and illumination. 
The 101 action categories span 5 broad types:  1) Human-Object Interaction (\textit{e.g.,}~\text{cutting in kitchen}); 2) Body-Motion Only (\textit{e.g.,} \text{jump rope and Tai Chi}); 3) Human-Human Interaction (\textit{e.g.,} \text{boxing and head massage}); 4) Playing Musical Instruments (\textit{e.g.,} \text{playing violin and playing tabla}); 5) Sports (\textit{e.g.,} \text{basketball dunk and pole vault}). In our case, we select 70 examples for each category for training, and 10 examples for each category for testing to balance the distribution. 

\item\textbf{ESC50}~\cite{piczak2015esc}\footnote{\url{https://github.com/karolpiczak/ESC-50}} is a collection of 2,000 five-second environmental audio recordings, designed for benchmarking methods in environmental sound classification. It consists of 50 semantically distinct classes, each containing 40 examples, grouped into 5 major categories: 1) Animals (\textit{e.g.,} \text{dog, cat, and frog}); 2) Natural Soundscapes \& Water Sounds (\textit{e.g.,} \text{rain, sea waves, and thunderstorm}); 3) Human Non-Speech Sounds (\textit{e.g.,} \text{laughing, sneezing, and footsteps}); 4) Interior/Domestic Sounds (\textit{e.g.,} \text{door knock, washing machine, and clock alarm}); 5) Exterior/Urban Noises (\textit{e.g.,} \text{car horn, airplane, and fireworks}). All clips have been manually extracted from public field recordings available on Freesound.org. To evaluate the performance, we randomly split the data with a ratio of 4:1 for training (1,600) and testing (400).

\item\textbf{NYUDv2}~\cite{SilbermanNYUDv2, icra_2019_fastdepth}\footnote{\url{https://cs.nyu.edu/~fergus/datasets/nyu_depth_v2.html}} is a dataset originally for depth estimation and semantic segmentation, which features video sequences of indoor scenes captured with both RGB and depth cameras from Microsoft Kinect. We use the preprocessed version of the original NYU Depth V2 dataset linked above. It is also used in the TensorFlow dataset\footnote{\url{https://www.tensorflow.org/datasets/catalog/nyu_depth_v2}}. In this version, 47,584 examples are used for training, and 654 examples are for testing. 

\item\textbf{VGGSound-S}~\cite{chen2020vggsound}\footnote{\url{https://www.robots.ox.ac.uk/~vgg/data/vggsound/}} is extracted from content-sharing platform, YouTube, consisting of short clips of audio sounds. It consists of 10-second video segments collected ``in the wild'', which ensures that the sound source is visually evident in each clip. The dataset spans a wide range of acoustic environments and diverse noise conditions, making it highly representative of real-world applications. To facilitate the training and evaluation, we downsample a small version of VGGSound, namely VGGSound-S, which consists of 10,000 examples for training and 2,000 examples for testing.

\item\textbf{Clotho}~\cite{drossos2020clotho}\footnote{\url{https://zenodo.org/records/3490684}} is an audio captioning dataset, with all audio examples being from the Freesound platform. The duration of the audio is in the range of 15 to 30 seconds. Each audio has captions up to 20 words in length, collected by AMT or a specific protocol for crowd-sourcing audio annotations. We only keep one caption per audio in our case and use its development set (3,840) for training and evaluation set (1,045) for testing. 

\item\textbf{TVL}~\cite{fu2024touch}\footnote{\url{https://tactile-vlm.github.io/}} is a large-scale image-touch dataset designed for multimodal learning, combining two datasets: 1) SSVTP~\cite{kerr2022self}, a robot-collected dataset with 4,587 image-touch pairs, manually labeled with image annotations; and 2) HCT, a large-scale dataset with 39,154 synchronously captured in-the-wild image-touch pairs, labeled using GPT-4V~\cite{achiam2023gpt} based on visual inputs. We use its provided split for training and testing.

\item\textbf{LLVIP}~\cite{jia2021llvip}\footnote{\url{https://bupt-ai-cz.github.io/LLVIP/}} is a visible-infrared paired dataset designed for low-light vision applications, which contains 30,976 images (15,488 pairs) captured in extremely dark scenes. All images are strictly aligned in time and space. We also use its default split for training and testing.

\end{itemize}

\subsection{Modality-Binding Methods}
\label{sec:modality-binding}
\begin{table}[!h]
    \centering
    \caption{Details of different modality-binding methods, which include the supported modalities (V$\leftrightarrow$Vision, T$\leftrightarrow$Text, VI$\leftrightarrow$VIdeo, A$\leftrightarrow$Audio, D$\leftrightarrow$Depth, TH$\leftrightarrow$THermal, IMU$\leftrightarrow$Inertial Measurement Unit, EV$\leftrightarrow$EVent, and P$\leftrightarrow$Point cloud), feature dimension, and the number of datasets for pre-training.}
    \label{tab:modality-binding}
    \begin{tabular}{l|l|c|c} 
    \toprule
             & \textbf{Modalities} & \textbf{Feature dimension} & \textbf{\#Datasets} \\
    \midrule
    \textbf{ImageBind}    & V, T, VI, A, D, TH, IMU & 1,024 & 4 \\
    \textbf{LanguageBind} & V, T, VI, A, D, TH & 768 & 6 \\
    \textbf{UniBind}      & V, T, VI, A, TH, EV, P & 1,024 & 13\\
    \bottomrule
\end{tabular}
\end{table}

Here we introduce three modality-binding methods used in this work (also check the details in Table~\ref{tab:modality-binding}).

\begin{itemize}
    \item \textbf{ImageBind}~\cite{girdhar2023imagebind}\footnote{\url{https://github.com/facebookresearch/ImageBind}} aligns six modalities, \textit{i.e.}, images, text, audio, depth, thermal, and IMU, into a joint embedding space, with vision-relevant modality pair data. It utilizes 4 datasets, containing video-audio (Audioset~\cite{gemmeke2017audio}), image-depth (SUN RGB-D~\cite{song2015sun}), image-thermal (LLVIP~\cite{jia2021llvip}), and video-IMU (Ego4D~\cite{grauman2022ego4d}), for pre-training. 
    \item \textbf{LanguageBind}~\cite{zhu2023languagebind}\footnote{\url{https://github.com/PKU-YuanGroup/LanguageBind}} takes language (\textit{i.e.}, text) as the bind across different modalities, including video, infrared, depth, audio and text. Languagebind is pre-trained on the collected dataset VIDAL-10M~\cite{zhu2023languagebind}, which is extracted from 6 datasets: YouTube-8M~\cite{abu2016youtube}, MSR-VTT~\cite{xu2016msr}, COCO~\cite{lin2014microsoft}, AVA~\cite{gu2018ava}, HMDB-51~\cite{kuehne2011hmdb}, and ImageNet~\cite{deng2009imagenet}.
    \item \textbf{UniBind}~\cite{lyu2024unibind}\footnote{\url{https://github.com/QC-LY/UniBind}} learns a unified and balanced representation space with large language models (LLMs) for data augmentation. All modality embeddings are aligned with the embedding centers yielded from LLM-augmented data. 1,000 descriptions for each category are generated via LLMs (GPT-4~\cite{achiam2023gpt} and LLaMA~\cite{touvron2023llama}), and multimodal data descriptions are obtained by multimodal LLMs like BLIP2~\cite{li2023blip} and LLaMA-Adapter~\cite{zhang2023llama}. 
\end{itemize}

\subsection{Baselines}
\label{sec:baselines}
Below, we discuss the used baselines in more detail.
\begin{itemize}
   \item \textbf{GEM}~\cite{lopez2017gradient} utilizes an episodic memory to store observed examples for each task. GEM computes gradients on the current task and compares them to gradients on the stored episodic memory. The inner product between these loss gradient vectors is assumed to be positive ($\langle \nabla_t, \nabla_k\rangle\ge 0, \forall k<t$) so that the proposed parameter update $\nabla_t$ is unlikely to increase the loss at the previous tasks. 

   \item \textbf{DER \& DER++}~\cite{buzzega2020dark} maintain the network's logits from the past tasks with being compared to the logits at the current task. In particular, DER++ further introduces the ground-truth labels serving as an additional regularization to stabilize the optimization. In our case, since we do not have explicit labels, we directly use a binary class to determine whether the modality pairs are from the same instance. 

   \item \textbf{EWC}~\cite{kirkpatrick2017overcoming} utilizes the Fisher information matrix to estimate the importance of each parameter from previous tasks. Besides, EWC penalizes significant deviations from previous optimal values. As a result, it adds a quadratic constraint term to the loss function. The posterior is approximated as a Gaussian distribution and resolved via a Laplace approximation.

   \item \textbf{Co$^2$L}~\cite{cha2021co2l} is a replay-based continual contrastive learning method. In more detail, Co$^2$L modifies the supervised contrastive loss by viewing the examples within current tasks as the anchor, while examples at past tasks as the negatives. The instance-wise relations from the previous model are distilled to preserve the learned knowledge. Therefore, two objectives are combined with a controllable hyperparameter for the optimization of model parameters.

   \item \textbf{C-FLAT}~\cite{bian2024make} introduces an optimization method to enhance the existing continual learning method by promoting loss landscape flatness. It incorporates zeroth-order and first-order sharpness-aware optimization, balancing both local minima selection and landscape smoothness. C-FLAT is also encapsulated as a ``plug-and-play'' optimizer. In our case, we implement this method built upon Co$^2$L. 

   \item \textbf{CILA}~\cite{wen2024provable} decomposes the continual learning objective into two components: a supervised contrastive loss and a distillation loss, following Co$^2$L. 
In particular, CILA introduces an adaptive distillation induced by theoretical analyses. The distillation coefficients are dynamically updated based on the ratio of past distillation and contrastive losses. 
\end{itemize}

We implement these methods following public repositories like mammoth\footnote{\url{https://github.com/aimagelab/mammoth}}, Co$^2$L\footnote{\url{https://github.com/chaht01/Co2L}}, and GradientEpisodicMemory\footnote{\url{https://github.com/facebookresearch/GradientEpisodicMemory}}.  

\subsection{Evaluation Settings}
\label{sec:metrics}
\textbf{Evaluation metrics.} We use $\text{Recall@}k$, Acc, and BWT for retrieval and classification tasks. BWT indicates the anti-forgetting capability. $\text{Recall@}k$ measures the ability of a model to retrieve the correct item among the top-$k$ candidates. 

Formally, for a dataset with $n$ queries, $\text{Recall@}k$ is defined as:
\begin{align}
    \text{Recall@}k := \frac{1}{n}\sum_{i=1}^{n}\mathbb{I} (r(i)\le k),
\end{align}
where $\mathbb{I}(\cdot)$ is the indicator function that returns 1 if the condition is true and 0 otherwise; $r(\cdot)$ denotes the position of the ground truth item for the $i$-th query in the ranked retrieval list. 
For classification tasks, average accuracy (Acc) is employed to measure the proportion of correctly predicted labels. Specifically, given the ground-truth label $y_i$ and the predicted label $\hat{y}_i$ for each instance in datasets with size of $n$, accuracy is given by:
\begin{align}
    \text{Acc} := \frac{1}{n}\sum_{i=1}^n \mathbb{I}(y_i = \hat{y}_i).
\end{align}
Here $\hat{y}_i$ is calculated by finding the closest categories vectors with the query vector, \textit{i.e.}, $(\mathbf{z}_{i}^{m})^\top\mathbf{z}^{\text{T}}_{c}$, where $\mathbf{z}^{\text{T}}_{c}$ represents the textual category vector. BWT measures the effect of learning from new datasets on the performance of previously learned datasets. Let $T$ be the total number of training steps. Different steps involve different datasets. That $R_{t;t}$ is the performance at step $t$ immediately after the model is learned on the $t$-th dataset; $R_{t;T}$ is the performance on $i$-th dataset after leaning at the final step $T$. BWT can be formulated as:
\begin{align}
    \text{BWT} := \frac{1}{T-1}\sum_{t=1}^{T-1}(R_{t;T} - R_{t;t}).
\end{align}
Note that for different tasks, we compute corresponding backward transfer metrics, \textit{i.e.}, BWT$_{\text{R}}$ for the retrieval task and BWT$_{\text{A}}$ for the classification task. 

\textbf{Hyperparameter settings.}
We elaborate on the hyperparameter settings to facilitate reproducibility. We employ linear layers for each modality built upon well-trained modality-binding models (see Appendix~\ref{sec:modality-binding}). This architecture is also similar to UniBind~\cite{lyu2024unibind}. In practice, we set the same dimension between the input and output features, $\mathbf{W}_{1}^m\in\mathbb{R}^{d\times d}$, and initialized them as unit matrices. On the one hand, it can directly maintain the original performance from the prior encoders at the initial step. On the other hand, the full-rank property indicates $\text{Col}(\mathbf{Z}_{1}^{m})\subseteq \text{Col}(\mathbf{W}_{1}^m)$, which further weakens our suppose.

During the training, we employ the Adam optimizer. All models are trained for 5 epochs at every step. The order is consistent for 10 independent trials.
In general, we set the learning rate to 0.0001 and the batch size to 64. The weight decay in $\{0,0.5,0.1,0.05,0.01,0.005,0.001\}$.  
As for the SVD approximation, we set the minimal eigenvalue in the range of $\{0,0.1,0.01,0.001\}$. We provide the hyperparameter analysis with empirical results in Appendix~\ref{sec:additional_results}.

\section{Proofs of Theoretical Results}
\label{sec:proof}
\subsection{Proof of Theorem~\ref{thm:W_to_nullspace}}
\label{sec:proof_W_to_nullspace}
\begin{tcolorbox}[colframe=gray!20, colback=gray!10, coltitle=black, arc=0pt, title=\textbf{Recall:}]
\textbf{Threorem~\ref{thm:W_to_nullspace}.}\ 
\textit{
Let the global parameter update follow $(\mathbf{W}_t^{m_1})^\top\mathbf{W}_t^{m_2} = (\mathbf{W}_{t-1}^{m_1})^\top\mathbf{W}_{t-1}^{m_2} - \eta\bar{\mathbf{W}}$, and the original parameter update be denoted as:
\begin{align}
    \tilde{\mathbf{W}} = \left( (\mathbf{W}^{m_1}_{t-1})^\top \nabla \mathbf{W}^{m_2}_t +(\nabla \mathbf{W}^{m_1}_t)^\top \mathbf{W}^{m_2}_{t-1} - \eta(\nabla \mathbf{W}^{m_1}_t)^\top \nabla \mathbf{W}^{m_2}_t \right).
\end{align}
Then, the stability holds if 
\begin{align}
    \bar{\mathbf{W}} := \tilde{\mathbf{W}} - \mathbf{P'}\tilde{\mathbf{W}}\mathbf{P},
\end{align}
where $\mathbf{P'}$ and $\mathbf{P}$ are the space projectors for $\mathbf{Z}_{t-1;*}^{m_1}$ and $\mathbf{Z}_{t-1;*}^{m_2}$, respectively.
}
\end{tcolorbox}

\begin{proof}
Expand $(\mathbf{Z}^{m_1}_{t-1;*})^\top \mathbf{Z}^{m_2}_{t-1;*}$ with the weight updating: 
\begin{align}
    & (\mathbf{Z}^{m_1}_{t-1;t})^\top \mathbf{Z}^{m_2}_{t-1;t} \\
    =\ &(\mathbf{Z}^{m_1}_{t-1;*})^\top \left(\mathbf{W}^{m_1}_{t-1} - \eta \nabla \mathbf{W}^{m_1}_t\right)^\top \left(\mathbf{W}^{m_2}_{t-1} - \eta \nabla \mathbf{W}^{m_2}_t\right) \mathbf{Z}^{m_2}_{t-1;*}\\
    =\ & \underbrace{(\mathbf{Z}^{m_1}_{t-1;*})^\top (\mathbf{W}^{m_1}_{t-1})^\top \mathbf{W}^{m_2}_{t-1} \mathbf{Z}^{m_2}_{t-1;*}}_{\textbf{A}^{m_1,m_2}_{t-1;t-1}}
    - \eta (\mathbf{Z}^{m_1}_{t-1;*}) ^\top (\mathbf{W}^{m_1}_{t-1})^\top \nabla \mathbf{W}^{m_2}_t \mathbf{Z}^{m_2}_{t-1;*}\\
    & - \eta (\mathbf{Z}^{m_1}_{t-1;*})^\top  (\nabla \mathbf{W}^{m_1}_t)^\top \mathbf{W}^{m_2}_{t-1} \mathbf{Z}^{m_2}_{t-1;*}
    + \eta^2 (\mathbf{Z}^{m_1}_{t-1;*}) ^\top  (\nabla \mathbf{W}^{m_1}_t)^\top \nabla \mathbf{W}^{m_2}_t \mathbf{Z}^{m_2}_{t-1;*}\\
    =\ & \textbf{A}^{m_1,m_2}_{t-1;t-1} - \eta (\mathbf{Z}^{m_1}_{t-1;*})^\top \left( (\mathbf{W}^{m_1}_{t-1})^\top \nabla \mathbf{W}^{m_2}_t + (\nabla \mathbf{W}^{m_1}_t)^\top \mathbf{W}^{m_2}_{t-1} - \eta(\nabla \mathbf{W}^{m_1}_t)^\top \nabla \mathbf{W}^{m_2}_t \right) \mathbf{Z}^{m_2}_{t-1;*}.
\end{align}

For the stability to hold, the perturbations introduced by the weight updates must not interfere with the alignment. Thus, the following terms must vanish:
\begin{align}
    (\underbrace{\mathbf{Z}^{m_1}_{t-1;*}}_{\mathbf{Z'}\in{\mathbb{R}^{d\times N}}})^\top \underbrace{\left( (\mathbf{W}^{m_1}_{t-1})^\top \nabla \mathbf{W}^{m_2}_t +(\nabla \mathbf{W}^{m_1}_t)^\top \mathbf{W}^{m_2}_{t-1} - \eta(\nabla \mathbf{W}^{m_1}_t)^\top \nabla \mathbf{W}^{m_2}_t \right)}_{\tilde{\mathbf{W}}\in{\mathbb{R}^{d\times d}}} \underbrace{\mathbf{Z}^{m_2}_{t-1;*}}_{\mathbf{Z}\in{\mathbb{R}^{d\times N}}} = 0.
\end{align}

\begin{tcolorbox}[colframe=gray!10, colback=gray!10, coltitle=black, arc=0pt, boxrule=0.3pt]
\textbf{Question.} 
How to find the solution that satisfies $\mathbf{Z'^\top \bar{\mathbf{W}}Z=0}$ with minimal deviation from the original weight $\tilde{\mathbf{W}}$, \textit{i.e.,} solving: 
    \begin{align}
        \min_{\mathbf{\bar{\mathbf{W}}}} \|\tilde{\mathbf{W}}-\bar{\mathbf{W}}\|,\quad \textit{s.t.},\quad \mathbf{Z'^\top \bar{\mathbf{W}} Z=0}.
    \end{align}
\end{tcolorbox}

\textbf{\ding{172} Find the general solution for $\mathbf{Z'^\top \bar{\mathbf{W}}Z= 0}$.}

According to the Lemma~\ref{lem:AXB=C}, we can solve the equation $\mathbf{Z'^\top \bar{\mathbf{W}}Z = 0}$ and obtain its general solution:
\begin{align}       
\bar{\mathbf{W}} & = \underbrace{\mathbf{0}}_{\text{trivial}} + \underbrace{\mathbf{Y} -\mathbf{Z'^{\top\dagger}}\mathbf{Z'^\top}\mathbf{Y}\mathbf{Z}\mathbf{Z^\dagger}}_{\text{non-trivial}}
= \mathbf{Y} -(\mathbf{Z'}\mathbf{Z'^\dagger})^\top\mathbf{Y}\mathbf{Z}\mathbf{Z^\dagger}
= \mathbf{Y} -(\mathbf{P}')^\top\mathbf{Y}\mathbf{P}
= \mathbf{Y} -\mathbf{P}'\mathbf{Y}\mathbf{P},
\end{align}
where $\mathbf{Y}$ is arbitrary, $\mathbf{P'}$ and $\mathbf{P}$ are the col-space projectors of $\mathbf{Z'}$ and $\mathbf{Z}$, respectively.

Any element $\bar{\mathbf{W}}$ in set $\{\mathbf{Y - P'YP} | \mathbf{Y} \in\mathbb{R}^{d\times d}\}$ satisies $\mathbf{Z'^\top \bar{\mathbf{W}}Z=0}$.

\textbf{\ding{173}  Find the optimal solution for $\mathbf{Z'}^\top\bar{\mathbf{W}}\mathbf{Z} = \mathbf{0}$.} 
Now given the set:
\begin{align}
    S = \{\mathbf{Y - P' Y P} \mid \mathbf{Y} \in \mathbb{R}^{d \times d}\},
\end{align}
where $\mathbf{P} \in \mathbb{R}^{d \times d}$ and $\mathbf{P'} \in \mathbb{R}^{d \times d}$ are projection matrices, (\textit{i.e.}, $\mathbf{P}^2 = \mathbf{P}$ and $\mathbf{P'}^2 = \mathbf{P'}$); and their L2 norms are either 1 or 0 (see Lemma~\ref{lem:Eigenvalues of idempotent matrix}). 
Given the matrix $\tilde{\mathbf{W}} \in \mathbb{R}^{d \times d}$, we hope to find the \textit{closest} element in set $S$, in other words, we need to solve the following equation:
\begin{align}
    \min_{\mathbf{Y}} \|\mathbf{Y - P'YP} - \tilde{\mathbf{W}}\|_F^2.
\end{align}

Here, we verify the \textit{convexity} of the above objective function.

The expression for $f(\mathbf{Y})$ ca be written as: 
\begin{align}
    f(\mathbf{Y}) & = \|\mathbf{Y - P'YP} - \tilde{\mathbf{W}}\|_F^2 
     = \text{tr}\left((\mathbf{Y - P'YP} - \tilde{\mathbf{W}})^\top(\mathbf{Y - P'YP} - \tilde{\mathbf{W}})\right).
\end{align}
$\|\cdot\|_F$ denote the Frobenius norm. 
The gradient of $f(Y)$ is given by:
\begin{align}
\nabla_\mathbf{Y} f(\mathbf{Y}) & = 2 \left( \mathbf{Y - P' Y P} - \tilde{\mathbf{W}} - \mathbf{P' (Y - P' Y P} - \tilde{\mathbf{W}) \mathbf{P}} \right) = 2 \left( \mathbf{Y -P'YP} -\tilde{\mathbf{W}} + \mathbf{P'}\tilde{\mathbf{W}} \mathbf{P} \right).
\end{align}
We then vectorize the expression using the properties of the Kronecker product:
\begin{align}
    \text{vec}(\mathbf{P'} \mathbf{Y} \mathbf{P}) = (\mathbf{P}^\top \otimes \mathbf{P'}) \text{vec}(\mathbf{Y}).
\end{align}
The gradient can be rewritten as:
\begin{align}
    \text{vec}(\nabla_{\mathbf{Y}} f(\mathbf{Y})) = 2 \left( \mathbf{I} - \mathbf{P} \otimes \mathbf{P}' \right) \text{vec}(\mathbf{Y} - \tilde{\mathbf{W}}).
\end{align}
The second-order derivative (Hessian matrix) is defined as the derivative of the gradient with respect to $\text{vec}(\mathbf{Y})$. Therefore, we have:
\begin{align}
    \mathbf{H} = \frac{\partial \text{vec}(\nabla_{\mathbf{Y}} f(\mathbf{Y}))}{\partial \text{vec}(\mathbf{Y})^\top} = 2 \left( \mathbf{I} - \mathbf{P} \otimes \mathbf{P}' \right),
\end{align}
where $\mathbf{I}$ is the identity matrix of appropriate dimension matching $\mathbf{Y}$, and 
$\otimes$ represents the Kronecker product. 

Due the the eigenvalues of the projection matrices lie in the set $\{0,1\}$, the eigenvalues of $\mathbf{P\otimes P'}$ are either 0 or 1. 
The corresponding eigenvalue $\lambda_i (\mathbf{H})$ is:
\begin{align}
    \lambda_i(\mathbf{H}) & = 2 \left( 1 - \lambda_j(\mathbf{P} \otimes \mathbf{P'}) \right)   =\left\{
    \begin{aligned}
    &2(1-0)=2, \quad \text{if\ } \lambda_i (\mathbf{P} \otimes \mathbf{P'})=0,\\
    &2(1-1)=0,  \quad \text{if\ } \lambda_i (\mathbf{P} \otimes \mathbf{P'})=1.
    \end{aligned}
    \right.    
\end{align}
$\lambda_i (\mathbf{P} \otimes \mathbf{P'})\in \{0,1\}$.
Since the eigenvalues of $\mathbf{H}$ are non-negative (0 or 2), $\mathbf{H}$ is positive semidefinite. 
Therefore, the function $f(\mathbf{Y}) = \|\mathbf{Y - P'YP} - \tilde{\mathbf{W}}\|_F^2$ is convex.

Set the gradient $\nabla_\mathbf{Y} f(\mathbf{Y}) = 0$, which gives the extreme value condition:
\begin{align}
    \mathbf{Y - P' Y P} - \tilde{\mathbf{W}} = \mathbf{P' (Y - P' Y P} - \tilde{\mathbf{W}}) \mathbf{P}.
\end{align}

Substitute $\mathbf{Y} = \tilde{\mathbf{W}}$ into the extreme value condition:
\begin{align}
    \mathbf{Y - P' Y P} - \tilde{\mathbf{W}} & = \mathbf{W - P'} \tilde{\mathbf{W}} \mathbf{P} - \tilde{\mathbf{W}}= \mathbf{-P'} \tilde{\mathbf{W}} \mathbf{P}  \quad\quad (\text{left-hand side})\\
    \mathbf{P' (Y - P' Y P} - \tilde{\mathbf{W}}) \mathbf{P} &  = \mathbf{P'} (\tilde{\mathbf{W}} - \mathbf{P'} \tilde{\mathbf{W}} \mathbf{P} - \tilde{\mathbf{W}}) \mathbf{P} = \mathbf{P' (-P'} \tilde{\mathbf{W}} \mathbf{P) P}\\
    &= \mathbf{-P' P'} \tilde{\mathbf{W}} \mathbf{P P} = \mathbf{-P'} \tilde{\mathbf{W}} \mathbf{P} \quad \quad \text{(right-hand side)}
\end{align}
Therefore, the left-hand side equals the right-hand side, confirming that $\mathbf{Y} = \tilde{\mathbf{W}}$ satisfies the extreme value condition, implying $\mathbf{Y} = \tilde{\mathbf{W}}$ is an extreme point, and further the optimal solution ($f(\mathbf{Y})$ is convex). Thus, the closest element in the set $S$ to the matrix $\tilde{\mathbf{W}}$ is: $\bar{\mathbf{W}}:=\tilde{\mathbf{W}} - \mathbf{P'}\tilde{\mathbf{W}}\mathbf{P}$.
\end{proof}

\subsection{Proof of Theorem~\ref{thm:gradient_update}}
\label{sec:proof_gradient_update}

\begin{tcolorbox}[colframe=gray!20, colback=gray!10, coltitle=black, arc=0pt, title=\textbf{Recall:}]
\textbf{Theorem~\ref{thm:gradient_update}.}\ 
\textit{
Let the local parameter update follow $\mathbf{W}_{t}^m = \mathbf{W}_{t-1}^m - \eta\Delta\mathbf{W}_t^m$, the stability holds if 
\begin{align}
    \left\{
        \begin{aligned}
            \Delta\mathbf{W}_{t}^{m_1} & := \nabla \mathbf{W}_{t}^{m_1} - 
            \tilde{\mathbf{P}}\nabla \mathbf{W}_{t}^{m_1}\mathbf{P'},\\
            \Delta\mathbf{W}_{t}^{m_2} & := \nabla\mathbf{W}_{t}^{m_2} - \tilde{\mathbf{P}}'\nabla \mathbf{W}_{t}^{m_2}\mathbf{P},
        \end{aligned}
    \right.
\end{align}
where $\tilde{\mathbf{P}}'$ and $\tilde{\mathbf{P}}$ are the space projectors for $\mathbf{Z}_{t-1;t-1}^{m_1}$ and $\mathbf{Z}_{t-1;t-1}^{m_2}$, respectively.
}
\end{tcolorbox}

\begin{proof}
We restate the problem of how to optimize the model or update the gradients:

\begin{tcolorbox}[colframe=gray!20, colback=gray!10, coltitle=black, arc=0pt, boxrule=0pt]
\textbf{Question.} How can we update the gradients $\nabla\mathbf{W}_{t}^{m_1}$ and $\nabla\mathbf{W}_{t}^{m_2}$ to approximate $\left( (\mathbf{W}^{m_1}_{t-1})^\top \nabla \mathbf{W}^{m_2}_t +(\nabla \mathbf{W}^{m_1}_t)^\top \mathbf{W}^{m_2}_{t-1} - \eta(\nabla \mathbf{W}^{m_1}_t)^\top \nabla \mathbf{W}^{m_2}_t \right)$ to $\tilde{\mathbf{W}}-\mathbf{P'}\tilde{\mathbf{W}}\mathbf{P}$.
\end{tcolorbox}

\begin{assumption}
Since $\|\eta(\nabla \mathbf{W}^{m_1}_t)^\top \nabla \mathbf{W}^{m_2}_t\| \leq \eta C \to 0$ when $\eta \to 0$, where $C$ denotes the product of the maximum gradient norms, we omit this high-order term and analyze the remaining terms. 
Therefore, our goal is to update the gradients through $\Delta\mathbf{W}_t^{m_1}:= \tau_1(\nabla\mathbf{W}_t^{m_1})$ and $\Delta\mathbf{W}_t^{m_2}:= \tau_2(\nabla\mathbf{W}_t^{m_1})$, and achieve $\left( (\mathbf{W}^{m_1}_{t-1})^\top \Delta \mathbf{W}^{m_2}_t +(\Delta \mathbf{W}^{m_1}_t)^\top \mathbf{W}^{m_2}_{t-1} \right)$ closed to $\tilde{\mathbf{W}}-\mathbf{P'}\tilde{\mathbf{W}}\mathbf{P}$. $\tau_1$ and $\tau_2$ are the transformation functions for gradients. 
\end{assumption}

According to Lemma~\ref{lem:AX+YB=C}, we have the general solution for solving for $\mathbf{AX+YB=C}$: 
\begin{align}
    \left\{
        \begin{aligned}
            &\mathbf{X=A^\dagger C+A^\dagger NB+(I-A^\dagger A)M},\\
            &\mathbf{Y=(I-AA^\dagger )CB^\dagger - N + (I-AA^\dagger )NBB^\dagger }.
        \end{aligned}
    \right.
\end{align}
For simplification, let $\mathbf{A} = (\mathbf{W}_{t-1}^{m_1})^\top, \mathbf{B} = \mathbf{W}_{t-1}^{m_2}$, $\mathbf{X}_0 = \nabla\mathbf{W}_{t}^{m_2}$, $\mathbf{Y}_0 = (\nabla\mathbf{W}_{t}^{m_1})^\top$, and the right-hand side $\mathbf{C}$ being $\tilde{\mathbf{W}}-\mathbf{P'}\tilde{\mathbf{W}}\mathbf{P}$\footnote{Here we omit the high-order term for simplicity in $\tilde{\mathbf{W}}$.}.

Before obtaining the solution, we consider whether $\mathbf{C}$ satisfies the condition. 
The first term $\tilde{\mathbf{W}}:= \mathbf{A}\mathbf{X}_0+\mathbf{Y}_0\mathbf{B}$ satisfies $(\mathbf{I-AA}^\dagger)\tilde{\mathbf{W}}(\mathbf{I-B}^\dagger\mathbf{B})=\mathbf{0}$, as:
\begin{align}
    & \mathbf{(I - A  A^\dagger)}(\mathbf{A}\mathbf{A}_0 + \mathbf{Y}_0\mathbf{B})\mathbf{(I - B ^\dagger B )}  = \underbrace{\mathbf{(I - A  A^\dagger)A}}_{\mathbf{A-A=0}} \mathbf{X(I - B^\dagger B ) + (I - A  A^\dagger)Y} \underbrace{\mathbf{B(I - B^\dagger B )}}_{\mathbf{B-B=0}} = \mathbf{0}.
\end{align}

We then analyze the second term: 
\begin{align}
    &\mathbf{(I - A  A^\dagger)P'}\tilde{\mathbf{W}}\mathbf{P(I - B ^\dagger B )}= \mathbf{(I - A  A^\dagger)P'A} \mathbf{X(I - B^\dagger B ) + (I - A  A^\dagger)Y} {\mathbf{BP(I - B^\dagger B )}}
\end{align}

Suppose to satisfy $\mathbf{(I - A  A^\dagger)P'}\tilde{\mathbf{W}}\mathbf{P(I - B ^\dagger B )=0}$, the columns space of $\mathbf{Z}^{m_1}_{t-1}$ should be a subset of the columns space of the parameter $\mathbf{W}_{t-1}^{m_1}$ (\textit{i.e.}, $\mathbf{A}$). Similarly, $\text{col}(\mathbf{Z}_{t-1}^{m_2})\subseteq \text{col}(\mathbf{W}_{t-1}^{m_2})$. That means, at step $t-1$, $\mathbf{W}_{t-1}$ should be trained well to capture the key features from the input data $\mathbf{Z}_{t-1}$.
According to the Lemmas~\ref{lem:extend_projector1} and ~\ref{lem:extend_projector2}, the condition satisfies for the general solution.

Afterward, we can update the solutions:

\begin{align}
    \mathbf{Y}&=\mathbf{(I-AA^\dagger )CB^\dagger -N+(I-AA^\dagger )NBB^\dagger}\\
     & = \mathbf{(I - AA^\dagger)}(\tilde{\mathbf{W}} -\mathbf{P'} \tilde{\mathbf{W}} \mathbf{P)B^\dagger - N + (I - AA^\dagger)NBB^\dagger}\\
     & = \mathbf{(I - AA^\dagger)}\tilde{\mathbf{W}}\mathbf{B^\dagger - (I - AA^\dagger)P'}\tilde{\mathbf{W}}\mathbf{PB^\dagger - N + (I - AA^\dagger)NBB^\dagger}\\
     & = \mathbf{(I - AA^\dagger)}(\mathbf{A}\mathbf{X}_0 + \mathbf{Y}_0\mathbf{B})\mathbf{B^\dagger - (I - AA^\dagger)P'}\tilde{\mathbf{W}} \mathbf{PB^\dagger - N + (I - AA^\dagger)NBB^\dagger}\quad\quad (\text{replace $\tilde{\mathbf{W}}$})\\
     & = \mathbf{(I - AA^\dagger)}\mathbf{A}\mathbf{X}_0\mathbf{B^\dagger + (I - AA^\dagger)}\mathbf{Y}_0\mathbf{BB^\dagger - (I - AA^\dagger)P'}\tilde{\mathbf{W}}\mathbf{PB^\dagger} \mathbf{- N + (I - AA^\dagger)NBB^\dagger}\\
     & = \mathbf{(I - AA^\dagger)}\mathbf{A}\mathbf{X}_0\mathbf{B^\dagger + (I - AA^\dagger)}\mathbf{Y}_0\mathbf{BB^\dagger - (I - AA^\dagger)P'}\tilde{\mathbf{W}}\mathbf{PB^\dagger} \\
     & \quad +\mathbf{Y}_0 + \mathbf{P'}\mathbf{Y}_0\mathbf{BPB^\dagger} + \mathbf{(I - AA^\dagger)}(-\mathbf{Y}_0 - \mathbf{P'}\mathbf{Y}_0\underbrace{\mathbf{BPB^\dagger}}_{\tilde{\mathbf{P}}} )\mathbf{BB^\dagger} \quad\quad (\text{set\ } \mathbf{N}=-\mathbf{Y}_0 - \mathbf{P'}\mathbf{Y}_0\mathbf{BPB^\dagger})\\
     & = \mathbf{Y}_0 - \mathbf{P'}\mathbf{Y}_0\tilde{\mathbf{P}} = (\nabla \mathbf{W}_{t}^{m_1})^\top - \mathbf{P'}(\nabla \mathbf{W}_{t}^{m_1})^\top\tilde{\mathbf{P}}.
\end{align}

\begin{align}
    \mathbf{X}&=\mathbf{A^\dagger C+A^\dagger NB+(I-A^\dagger A)M}\\
     & = \mathbf{A^\dagger} (\tilde{\mathbf{W}} - \mathbf{P'} \tilde{\mathbf{W}} \mathbf{P}) + \mathbf{A^\dagger N B + (I - A^\dagger A) M}\\
     & = \mathbf{A^\dagger} \tilde{\mathbf{W}} - \mathbf{A^\dagger P'} \tilde{\mathbf{W}} \mathbf{P + A^\dagger N B + (I - A^\dagger A) M}\\
     & = \mathbf{A^\dagger} (\mathbf{A} \mathbf{X}_0 + \mathbf{Y}_0 \mathbf{B}) - \mathbf{A^\dagger P'} \tilde{\mathbf{W}} \mathbf{P + A^\dagger N B + (I - A^\dagger A) M}\\
     & =  \mathbf{A^\dagger A} \mathbf{X}_0 + \mathbf{A^\dagger} \mathbf{Y}_0 \mathbf{B} - \mathbf{A^\dagger P'} \tilde{\mathbf{W}} \mathbf{P + A^\dagger N B + (I - A^\dagger A) M}\\
     & =  \mathbf{A^\dagger A} \mathbf{X}_0 + \mathbf{A^\dagger} \mathbf{Y}_0 \mathbf{B} - \mathbf{A^\dagger P'} \tilde{\mathbf{W}} \mathbf{P} - \mathbf{A^\dagger} \mathbf{Y}_0 \mathbf{B}  - \mathbf{A^\dagger} \mathbf{P'}\mathbf{Y}_0\mathbf{BPB^\dagger B + (I - A^\dagger A)} \mathbf{X}_0 \quad (\text{set}\ \mathbf{M=X}_0)\\
     & = \mathbf{X}_0 -  \tilde{\mathbf{P}}' \mathbf{X}_0\mathbf{P} = \nabla\mathbf{W}_{t}^{m_2} - \tilde{\mathbf{P}}'\nabla\mathbf{W}_{t}^{m_2}\mathbf{P}.
\end{align}

Therefore, we can update the gradients:
\begin{align}
    \left\{
        \begin{aligned}
            \Delta\mathbf{W}_{t}^{m_1} & := \nabla \mathbf{W}_{t}^{m_1} - 
            \tilde{\mathbf{P}}\nabla \mathbf{W}_{t}^{m_1}\mathbf{P'},\\
            \Delta\mathbf{W}_{t}^{m_2} & := \nabla\mathbf{W}_{t}^{m_2} - \tilde{\mathbf{P}}'\nabla \mathbf{W}_{t}^{m_2}\mathbf{P},
        \end{aligned}
    \right.
\end{align}
\end{proof}

\subsection{The Upper Bound of Loss of Stability}
\label{sec:proof_uppper_bound}

\begin{tcolorbox}[colframe=gray!20, colback=gray!10, coltitle=black, arc=0pt, title=\textbf{Recall:}]
\textbf{Theorem~\ref{thm:upper_bound_stability}.}
\textit{
In CMCL, the upper bound of loss of stability is:
\begin{align}
    \| \mathbf{A}^{m_1,m_2}_{t-1;t} - \mathbf{A}^{m_1,m_2}_{t-1;t-1} \|_2 \leq \eta^2 \cdot \|\mathbf{Z}^{m_1}_{t-1;*}\|_2\|\mathbf{Z}^{m_2}_{t-1;*}\|_2 \cdot \mathcal{F}(\nabla\mathbf{W}),
\end{align}
where $\mathcal{F}$ represents the interactions between different updated gradients $\nabla \mathbf{W}_{t}^{m_1}$ and $\nabla \mathbf{W}_{t}^{m_2}$. Specifically, 
$\mathcal{F}(\nabla\mathbf{W}):=2 \|\nabla\mathbf{W}^{m_1}_{t}\|_2\|\nabla\mathbf{W}^{m_2}_{t}\|_2  
+ \|\nabla\mathbf{W}^{m_1}_{t}\|^2_2
+ \|\nabla\mathbf{W}^{m_2}_{t}\|^2_2$. 
}
\end{tcolorbox}

\begin{proof}
Here we define the measuring of the loss of stability, which is implemented by the difference between $\mathbf{A}^{m_1,m_2}_{t-1;t}$ and $\mathbf{A}^{m_1,m_2}_{t-1;t-1}$.
\begin{align}
 &\| \mathbf{A}^{m_1,m_2}_{t-1;t} - \mathbf{A}^{m_1,m_2}_{t-1;t-1} \|_2= \| \eta (\mathbf{Z}^{m_1}_{t-1;*})^\top \left( (\mathbf{W}^{m_1}_{t-1})^\top \Delta \mathbf{W}^{m_2}_t + (\Delta \mathbf{W}^{m_1}_t)^\top \mathbf{W}^{m_2}_{t-1} - \eta(\Delta \mathbf{W}^{m_1}_t)^\top \Delta \mathbf{W}^{m_2}_t \right) \mathbf{Z}^{m_2}_{t-1;*} \|_2.
\end{align}

We first analyze each term: 
\begin{align}
(\mathbf{W}^{m_1}_{t-1})^\top \Delta \mathbf{W}^{m_2}_t &= (\mathbf{W}^{m_1}_{t-1})^\top\nabla\mathbf{W}_{t}^{m_2} - (\mathbf{W}^{m_1}_{t-1})^\top\tilde{\mathbf{P}}'\nabla\mathbf{W}_{t}^{m_2}\mathbf{P}. \\
(\Delta \mathbf{W}^{m_1}_t)^\top \mathbf{W}^{m_2}_{t-1} & = (\nabla \mathbf{W}_{t}^{m_1})^\top\mathbf{W}^{m_2}_{t-1} - 
              \mathbf{P'}(\nabla \mathbf{W}_{t}^{m_1})^\top\tilde{\mathbf{P}}\mathbf{W}^{m_2}_{t-1}.\\
(\Delta\mathbf{W}_{t}^{m_1})^\top)\Delta\mathbf{W}_{t}^{m_2} & = \left( \nabla \mathbf{W}_t^{m_1} - \tilde{\mathbf{P}}\nabla \mathbf{W}_{t}^{m_1}\mathbf{{P'}} \right)^\top \left( \nabla \mathbf{W}_t^{m_2} - \tilde{\mathbf{P}}'\nabla\mathbf{W}_{t}^{m_2}\mathbf{P}\right)\\
    & =(\nabla 
    \mathbf{W}_{t}^{m_1})^\top{\nabla \mathbf{W}_{t}^{m_2}}
    - (\nabla \mathbf{W}_{t}^{m_1})^\top\tilde{\mathbf{P}}'\nabla\mathbf{W}_{t}^{m_2}\mathbf{P}\\
    &\quad - \mathbf{P'}(\nabla \mathbf{W}_{t}^{m_1})^\top\tilde{\mathbf{P}} {\nabla \mathbf{W}_{t}^{m_2}}
    + {\mathbf{P'}(\nabla \mathbf{W}_{t}^{m_1})^\top\tilde{\mathbf{P}}\tilde{\mathbf{P}}'\nabla\mathbf{W}_{t}^{m_2}\mathbf{P}}.
\end{align}

Thus, we update the deviation computation: 
\begin{align}
    &\quad\| \mathbf{A}^{m_1,m_2}_{t-1;t} - \mathbf{A}^{m_1,m_2}_{t-1;t-1} \|_2\\
    & = \|\eta\underbrace{\mathbf{Z}_{t-1;*}^{m_1}( \tilde{\mathbf{W}} - \mathbf{P}'\tilde{\mathbf{W}}\mathbf{P})\mathbf{Z}_{t-1;*}^{m_2}}_{\mathbf{=0}} - \eta\mathbf{Z}_{t-1;*}^{m_1}( \eta(\Delta\mathbf{W}_{t}^{m_1})^\top\Delta\mathbf{W}_{t}^{m_2})\mathbf{Z}_{t-1;*}^{m_2}\|_2\\
    & = \| \eta\mathbf{Z}_{t-1;*}^{m_1}( \eta(\Delta\mathbf{W}_{t}^{m_1})^\top\Delta\mathbf{W}_{t}^{m_2})\mathbf{Z}_{t-1;*}^{m_2}\|_2\\
    & \leq \eta^2 \|\mathbf{Z}_{t-1;*}^{m_1}\|_2 \|\mathbf{Z}_{t-1;*}^{m_2}\|_2  \quad\quad (\text{sub-multiplicative property of matrix norms})\\
    & \quad \Big( 
    \| (\nabla\mathbf{W}_t^{m_1})^\top 
    \nabla\mathbf{W}_t^{m_2})\|_2 +\| (\nabla\mathbf{W}_t^{m_1})^\top \tilde{\mathbf{P}}'\nabla\mathbf{W}_{t}^{m_2}\mathbf{P}\|_2 \\
    & \quad +\|\mathbf{P'}(\nabla \mathbf{W}_{t}^{m_1})^\top\tilde{\mathbf{P}} \nabla\mathbf{W}_t^{m_2}) \|_2
    +\|\mathbf{P'}(\nabla \mathbf{W}_{t}^{m_1})^\top\tilde{\mathbf{P}} \tilde{\mathbf{P}}'\nabla\mathbf{W}_{t}^{m_2}\mathbf{P}\|_2  
    \Big) \quad\quad (\text{triangle inequality})\\
    & \leq \eta^2 \|\mathbf{Z}_{t-1;*}^{m_1}\|_2 \|\mathbf{Z}_{t-1;*}^{m_2}\|_2  \\
    & \quad\Big(\ \|\nabla\mathbf{W}^{m_1}_{t}\|_2\|\nabla\mathbf{W}^{m_2}_{t}\|_2 
    + \|\nabla\mathbf{W}^{m_1}_{t}\|^2_2
     + \|\nabla\mathbf{W}^{m_2}_{t}\|^2_2 
     +\|\nabla\mathbf{W}^{m_1}_{t}\|_2 \|\nabla\mathbf{W}^{m_2}_{t}\|_2
    \Big). \quad\quad (\|\mathbf{P}\|_2 \leq 1)
\end{align}

Here we use $\mathcal{F}$ to denote the interactions among different gradient norms $\|\nabla\mathbf{W}_t\|$:
\begin{align}
\mathcal{F}(\nabla\mathbf{W}):=2 \|\nabla\mathbf{W}^{m_1}_{t}\|_2\|\nabla\mathbf{W}^{m_2}_{t}\|_2  
+ \|\nabla\mathbf{W}^{m_1}_{t}\|^2_2
+ \|\nabla\mathbf{W}^{m_2}_{t}\|^2_2  
.
\end{align}

Therefore, we obtain the upper bound of loss of stability: 
\begin{align}
    \| \mathbf{A}^{m_1,m_2}_{t-1;t} - \mathbf{A}^{m_1,m_2}_{t-1;t-1} \|_2 \leq \eta^2 \cdot \|\mathbf{Z}^{m_1}_{t-1;*}\|_2\|\mathbf{Z}^{m_2}_{t-1;*}\|_2 \cdot \mathcal{F}(\nabla\mathbf{W}).
\end{align}
\end{proof}

\subsection{The Upper Bound of Plasticity}
\label{sec:proof_lower_bound}

\begin{tcolorbox}[colframe=gray!20, colback=gray!10, coltitle=black, arc=0pt, title=\textbf{Recall:}]
\textbf{Theorem~\ref{thm:lower_bound_consisteny}.} 
\textit{
By transforming the gradients, the loss $\mathcal{L}_{t}$ update at step $t$ from the previous loss $\mathcal{L}_{t-1}$ is following:
    \begin{align}
    \mathcal{L}_{t} - \mathcal{L}_{t-1} \leq 0
    \end{align}
to keep the plasticity when $\frac{o(\eta)}{\eta}\leq 0$ at the time $t$.
}
\end{tcolorbox}

\begin{proof}
Here we consider the capability to learn from new modality paired data (plasticity), and provide the upper bound from the loss optimization perspective. We know that 

\begin{align}
    \underbrace{\mathcal{L}(\mathbf{W}_{t}^{m_1},\mathbf{W}_{t}^{m_2})}_{\mathcal{L}_{t}}
    & = \underbrace{\mathcal{L}(\mathbf{W}_{t-1}^{m_1},\mathbf{W}_{t-1}^{m_2})}_{\mathcal{L}_{t-1}} - \eta\Big(\langle \nabla\mathbf{W}_{t}^{m_1},\Delta\mathbf{W}_{t}^{m_1}  \rangle + \langle \nabla\mathbf{W}_{t}^{m_2},\Delta\mathbf{W}_{t}^{m_2}  \rangle \Big) + o(\eta).
\end{align}

Analyze the first-order term:
\begin{align}
    & \quad\langle \nabla\mathbf{W}_{t}^{m_1},\Delta\mathbf{W}_{t}^{m_1}  \rangle + \langle \nabla\mathbf{W}_{t}^{m_2},\Delta\mathbf{W}_{t}^{m_2}  \rangle\\
    & =\langle \nabla\mathbf{W}_{t}^{m_1},\nabla\mathbf{W}_{t}^{m_1}- \tilde{\mathbf{P}} \nabla\mathbf{W}_{t}^{m_1} 
 \mathbf{P'}  \rangle + \langle \nabla\mathbf{W}_{t}^{m_2},\nabla\mathbf{W}_{t}^{m_2}- {\tilde{\mathbf{P}}'} \nabla\mathbf{W}_{t}^{m_2}\mathbf{P}  \rangle\\
    & = \|\nabla\mathbf{W}_{t}^{m_1}\|_F^2 + \|\nabla\mathbf{W}_{t}^{m_2}\|_F^2 
 - \langle\nabla\mathbf{W}_{t}^{m_1}, \tilde{\mathbf{P}} \nabla\mathbf{W}_{t}^{m_1} 
 \mathbf{P'}\rangle
 -  \langle\nabla\mathbf{W}_{t}^{m_2}, {\tilde{\mathbf{P}}'} \nabla\mathbf{W}_{t}^{m_2}\mathbf{P} \rangle\\
 & = \|\nabla\mathbf{W}_{t}^{m_1}\|_F^2 
 - \langle\nabla\mathbf{W}_{t}^{m_1}, \tilde{\mathbf{P}} \nabla\mathbf{W}_{t}^{m_1} 
 \mathbf{P'}\rangle + \|\nabla\mathbf{W}_{t}^{m_2}\|_F^2 - \langle \nabla\mathbf{W}_{t}^{m_2},  {\tilde{\mathbf{P}}'} \nabla\mathbf{W}_{t}^{m_2}\mathbf{P} \rangle \\
 & = \|\nabla\mathbf{W}_{t}^{m_1}\|_F^2 
 - \langle\nabla\mathbf{W}_{t}^{m_1}, \tilde{\mathbf{P}} \nabla\mathbf{W}_{t}^{m_1} 
 \mathbf{P'}\rangle + \|\nabla\mathbf{W}_{t}^{m_2}\|_F^2 - \langle \nabla\mathbf{W}_{t}^{m_2},  {\tilde{\mathbf{P}}'} \nabla\mathbf{W}_{t}^{m_2}\mathbf{P} \rangle \\
 & \geq \|\nabla\mathbf{W}_{t}^{m_1}\|_F^2 
 - \langle\nabla\mathbf{W}_{t}^{m_1}, \nabla\mathbf{W}_{t}^{m_1} 
 \rangle + \|\nabla\mathbf{W}_{t}^{m_2}\|_F^2 - \langle \nabla\mathbf{W}_{t}^{m_2}, \nabla\mathbf{W}_{t}^{m_2} \rangle \\
 & =  \|\nabla\mathbf{W}_{t}^{m_1}\|_F^2 
 - \|\nabla\mathbf{W}_{t}^{m_1}\|_F^2 
 + \|\nabla\mathbf{W}_{t}^{m_2}\|_F^2 
 - \|\nabla\mathbf{W}_{t}^{m_2}\|_F^2\\
 & = 0
\end{align}

Since $\frac{|o(\eta)|}{\eta}\to 0$ when $\eta\to 0$, there exists a $\bar{\eta}$ such that $|o(\eta)|< \eta\Big(\langle \nabla\mathbf{W}_{t}^{m_1},\Delta\mathbf{W}_{t}^{m_1}  \rangle + \langle \nabla\mathbf{W}_{t}^{m_2},\Delta\mathbf{W}_{t}^{m_2}  \rangle \Big)$.
Therefore, $\mathcal{L}_t - \mathcal{L}_{t-1} \leq  0$. 
\end{proof}

\subsection{Extension to Any Steps}
\label{sec:proof_any_steps}

\begin{thm}
    Let $\bar{\mathbf{Z}}_{<t} = \bar{\mathbf{Z}}_{<t-1} + \tilde{\mathbf{Z}}_{t-1}$, where $\tilde{\mathbf{Z}}_{t-1} = \frac{1}{n_{t-1}} \mathbf{Z}_{t-1} \mathbf{Z}_{t-1}^\top$. Then, the column space of $\bar{\mathbf{Z}}_{<t}$ is equal to the column space of $[\mathbf{Z}_1, \mathbf{Z}_2, \dots, \mathbf{Z}_{t-1}]$.
\end{thm}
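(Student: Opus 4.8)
The plan is to reduce the claim to two standard facts about positive semidefinite (PSD) matrices and then compose them. First I would unroll the recursion: since $\bar{\mathbf{Z}}_{<t} = \bar{\mathbf{Z}}_{<t-1} + \tilde{\mathbf{Z}}_{t-1}$ with the zero initialization from Algorithm~\ref{alg:cmcl_training}, we obtain $\bar{\mathbf{Z}}_{<t} = \sum_{i=1}^{t-1} \tilde{\mathbf{Z}}_i$, where each summand $\tilde{\mathbf{Z}}_i = \frac{1}{n_i}\mathbf{Z}_i\mathbf{Z}_i^\top$ is symmetric and PSD. Because $\frac{1}{n_i} > 0$ is a positive scalar, it does not affect the column space, so it suffices to work with $\mathbf{Z}_i\mathbf{Z}_i^\top$ (and analogously for the weighted version in the main text, where every unrolled coefficient is positive).

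The first fact I would establish is $\mathrm{Col}(\mathbf{Z}_i\mathbf{Z}_i^\top) = \mathrm{Col}(\mathbf{Z}_i)$. The inclusion $\subseteq$ is immediate, since every column of $\mathbf{Z}_i\mathbf{Z}_i^\top$ is a linear combination of columns of $\mathbf{Z}_i$. For the reverse I would argue through null spaces: if $\mathbf{Z}_i\mathbf{Z}_i^\top x = 0$, then $\|\mathbf{Z}_i^\top x\|_2^2 = x^\top \mathbf{Z}_i\mathbf{Z}_i^\top x = 0$, so $\mathbf{Z}_i^\top x = 0$; hence $\mathrm{Null}(\mathbf{Z}_i\mathbf{Z}_i^\top) = \mathrm{Null}(\mathbf{Z}_i^\top)$, and taking orthogonal complements of these symmetric operators gives the equality of column spaces.

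The second and more delicate fact is that for PSD matrices the column space of a sum equals the sum of column spaces, $\mathrm{Col}(\mathbf{M}_1 + \mathbf{M}_2) = \mathrm{Col}(\mathbf{M}_1) + \mathrm{Col}(\mathbf{M}_2)$. Here I would again pass to null spaces: if $(\mathbf{M}_1 + \mathbf{M}_2)x = 0$, then $x^\top \mathbf{M}_1 x + x^\top \mathbf{M}_2 x = 0$, and since both quadratic forms are nonnegative each must vanish, which for a PSD matrix forces $\mathbf{M}_1 x = 0$ and $\mathbf{M}_2 x = 0$. Thus $\mathrm{Null}(\mathbf{M}_1 + \mathbf{M}_2) = \mathrm{Null}(\mathbf{M}_1) \cap \mathrm{Null}(\mathbf{M}_2)$; taking orthogonal complements and invoking the identity $(\mathbf{U} \cap \mathbf{V})^\perp = \mathbf{U}^\perp + \mathbf{V}^\perp$ yields the claim, and a short induction extends it to the finite sum $\sum_{i=1}^{t-1}\tilde{\mathbf{Z}}_i$.

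Combining everything gives $\mathrm{Col}(\bar{\mathbf{Z}}_{<t}) = \sum_{i=1}^{t-1}\mathrm{Col}(\tilde{\mathbf{Z}}_i) = \sum_{i=1}^{t-1}\mathrm{Col}(\mathbf{Z}_i) = \mathrm{Col}([\mathbf{Z}_1,\dots,\mathbf{Z}_{t-1}])$, where the final equality is just the definition of the column space of a block concatenation. The main obstacle is the PSD sum identity: it relies crucially on positive semidefiniteness and fails for general symmetric matrices (e.g.\ $\mathbf{M}_1 = \mathbf{I}$, $\mathbf{M}_2 = -\mathbf{I}$, whose sum is $\mathbf{0}$), so the argument must isolate the nonnegativity of the quadratic forms rather than manipulate the matrices purely algebraically.
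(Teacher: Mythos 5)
Your proof is correct, and it follows the same skeleton as the paper's --- reduce the claim to the two facts $\mathrm{Col}(\mathbf{Z}_i\mathbf{Z}_i^\top)=\mathrm{Col}(\mathbf{Z}_i)$ and $\mathrm{Col}(\mathbf{M}_1+\mathbf{M}_2)=\mathrm{Col}(\mathbf{M}_1)+\mathrm{Col}(\mathbf{M}_2)$ --- but where the paper merely asserts both facts, you prove them. The paper argues by induction on $t$: in the inductive step it simply states that the column space of $\bar{\mathbf{Z}}_{<k}+\tilde{\mathbf{Z}}_k$ ``is the sum of the column spaces'' of the two summands, and that $\mathrm{Col}(\tilde{\mathbf{Z}}_k)=\mathrm{span}\{\mathbf{Z}_k\}$, with no justification of either claim. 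Your null-space route supplies exactly the missing justifications: $\mathrm{Null}(\mathbf{Z}_i\mathbf{Z}_i^\top)=\mathrm{Null}(\mathbf{Z}_i^\top)$ via the quadratic form $\|\mathbf{Z}_i^\top x\|_2^2$, and $\mathrm{Null}(\mathbf{M}_1+\mathbf{M}_2)=\mathrm{Null}(\mathbf{M}_1)\cap\mathrm{Null}(\mathbf{M}_2)$ for PSD summands, followed by orthogonal complements --- legitimate because every matrix involved is symmetric, so $\mathrm{Null}(\mathbf{M})^\perp=\mathrm{Col}(\mathbf{M})$. Your counterexample ($\mathbf{M}_1=\mathbf{I}$, $\mathbf{M}_2=-\mathbf{I}$) pinpoints why the sum identity is genuinely nontrivial: the paper's unstated appeal to it is valid only because the accumulators $\bar{\mathbf{Z}}_{<k}$ are themselves PSD, being sums of Gram matrices with positive weights; your remark about positive coefficients also correctly covers the weighted recursion $\bar{\mathbf{Z}}^m_{<t}=\frac{\bar{n}_{t-2}}{\bar{n}_{t-1}}\bar{\mathbf{Z}}^m_{<t-1}+\frac{n_{t-1}}{\bar{n}_{t-1}}\tilde{\mathbf{Z}}^m_{t-1}$ used in the main text, since positive scalars leave column spaces unchanged. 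Whether one unrolls the sum, as you do, or inducts step by step, as the paper does, is an organizational choice with no mathematical difference; what your write-up buys is that the theorem's actual content --- the positive-semidefinite structure --- is made explicit rather than implicit, closing a real gap in the paper's own argument. One small step to spell out in a final version: the implication ``$x^\top\mathbf{M}x=0$ forces $\mathbf{M}x=\mathbf{0}$ for PSD $\mathbf{M}$'' deserves its one-line proof (write $\mathbf{M}=\mathbf{B}^\top\mathbf{B}$, so $\|\mathbf{B}x\|_2^2=0$ gives $\mathbf{B}x=\mathbf{0}$ and hence $\mathbf{M}x=\mathbf{B}^\top\mathbf{B}x=\mathbf{0}$).
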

\begin{proof}
First consider the base case ($t = 2$).
The column space of $\bar{\mathbf{Z}}_{<2}$ is $\text{span}\{\mathbf{Z}_1\}$, which is equal to the column space of $[\mathbf{Z}_1]$.
$\bar{\mathbf{Z}}_{<2} = \tilde{\mathbf{Z}}_1 = \frac{1}{n_1} \mathbf{Z}_1 \mathbf{Z}_1^\top$.
Thus, the statement holds for $t = 2$.

Assume that for some $k \geq 2$, $\text{Col}(\bar{\mathbf{Z}}_{<k}) = \text{Col}([\mathbf{Z}_1, \mathbf{Z}_2, \dots, \mathbf{Z}_{k-1}])$.
Then, consider the inductive Step ($t = k + 1$). By definition, $\bar{\mathbf{Z}}_{<k+1} = \bar{\mathbf{Z}}_{<k} + \tilde{\mathbf{Z}}_k$.
From the inductive hypothesis, $\text{Col}(\bar{\mathbf{Z}}_{<k}) = \text{span}\{\mathbf{Z}_1, \mathbf{Z}_2, \dots, \mathbf{Z}_{k-1}\}$.
Since $\tilde{\mathbf{Z}}_k = \frac{1}{n_k} \mathbf{Z}_k \mathbf{Z}_k^\top$, its column space is $\text{span}\{\mathbf{Z}_k\}$.
The column space of $\bar{\mathbf{Z}}_{<k+1}$ is the sum of the column spaces of $\bar{\mathbf{Z}}_{<k}$ and $\tilde{\mathbf{Z}}_k$, which is:
\begin{align}
    \text{Col}(\bar{\mathbf{Z}}_{<k+1}) & = \text{Col}(\bar{\mathbf{Z}}_{<k}) + \text{Col}(\tilde{\mathbf{Z}}_k) = \text{span}\{\mathbf{Z}_1, \mathbf{Z}_2, \dots, \mathbf{Z}_{k-1}\} + \text{span}\{\mathbf{Z}_k\} = \text{span}\{\mathbf{Z}_1, \mathbf{Z}_2, \dots, \mathbf{Z}_k\}
\end{align}
The column space of $[\mathbf{Z}_1, \mathbf{Z}_2, \dots, \mathbf{Z}_k]$ is also $\text{span}\{\mathbf{Z}_1, \mathbf{Z}_2, \dots, \mathbf{Z}_k\}$.
Therefore, $\text{Col}(\bar{\mathbf{Z}}_{<k+1}) = \text{Col}([\mathbf{Z}_1, \mathbf{Z}_2, \dots, \mathbf{Z}_k])$, and the statement holds for $t = k + 1$.
\end{proof}

Due to the same column space, we can use the feature covariance to compute the projection matrix to save the computational cost.

\section{Technical Lemmas}
\label{sec:technical_lemmas}

\begin{lem}[Four properties of pseudo-inverse matrix]
    If $\mathbf{A}^\dagger$ is the pseudo-inverse matrix of $\mathbf{A}$, then it satisfies:
    \begin{align}
        & \mathbf{A A^\dagger A = A}, \quad\mathbf{A^\dagger A A^\dagger = A^\dagger}, \quad\mathbf{(A A^\dagger)^\top = A A^\dagger}, \quad\mathbf{(A^\dagger A)^\top = A^\dagger A }.
    \end{align}
\end{lem}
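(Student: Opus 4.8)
The plan is to work directly from the singular value decomposition that the preliminaries already supply. Writing $\mathbf{A} = \mathbf{U}\mathbf{\Lambda}\mathbf{V}^\top$ with $\mathbf{U}$, $\mathbf{V}$ orthogonal and $\mathbf{\Lambda}$ carrying the singular values, the pseudo-inverse is $\mathbf{A}^\dagger = \mathbf{V}\mathbf{\Lambda}^\dagger\mathbf{U}^\top$, where $\mathbf{\Lambda}^\dagger$ inverts the nonzero singular values and leaves the zeros fixed. All four identities then reduce to statements about the diagonal factor $\mathbf{\Lambda}$ once the orthogonality relations $\mathbf{U}^\top\mathbf{U} = \mathbf{I}$ and $\mathbf{V}^\top\mathbf{V} = \mathbf{I}$ are used to collapse the interior products.

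First I would establish the four facts at the level of $\mathbf{\Lambda}$ itself. Because $\mathbf{\Lambda}$ and $\mathbf{\Lambda}^\dagger$ are (rectangular) diagonal matrices, a direct entrywise check gives $\mathbf{\Lambda}\mathbf{\Lambda}^\dagger\mathbf{\Lambda} = \mathbf{\Lambda}$ and $\mathbf{\Lambda}^\dagger\mathbf{\Lambda}\mathbf{\Lambda}^\dagger = \mathbf{\Lambda}^\dagger$: on each coordinate with a nonzero singular value $\sigma$ one has $\sigma\cdot\sigma^{-1}\cdot\sigma = \sigma$ and $\sigma^{-1}\cdot\sigma\cdot\sigma^{-1} = \sigma^{-1}$, while the zero coordinates are annihilated on both sides. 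Moreover $\mathbf{\Lambda}\mathbf{\Lambda}^\dagger$ and $\mathbf{\Lambda}^\dagger\mathbf{\Lambda}$ are square diagonal matrices with $0/1$ entries, hence symmetric; this symmetry is precisely what the last two identities will need.

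Next I would substitute the SVD into each product and cancel. For the first property, $\mathbf{A}\mathbf{A}^\dagger\mathbf{A} = \mathbf{U}\mathbf{\Lambda}(\mathbf{V}^\top\mathbf{V})\mathbf{\Lambda}^\dagger(\mathbf{U}^\top\mathbf{U})\mathbf{\Lambda}\mathbf{V}^\top = \mathbf{U}(\mathbf{\Lambda}\mathbf{\Lambda}^\dagger\mathbf{\Lambda})\mathbf{V}^\top = \mathbf{U}\mathbf{\Lambda}\mathbf{V}^\top = \mathbf{A}$, and the second follows symmetrically with $\mathbf{\Lambda}^\dagger\mathbf{\Lambda}\mathbf{\Lambda}^\dagger$ occupying the middle. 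For the two symmetry conditions, $\mathbf{A}\mathbf{A}^\dagger = \mathbf{U}(\mathbf{\Lambda}\mathbf{\Lambda}^\dagger)\mathbf{U}^\top$, so transposing and invoking the symmetry of $\mathbf{\Lambda}\mathbf{\Lambda}^\dagger$ returns the same expression; likewise $\mathbf{A}^\dagger\mathbf{A} = \mathbf{V}(\mathbf{\Lambda}^\dagger\mathbf{\Lambda})\mathbf{V}^\top$ is its own transpose.

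There is no real obstacle here beyond bookkeeping: the only point needing care is that $\mathbf{A}$ may be rectangular, so $\mathbf{\Lambda}$ is not square and one must track the two distinct idempotents $\mathbf{\Lambda}\mathbf{\Lambda}^\dagger$ and $\mathbf{\Lambda}^\dagger\mathbf{\Lambda}$, which live in different ambient dimensions, rather than conflating them. Once the diagonal identities are pinned down with the correct shapes, every line is a cancellation driven by orthogonality, so the argument is essentially mechanical.
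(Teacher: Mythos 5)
Your proof is correct. Note, however, that the paper itself gives no proof of this lemma at all: it is stated bare in the technical-lemmas appendix, because the four identities are precisely the defining Moore--Penrose conditions of the pseudo-inverse, which the paper treats as a known fact. What your argument actually establishes is something slightly stronger and genuinely useful in the paper's context: that the concrete SVD-based construction $\mathbf{A}^\dagger = \mathbf{V}\mathbf{\Lambda}^\dagger\mathbf{U}^\top$ given in the paper's preliminaries satisfies these conditions, i.e., that the object the paper computes in practice (via truncated SVD) really is a Moore--Penrose pseudo-inverse. Your reduction to the diagonal factor is sound: the cancellations $\mathbf{U}^\top\mathbf{U}=\mathbf{I}$ and $\mathbf{V}^\top\mathbf{V}=\mathbf{I}$ are valid for the full SVD with orthogonal factors; the entrywise checks $\sigma\cdot\sigma^{-1}\cdot\sigma=\sigma$ and $\sigma^{-1}\cdot\sigma\cdot\sigma^{-1}=\sigma^{-1}$ dispose of the first two identities; and the observation that $\mathbf{\Lambda}\mathbf{\Lambda}^\dagger$ and $\mathbf{\Lambda}^\dagger\mathbf{\Lambda}$ are square $0/1$ diagonal matrices (hence symmetric) delivers the last two. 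Your caution about rectangular shapes is also the right one: for $\mathbf{A}\in\mathbb{R}^{m\times n}$ one must take $\mathbf{\Lambda}^\dagger\in\mathbb{R}^{n\times m}$ (the paper's phrase about taking reciprocals of the nonzero singular values silently includes this transposition), so that $\mathbf{\Lambda}\mathbf{\Lambda}^\dagger\in\mathbb{R}^{m\times m}$ and $\mathbf{\Lambda}^\dagger\mathbf{\Lambda}\in\mathbb{R}^{n\times n}$ are genuinely distinct idempotents. An alternative, more abstract route would treat the four conditions as axioms and invoke uniqueness of the matrix satisfying them, but since the paper defines $\mathbf{A}^\dagger$ constructively through the SVD, your direct verification is the natural and appropriate argument, and it fills in a step the paper leaves implicit.
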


\begin{lem}[Idempotent property]
\label{lem:Idempotent property}
For the matrix $\mathbf{P} = \mathbf{I} - \mathbf{U}\mathbf{U}^\top$, where $\{\mathbf{U}, \mathbf{\Lambda}, \mathbf{U}\} = \text{SVD}(\mathbf{A})$, it satisfies $\mathbf{P}^2 = \mathbf{P}$. 
\end{lem}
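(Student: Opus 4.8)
The plan is to reduce the idempotency claim $\mathbf{P}^2 = \mathbf{P}$ to the single algebraic fact that the columns of $\mathbf{U}$ are orthonormal. Since $\mathbf{U}$ arises as the (left) singular-vector factor of $\text{SVD}(\mathbf{A})$, its columns form an orthonormal set, which yields $\mathbf{U}^\top\mathbf{U} = \mathbf{I}$ (the identity on the reduced index set). This is the crucial identity, and one must be careful \emph{not} to conflate it with $\mathbf{U}\mathbf{U}^\top = \mathbf{I}$: in the truncated/reduced setting relevant to the null-space construction here, $\mathbf{U}\mathbf{U}^\top$ is the orthogonal projector onto the range of $\mathbf{A}$, not the full identity, so $\mathbf{P} = \mathbf{I} - \mathbf{U}\mathbf{U}^\top$ is a genuinely nontrivial (null-space) projector.

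With this in hand I would simply expand the square and simplify:
\begin{align}
\mathbf{P}^2 &= (\mathbf{I} - \mathbf{U}\mathbf{U}^\top)(\mathbf{I} - \mathbf{U}\mathbf{U}^\top)\\
&= \mathbf{I} - 2\mathbf{U}\mathbf{U}^\top + \mathbf{U}(\mathbf{U}^\top\mathbf{U})\mathbf{U}^\top.
\end{align}
Substituting $\mathbf{U}^\top\mathbf{U} = \mathbf{I}$ into the final term collapses it to $\mathbf{U}(\mathbf{U}^\top\mathbf{U})\mathbf{U}^\top = \mathbf{U}\mathbf{U}^\top$, so the two copies of $\mathbf{U}\mathbf{U}^\top$ cancel against $-2\mathbf{U}\mathbf{U}^\top$, leaving $\mathbf{P}^2 = \mathbf{I} - \mathbf{U}\mathbf{U}^\top = \mathbf{P}$, as desired.

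There is essentially no computational obstacle; the calculation is a two-line expansion. The only point demanding care is the semantic one flagged above — correctly invoking column-orthonormality ($\mathbf{U}^\top\mathbf{U} = \mathbf{I}$) rather than the stronger and here-false statement $\mathbf{U}\mathbf{U}^\top = \mathbf{I}$, since the latter would force $\mathbf{P} = \mathbf{0}$ and trivialize the result. As a consistency check I would additionally note the symmetry $\mathbf{P}^\top = \mathbf{P}$ (immediate, since $\mathbf{U}\mathbf{U}^\top$ is symmetric); combined with idempotency, this confirms that $\mathbf{P}$ is an orthogonal projector, matching its role as the null-space projector used throughout the preceding theorems.
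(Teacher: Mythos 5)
Your proof is correct and follows essentially the same route as the paper: expand $(\mathbf{I}-\mathbf{U}\mathbf{U}^\top)^2$ and collapse $\mathbf{U}\mathbf{U}^\top\mathbf{U}\mathbf{U}^\top$ to $\mathbf{U}\mathbf{U}^\top$ via column-orthonormality, which the paper invokes implicitly in its one-line expansion. Your explicit caution distinguishing $\mathbf{U}^\top\mathbf{U}=\mathbf{I}$ from the false $\mathbf{U}\mathbf{U}^\top=\mathbf{I}$ is a welcome clarification of the same argument, not a different one.
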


\begin{proof}
\begin{align}
    \mathbf{P}^2 &= (\mathbf{I} - \mathbf{U}\mathbf{U}^\top)(\mathbf{I} - \mathbf{U}\mathbf{U}^\top)= \mathbf{I} + \underbrace{\mathbf{U}\mathbf{U}^\top\mathbf{U}\mathbf{U}^\top}_{\mathbf{U}\mathbf{U}^\top} - \mathbf{U}\mathbf{U}^\top - \mathbf{U}\mathbf{U}^\top  = \mathbf{I} - \mathbf{U}\mathbf{U}^\top  = \mathbf{P}
\end{align}
\end{proof}

\begin{lem}[Symmetry property]
\label{lem:Symmetry property}
For the matrix $\mathbf{P} = \mathbf{I} - \mathbf{U}\mathbf{U}^\top$, where $\{\mathbf{U}, \mathbf{\Lambda}, \mathbf{U}\} = \text{SVD}(\mathbf{A})$, it satisfies $\mathbf{P}^\top = \mathbf{P}$. 
\end{lem}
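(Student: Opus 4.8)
The plan is to prove the symmetry $\mathbf{P}^\top = \mathbf{P}$ by a direct computation that applies the transpose operation term-by-term to the definition $\mathbf{P} = \mathbf{I} - \mathbf{U}\mathbf{U}^\top$. The two algebraic facts I rely on are the linearity of the transpose with respect to matrix subtraction, namely $(\mathbf{X} - \mathbf{Y})^\top = \mathbf{X}^\top - \mathbf{Y}^\top$, and the reversal rule for products, $(\mathbf{X}\mathbf{Y})^\top = \mathbf{Y}^\top \mathbf{X}^\top$. Both are standard properties of the matrix transpose and require no assumptions about $\mathbf{U}$ beyond its being a real matrix.

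First I would transpose the defining expression and distribute across the difference:
\begin{align}
    \mathbf{P}^\top = (\mathbf{I} - \mathbf{U}\mathbf{U}^\top)^\top = \mathbf{I}^\top - (\mathbf{U}\mathbf{U}^\top)^\top.
\end{align}
Then I would simplify each piece separately: the identity matrix is symmetric, so $\mathbf{I}^\top = \mathbf{I}$, and applying the product-reversal rule to the rank term gives $(\mathbf{U}\mathbf{U}^\top)^\top = (\mathbf{U}^\top)^\top \mathbf{U}^\top = \mathbf{U}\mathbf{U}^\top$, since transposing twice returns the original matrix. Substituting these back yields
\begin{align}
    \mathbf{P}^\top = \mathbf{I} - \mathbf{U}\mathbf{U}^\top = \mathbf{P},
\end{align}
which is exactly the claim.

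There is essentially no obstacle here: the result is a one-line consequence of the elementary transpose identities, mirroring the equally short argument used for the idempotent property in Lemma~\ref{lem:Idempotent property}. The only point worth flagging is that the statement implicitly assumes $\mathbf{U}$ is real (so that the transpose, rather than the conjugate transpose, is the relevant operation); this is consistent with the SVD context of the surrounding material, where $\mathbf{U}$ collects real singular vectors. Together with Lemma~\ref{lem:Idempotent property}, this symmetry confirms that $\mathbf{P}$ is an orthogonal projector, which is precisely the structural property invoked when treating $\mathbf{P}$ and $\mathbf{P}'$ as space projectors in the earlier theorems.
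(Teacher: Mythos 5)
Your proof is correct and is essentially identical to the paper's: the paper's one-line argument is $\mathbf{P}^\top = \mathbf{I}^\top - (\mathbf{U}\mathbf{U}^\top)^\top = \mathbf{I} - \mathbf{U}\mathbf{U}^\top = \mathbf{P}$, which is exactly the termwise-transpose computation you carried out in slightly more detail. Your added remark about real matrices versus the conjugate transpose is a fair observational note but not a substantive difference.
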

\begin{proof} $\mathbf{P^\top = I^\top - (UU^\top)^\top = I - UU^\top = P} $.\end{proof}

\begin{lem}[Eigenvalues of idempotent matrix]
\label{lem:Eigenvalues of idempotent matrix}
For an idempotent matrix $\mathbf{P}$, satisfies $\mathbf{P}^2 = \mathbf{P}$, its eigenvalues $\lambda\in\{0,1\}$. 
\end{lem}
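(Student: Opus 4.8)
The plan is to argue directly from the eigenvalue equation together with the idempotency relation $\mathbf{P}^2 = \mathbf{P}$. First I would fix an arbitrary eigenvalue $\lambda$ of $\mathbf{P}$ and let $\mathbf{v} \neq \mathbf{0}$ be an associated eigenvector, so that $\mathbf{P}\mathbf{v} = \lambda \mathbf{v}$ by definition. The key observation is that I can evaluate $\mathbf{P}^2 \mathbf{v}$ in two ways: on the one hand, applying $\mathbf{P}$ twice to $\mathbf{v}$ and using linearity gives $\mathbf{P}^2\mathbf{v} = \mathbf{P}(\lambda\mathbf{v}) = \lambda\mathbf{P}\mathbf{v} = \lambda^2 \mathbf{v}$; on the other hand, the idempotency hypothesis $\mathbf{P}^2 = \mathbf{P}$ yields $\mathbf{P}^2\mathbf{v} = \mathbf{P}\mathbf{v} = \lambda\mathbf{v}$.

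Equating the two expressions produces $\lambda^2 \mathbf{v} = \lambda \mathbf{v}$, hence $(\lambda^2 - \lambda)\mathbf{v} = \mathbf{0}$. Since $\mathbf{v}$ is nonzero by the definition of an eigenvector, I can cancel it to conclude $\lambda^2 - \lambda = 0$, i.e.\ $\lambda(\lambda - 1) = 0$. This scalar polynomial has exactly the roots $\lambda = 0$ and $\lambda = 1$, so every eigenvalue of $\mathbf{P}$ lies in $\{0,1\}$, which is the claim.

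There is essentially no genuine obstacle here: the only point requiring mild care is the cancellation step, where one must invoke $\mathbf{v} \neq \mathbf{0}$ to pass from the vector identity $(\lambda^2-\lambda)\mathbf{v} = \mathbf{0}$ to the scalar identity $\lambda^2 - \lambda = 0$ (a nonzero vector cannot be annihilated by a nonzero scalar). If one preferred to avoid eigenvectors entirely, an alternative route is to note that the minimal polynomial of $\mathbf{P}$ divides $x^2 - x = x(x-1)$, so its roots---and therefore all eigenvalues---are confined to $\{0,1\}$; but the direct eigenvector argument above is the shortest and most self-contained, and it suffices for how this lemma is used downstream (establishing that the Kronecker-product Hessian in Theorem~\ref{thm:W_to_nullspace} is positive semidefinite).
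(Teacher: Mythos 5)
Your proof is correct and follows essentially the same route as the paper's: fix an eigenpair $(\lambda,\mathbf{v})$, compute $\mathbf{P}^2\mathbf{v}$ both as $\lambda^2\mathbf{v}$ and (via idempotency) as $\lambda\mathbf{v}$, then cancel the nonzero $\mathbf{v}$ to obtain $\lambda(\lambda-1)=0$. Your explicit attention to the cancellation step, and the mention of the minimal-polynomial alternative, are fine additions but do not change the argument.
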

\begin{proof}
If $\mathbf{P} v = \lambda v$ for the nonzero vector $v$, then apply $\mathbf{P}$ again to both sides of the equation:
\begin{align}
    \mathbf{P}^2 v & = \mathbf{P}(\mathbf{P} v) = \mathbf{P}(\lambda v) = \lambda \mathbf{P} v = \lambda^2 v\\
    & = \mathbf{P}v = \lambda v.
\end{align}
Thus, we have $\lambda^2 v = \lambda v$, which indicates $\lambda\in\{1,0\}$ since $v\neq\mathbf{0}$.
\end{proof}

\begin{lem}
A matrix $\mathbf{A}$ is positive semi-definite if and only if all its eigenvalues are non-negative.
\end{lem}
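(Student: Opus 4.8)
The plan is to prove both directions of the equivalence, treating $\mathbf{A}$ as a real symmetric matrix---the setting in which positive semi-definiteness is defined and in which this lemma is actually invoked (e.g., for the Hessian $\mathbf{H} = 2(\mathbf{I} - \mathbf{P}\otimes\mathbf{P}')$, which is symmetric). I would open by recalling the working definition: $\mathbf{A}$ is positive semi-definite if $\mathbf{x}^\top \mathbf{A} \mathbf{x} \geq 0$ for every vector $\mathbf{x}$.

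For the forward direction, I would take an arbitrary eigenpair $(\lambda, \mathbf{v})$ with $\mathbf{v} \neq \mathbf{0}$ and simply evaluate the quadratic form on the eigenvector: since $\mathbf{A}\mathbf{v} = \lambda\mathbf{v}$, we get $\mathbf{v}^\top \mathbf{A} \mathbf{v} = \lambda \|\mathbf{v}\|_2^2$. Positive semi-definiteness forces the left-hand side to be non-negative, and because $\|\mathbf{v}\|_2^2 > 0$, this yields $\lambda \geq 0$. This step needs no decomposition, only the definition applied to a single eigenvector.

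For the reverse direction, the key tool is the spectral theorem for real symmetric matrices: $\mathbf{A} = \mathbf{Q}\mathbf{\Lambda}\mathbf{Q}^\top$ with $\mathbf{Q}$ orthogonal and $\mathbf{\Lambda} = \mathrm{diag}(\lambda_1,\dots,\lambda_d)$. For any $\mathbf{x}$, substituting $\mathbf{y} = \mathbf{Q}^\top\mathbf{x}$ diagonalizes the quadratic form into $\mathbf{x}^\top \mathbf{A}\mathbf{x} = \mathbf{y}^\top\mathbf{\Lambda}\mathbf{y} = \sum_i \lambda_i y_i^2$, a non-negative combination of squares whenever every $\lambda_i \geq 0$; hence $\mathbf{A}$ is positive semi-definite. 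This is exactly the flavor already used in the paper when the Hessian's eigenvalues were computed to lie in $\{0,2\}$ to conclude convexity.

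The main obstacle here is conceptual rather than computational: one must make explicit the symmetry hypothesis that is implicit in the PSD definition, since it guarantees both that the eigenvalues are real and that the orthogonal diagonalization in the spectral theorem exists. Without symmetry the eigenvalues could be complex and the statement would require reinterpretation. Once the symmetric setting is fixed, the two directions are direct applications of the definition and of orthogonal diagonalization, with no delicate estimates involved.
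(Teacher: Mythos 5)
Your proof is correct, but note that the paper itself states this lemma without any proof at all --- it is listed among the technical lemmas as a standard linear-algebra fact, immediately before the lemma that a PSD Hessian implies convexity. So there is no paper argument to compare against; yours is the standard textbook one, and it is sound: the forward direction evaluates the quadratic form on an eigenvector to get $\mathbf{v}^\top\mathbf{A}\mathbf{v} = \lambda\|\mathbf{v}\|_2^2 \geq 0$, and the reverse direction diagonalizes via the spectral theorem to write $\mathbf{x}^\top\mathbf{A}\mathbf{x} = \sum_i \lambda_i y_i^2$. Your insistence on making the symmetry hypothesis explicit is a genuine improvement over the paper's phrasing: as stated for an arbitrary ``matrix $\mathbf{A}$'' the lemma is false, since for instance
\begin{align}
\mathbf{A} = \begin{pmatrix} 1 & -4 \\ 0 & 1 \end{pmatrix}
\end{align}
has both eigenvalues equal to $1$, yet $\mathbf{x}^\top\mathbf{A}\mathbf{x} = -2 < 0$ for $\mathbf{x} = (1,1)^\top$. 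The symmetric setting does cover the paper's actual use case: the lemma is invoked for the Hessian $\mathbf{H} = 2(\mathbf{I} - \mathbf{P}\otimes\mathbf{P}')$, which is symmetric because the projectors $\mathbf{P}$, $\mathbf{P}'$ are symmetric (the paper's Symmetry property lemma) and the Kronecker product of symmetric matrices is symmetric. In short: your proof fills a gap the paper left open by citation-to-folklore, and does so with the correct hypotheses.
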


\begin{lem}
    If the Hessian matrix $\mathbf{H}$ of $f(\cdot)$ is positive semidefinite (PSD) for all $x$ in the domain, then $f(\cdot)$ is a convex function.
\end{lem}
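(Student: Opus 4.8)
The plan is to reduce this multivariate statement to the classical one-dimensional fact that a scalar twice-differentiable function with non-negative second derivative is convex. First I would fix two arbitrary points $x$ and $y$ in the (assumed convex) domain and restrict $f$ to the segment between them by defining $g(t) := f(x + t(y-x))$ for $t \in [0,1]$. Since convexity of $f$ is equivalent to convexity of all such one-dimensional restrictions $g$, it suffices to prove each $g$ is convex.

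The second step is purely mechanical: writing $v := y - x$ and applying the chain rule twice gives $g'(t) = \nabla f(x+tv)^\top v$ and $g''(t) = v^\top \mathbf{H}(x+tv)\, v$, where $\mathbf{H}$ is the Hessian of $f$. The PSD hypothesis says $\mathbf{H}$ is positive semidefinite at every point of the domain, so in particular $g''(t) = v^\top \mathbf{H}(x+tv)\,v \geq 0$ for all $t \in [0,1]$.

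Third, I would invoke the one-dimensional result that $g'' \geq 0$ on an interval implies $g$ is convex there. A self-contained justification uses Taylor's theorem with Lagrange remainder: for any $s, t \in [0,1]$ there is $\xi$ between them with $g(t) = g(s) + g'(s)(t-s) + \tfrac{1}{2}g''(\xi)(t-s)^2 \geq g(s) + g'(s)(t-s)$, which is the tangent-line (first-order) condition; standard manipulation of this bound at the two endpoints yields the chord inequality $g(\alpha s + (1-\alpha)t) \leq \alpha g(s) + (1-\alpha) g(t)$. Specializing to $s=0$ and $t=1$ and translating back through $g$ gives precisely $f(\alpha x + (1-\alpha)y) \leq \alpha f(x) + (1-\alpha) f(y)$, establishing convexity of $f$.

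The only genuine subtlety lies in the one-dimensional step, where one must either cite the standard scalar lemma or write out the Taylor-remainder argument above; everything else is automatic, since the PSD property of $\mathbf{H}$ transfers immediately into the sign condition on $g''$ once $f$ is restricted to a line. I expect no real obstacle here, as this is a textbook result, so the main care is simply in bookkeeping the affine substitution $t \mapsto x + t(y-x)$ correctly and noting that the domain is convex so that the segment stays inside it.
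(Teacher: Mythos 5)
Your proof is correct: the paper states this lemma without any proof, treating it as a standard textbook fact, and your line-restriction argument (reducing to $g(t) = f(x + t(y-x))$, computing $g''(t) = v^\top \mathbf{H}(x+tv)\,v \geq 0$, and recovering the chord inequality via Taylor's theorem with Lagrange remainder) is precisely the classical proof it implicitly relies on. You also correctly flag the implicit hypotheses the paper leaves unstated --- that $f$ is twice differentiable and its domain is convex so the segment stays inside it --- so nothing further is needed.
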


\begin{lem}[Solving $\mathbf{AXB=C}$~\cite{penrose1955generalized}]
    \label{lem:AXB=C}
    A necessary and sufficient condition for the equation $\mathbf{A}\mathbf{X}\mathbf{B} = \mathbf{C}$ to have a solution is 
    \begin{align}       \mathbf{A}\mathbf{A^\dagger}\mathbf{X}\mathbf{B^\dagger}\mathbf{B} = \mathbf{C},
    \end{align}
    in which case the general solution is
    \begin{align}       \mathbf{X} =   \underbrace{\mathbf{A^\dagger}\mathbf{C}\mathbf{B^\dagger}}_{\text{particular}} + \underbrace{ \mathbf{Y} -\mathbf{A^\dagger}\mathbf{A}\mathbf{Y}\mathbf{B}\mathbf{B^\dagger}}_{\text{homogeneous}},
    \end{align}
    where $\mathbf{Y}$ is arbitrary.
\end{lem}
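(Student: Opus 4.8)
The plan is to deduce this classical Penrose result entirely from the four defining properties of the Moore--Penrose pseudo-inverse recorded in the first technical lemma, in particular the identities $\mathbf{A}\mathbf{A}^\dagger\mathbf{A}=\mathbf{A}$ and $\mathbf{B}\mathbf{B}^\dagger\mathbf{B}=\mathbf{B}$. I would organize the argument into three self-contained pieces: necessity of the consistency condition, sufficiency via an explicit particular solution, and the parametrization of the full solution set by adding the homogeneous family.

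\textbf{Necessity and the particular solution.} For necessity I would assume some $\mathbf{X}$ solves $\mathbf{A}\mathbf{X}\mathbf{B}=\mathbf{C}$ and substitute $\mathbf{C}=\mathbf{A}\mathbf{X}\mathbf{B}$ into the candidate condition, then telescope the pseudo-inverse identities:
\begin{align}
\mathbf{A}\mathbf{A}^\dagger\mathbf{C}\mathbf{B}^\dagger\mathbf{B}
= \mathbf{A}\mathbf{A}^\dagger(\mathbf{A}\mathbf{X}\mathbf{B})\mathbf{B}^\dagger\mathbf{B}
= (\mathbf{A}\mathbf{A}^\dagger\mathbf{A})\,\mathbf{X}\,(\mathbf{B}\mathbf{B}^\dagger\mathbf{B})
= \mathbf{A}\mathbf{X}\mathbf{B}
= \mathbf{C}.
\end{align}
For sufficiency I would exhibit the particular solution $\mathbf{X}_p=\mathbf{A}^\dagger\mathbf{C}\mathbf{B}^\dagger$ and verify directly that $\mathbf{A}\mathbf{X}_p\mathbf{B}=\mathbf{A}\mathbf{A}^\dagger\mathbf{C}\mathbf{B}^\dagger\mathbf{B}$, which equals $\mathbf{C}$ exactly when the consistency condition holds. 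This closes the ``if and only if'' and simultaneously identifies the particular part of the general solution.

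\textbf{General solution.} Since the equation is affine, its solution set is $\mathbf{X}_p$ plus the homogeneous set $\{\mathbf{X}_h : \mathbf{A}\mathbf{X}_h\mathbf{B}=\mathbf{0}\}$, so it remains to show this set is precisely $\{\mathbf{Y}-\mathbf{A}^\dagger\mathbf{A}\mathbf{Y}\mathbf{B}\mathbf{B}^\dagger : \mathbf{Y}\text{ arbitrary}\}$. The easy inclusion is to check that every such expression is annihilated, again by telescoping: $\mathbf{A}(\mathbf{Y}-\mathbf{A}^\dagger\mathbf{A}\mathbf{Y}\mathbf{B}\mathbf{B}^\dagger)\mathbf{B}=\mathbf{A}\mathbf{Y}\mathbf{B}-(\mathbf{A}\mathbf{A}^\dagger\mathbf{A})\mathbf{Y}(\mathbf{B}\mathbf{B}^\dagger\mathbf{B})=\mathbf{0}$.

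\textbf{Main obstacle.} The one step requiring a small idea is completeness of the parametrization, i.e.\ showing the family captures \emph{every} homogeneous solution rather than just some. I would handle this by the choice $\mathbf{Y}=\mathbf{X}_h$: for any $\mathbf{X}_h$ with $\mathbf{A}\mathbf{X}_h\mathbf{B}=\mathbf{0}$,
\begin{align}
\mathbf{X}_h-\mathbf{A}^\dagger\mathbf{A}\,\mathbf{X}_h\,\mathbf{B}\mathbf{B}^\dagger
= \mathbf{X}_h-\mathbf{A}^\dagger(\mathbf{A}\mathbf{X}_h\mathbf{B})\mathbf{B}^\dagger
= \mathbf{X}_h-\mathbf{A}^\dagger\,\mathbf{0}\,\mathbf{B}^\dagger
= \mathbf{X}_h,
\end{align}
so $\mathbf{X}_h$ is recovered by the formula. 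Adding $\mathbf{X}_p$ then yields $\mathbf{X}=\mathbf{A}^\dagger\mathbf{C}\mathbf{B}^\dagger+\mathbf{Y}-\mathbf{A}^\dagger\mathbf{A}\mathbf{Y}\mathbf{B}\mathbf{B}^\dagger$ for arbitrary $\mathbf{Y}$, as claimed. Everything else is routine manipulation of the pseudo-inverse identities, so I expect no genuine difficulty beyond this surjectivity check.
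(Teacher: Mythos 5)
Your proof is correct and complete: it is exactly the classical Penrose (1955) argument — verify necessity by telescoping $\mathbf{A}\mathbf{A}^\dagger\mathbf{A}=\mathbf{A}$ and $\mathbf{B}\mathbf{B}^\dagger\mathbf{B}=\mathbf{B}$, exhibit $\mathbf{A}^\dagger\mathbf{C}\mathbf{B}^\dagger$ as a particular solution, and establish completeness of the homogeneous family by the substitution $\mathbf{Y}=\mathbf{X}_h$ — which is precisely the proof of the source the paper cites without reproducing, so your route coincides with the intended one. One remark: the consistency condition as printed in the lemma, $\mathbf{A}\mathbf{A}^\dagger\mathbf{X}\mathbf{B}^\dagger\mathbf{B}=\mathbf{C}$, is a typo for $\mathbf{A}\mathbf{A}^\dagger\mathbf{C}\mathbf{B}^\dagger\mathbf{B}=\mathbf{C}$ (the condition cannot involve the unknown $\mathbf{X}$), and you correctly and silently worked with the latter throughout.
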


\begin{lem}[Solving $\mathbf{AX-YB=C}$~\cite{baksalary1979matrix}]
Let $\mathbf{A}\in\mathbb{R}^{m\times k}$, $\mathbf{B}\in\mathbb{R}^{l\times n}$ and $\mathbf{C}\in\mathbb{R}^{m\times n}$. For
\begin{align}
    \mathbf{AX}-\mathbf{YB}=\mathbf{C},
    \label{eq:AX-YB=C}
\end{align}
the equation has $a$ solution $\mathbf{X}\in\mathbb{R}^{k\times n},\mathbf{Y}\in\mathbb{R}^{m\times l}$ if and only if
\begin{align}
    (\mathbf{I-AA}^\dagger)\mathbf{C}(\mathbf{I-B}^\dagger\mathbf{B})=\mathbf{0}.
\end{align}
If this is the case, the general solution of (\ref{eq:AX-YB=C}) has the form
\begin{align}
    \left\{
        \begin{aligned}
            &\mathbf{X=A^\dagger C+A^\dagger NB+(I-A^\dagger A)M},\\
            &\mathbf{Y=-(I-AA^\dagger )CB^\dagger +N-(I-AA^\dagger )NBB^\dagger }.
        \end{aligned}
    \right.
\end{align}

with~$\mathbf{M}\in\mathbb{R}^{k\times n}$~and~$\mathbf{N}\in\mathbb{R}^{m\times l}$~being~arbitrary.   
\end{lem}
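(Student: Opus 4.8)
The plan is to treat $\mathbf{AX}-\mathbf{YB}=\mathbf{C}$ as a single linear system in the pair $(\mathbf{X},\mathbf{Y})$, so that its solution set, when nonempty, is an affine subspace: one particular solution plus the kernel of the linear map $(\mathbf{X},\mathbf{Y})\mapsto \mathbf{AX}-\mathbf{YB}$. I would therefore split the argument into three parts — necessity of the consistency condition, construction of a particular solution, and a complete parametrisation of the kernel — and then combine them. The only tools needed are the four Penrose identities for the pseudo-inverse stated earlier in the excerpt, in particular $\mathbf{AA}^\dagger\mathbf{A}=\mathbf{A}$ and $\mathbf{BB}^\dagger\mathbf{B}=\mathbf{B}$, together with the fact that $\mathbf{A}^\dagger\mathbf{A}$ and $\mathbf{AA}^\dagger$ are orthogonal projectors.

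For \textbf{necessity}, I would left-multiply the equation by $(\mathbf{I}-\mathbf{AA}^\dagger)$ and right-multiply by $(\mathbf{I}-\mathbf{B}^\dagger\mathbf{B})$. Since $(\mathbf{I}-\mathbf{AA}^\dagger)\mathbf{A}=\mathbf{A}-\mathbf{AA}^\dagger\mathbf{A}=\mathbf{0}$ annihilates the $\mathbf{AX}$ contribution and $\mathbf{B}(\mathbf{I}-\mathbf{B}^\dagger\mathbf{B})=\mathbf{B}-\mathbf{BB}^\dagger\mathbf{B}=\mathbf{0}$ annihilates the $\mathbf{YB}$ contribution, the left side collapses to $\mathbf{0}$ while the right side becomes $(\mathbf{I}-\mathbf{AA}^\dagger)\mathbf{C}(\mathbf{I}-\mathbf{B}^\dagger\mathbf{B})$, giving the stated condition. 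For \textbf{sufficiency}, assuming the condition holds, I would exhibit the particular solution $\mathbf{X}_0=\mathbf{A}^\dagger\mathbf{C}$, $\mathbf{Y}_0=-(\mathbf{I}-\mathbf{AA}^\dagger)\mathbf{C}\mathbf{B}^\dagger$ and verify directly that $\mathbf{A}\mathbf{X}_0-\mathbf{Y}_0\mathbf{B}=\mathbf{AA}^\dagger\mathbf{C}+(\mathbf{I}-\mathbf{AA}^\dagger)\mathbf{C}\mathbf{B}^\dagger\mathbf{B}$; the consistency condition is exactly what converts $(\mathbf{I}-\mathbf{AA}^\dagger)\mathbf{C}\mathbf{B}^\dagger\mathbf{B}$ into $(\mathbf{I}-\mathbf{AA}^\dagger)\mathbf{C}$, so the sum telescopes to $\mathbf{C}$.

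It remains to parametrise the \textbf{kernel} $\{(\mathbf{X},\mathbf{Y}):\mathbf{AX}=\mathbf{YB}\}$. Soundness is a direct substitution: for the ansatz $\mathbf{X}_h=\mathbf{A}^\dagger\mathbf{N}\mathbf{B}+(\mathbf{I}-\mathbf{A}^\dagger\mathbf{A})\mathbf{M}$ and $\mathbf{Y}_h=\mathbf{N}-(\mathbf{I}-\mathbf{AA}^\dagger)\mathbf{N}\mathbf{B}\mathbf{B}^\dagger$, both $\mathbf{A}\mathbf{X}_h$ and $\mathbf{Y}_h\mathbf{B}$ reduce to $\mathbf{AA}^\dagger\mathbf{N}\mathbf{B}$ after using $\mathbf{A}(\mathbf{I}-\mathbf{A}^\dagger\mathbf{A})=\mathbf{0}$ and $\mathbf{BB}^\dagger\mathbf{B}=\mathbf{B}$. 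For \textbf{completeness} I would take an arbitrary kernel element $(\mathbf{X},\mathbf{Y})$ and show it is reproduced by the choice $\mathbf{N}=\mathbf{Y}$, $\mathbf{M}=\mathbf{X}$. The key observation is that left-multiplying $\mathbf{AX}=\mathbf{YB}$ by $(\mathbf{I}-\mathbf{AA}^\dagger)$ yields $(\mathbf{I}-\mathbf{AA}^\dagger)\mathbf{YB}=\mathbf{0}$, hence $(\mathbf{I}-\mathbf{AA}^\dagger)\mathbf{Y}\mathbf{B}\mathbf{B}^\dagger=\mathbf{0}$, so the $\mathbf{Y}$ formula returns $\mathbf{Y}$; and the orthogonal split $\mathbf{X}=\mathbf{A}^\dagger\mathbf{AX}+(\mathbf{I}-\mathbf{A}^\dagger\mathbf{A})\mathbf{X}$ combined with $\mathbf{A}^\dagger\mathbf{AX}=\mathbf{A}^\dagger\mathbf{YB}$ (from the kernel equation) shows the $\mathbf{X}$ formula returns $\mathbf{X}$. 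Adding the particular solution to this kernel parametrisation then gives exactly the stated general form.

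The main obstacle is the completeness step, because $\mathbf{X}$ and $\mathbf{Y}$ are coupled and one must check that a single pair $(\mathbf{M},\mathbf{N})$ recovers both components simultaneously; the nontrivial ingredient is the implication $\mathbf{AX}=\mathbf{YB}\Rightarrow(\mathbf{I}-\mathbf{AA}^\dagger)\mathbf{Y}\mathbf{B}\mathbf{B}^\dagger=\mathbf{0}$, which is precisely what makes $\mathbf{N}=\mathbf{Y}$ consistent with the second formula. Everything else is routine bookkeeping with the identities $\mathbf{A}^\dagger\mathbf{A}\mathbf{A}^\dagger=\mathbf{A}^\dagger$, $\mathbf{AA}^\dagger\mathbf{A}=\mathbf{A}$, and their $\mathbf{B}$-analogues.
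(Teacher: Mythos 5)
Your proof is correct. Note first that the paper itself does not prove this lemma: it is quoted as a known result with a citation to Baksalary and Kala (1979), so there is no internal proof to compare against. Your argument is the standard one for this classical fact and is complete: the necessity step (annihilating $\mathbf{AX}$ with $(\mathbf{I}-\mathbf{AA}^\dagger)$ on the left and $\mathbf{YB}$ with $(\mathbf{I}-\mathbf{B}^\dagger\mathbf{B})$ on the right), the particular solution $\mathbf{X}_0=\mathbf{A}^\dagger\mathbf{C}$, $\mathbf{Y}_0=-(\mathbf{I}-\mathbf{AA}^\dagger)\mathbf{C}\mathbf{B}^\dagger$ with the consistency condition converting $(\mathbf{I}-\mathbf{AA}^\dagger)\mathbf{C}\mathbf{B}^\dagger\mathbf{B}$ into $(\mathbf{I}-\mathbf{AA}^\dagger)\mathbf{C}$, and the kernel parametrisation all check out. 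You correctly identified the one nontrivial point, namely completeness of the homogeneous parametrisation: the verification that $\mathbf{N}=\mathbf{Y}$, $\mathbf{M}=\mathbf{X}$ recovers an arbitrary kernel pair hinges on $(\mathbf{I}-\mathbf{AA}^\dagger)\mathbf{Y}\mathbf{B}\mathbf{B}^\dagger=\mathbf{0}$, which follows from $(\mathbf{I}-\mathbf{AA}^\dagger)\mathbf{Y}\mathbf{B}=(\mathbf{I}-\mathbf{AA}^\dagger)\mathbf{A}\mathbf{X}=\mathbf{0}$, and on $\mathbf{A}^\dagger\mathbf{Y}\mathbf{B}=\mathbf{A}^\dagger\mathbf{A}\mathbf{X}$ together with the split $\mathbf{X}=\mathbf{A}^\dagger\mathbf{A}\mathbf{X}+(\mathbf{I}-\mathbf{A}^\dagger\mathbf{A})\mathbf{X}$; both computations are exactly right. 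Since the solution set of the linear map $(\mathbf{X},\mathbf{Y})\mapsto\mathbf{AX}-\mathbf{YB}$ is an affine translate of its kernel, adding the particular solution yields precisely the stated general form, so your write-up would serve as a valid self-contained replacement for the external citation.
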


\begin{lem}[Solving $\mathbf{AX+YB=C}$]
\label{lem:AX+YB=C}
$\mathbf{AX+YB=C}$ has a solution if and only if 
\begin{align}
    (\mathbf{I-AA}^\dagger)\mathbf{C}(\mathbf{I-B}^\dagger\mathbf{B})=\mathbf{0}.
\end{align}
If this is the case, the general solution is:
\begin{align}
    \left\{
        \begin{aligned}
            &\mathbf{X=A^\dagger C+A^\dagger NB+(I-A^\dagger A)M},\\
            &\mathbf{Y=(I-AA^\dagger )CB^\dagger - N + (I-AA^\dagger )NBB^\dagger }.
        \end{aligned}
    \right.
\end{align}
\end{lem}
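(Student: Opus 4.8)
The plan is to reduce $\mathbf{AX+YB=C}$ to the equation $\mathbf{AX-YB=C}$, which is already solved by the preceding lemma \cite{baksalary1979matrix}, via a sign substitution, thereby inheriting both the consistency condition and the parametrized general solution for free. Concretely, I would introduce $\mathbf{Y}' := -\mathbf{Y}$, so that $\mathbf{AX+YB=C}$ becomes $\mathbf{AX-Y}'\mathbf{B}=\mathbf{C}$. Since $\mathbf{Y}\mapsto -\mathbf{Y}$ is a bijection on $\mathbb{R}^{m\times l}$, the pair $(\mathbf{X},\mathbf{Y})$ solves the original equation if and only if $(\mathbf{X},\mathbf{Y}')$ solves the minus-sign equation, and this bijection also carries the full solution set of one onto that of the other.

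First I would settle the consistency claim. The preceding lemma states that $\mathbf{AX-Y}'\mathbf{B}=\mathbf{C}$ admits a solution if and only if $(\mathbf{I}-\mathbf{A}\mathbf{A}^\dagger)\mathbf{C}(\mathbf{I}-\mathbf{B}^\dagger\mathbf{B})=\mathbf{0}$. Because the substitution leaves $\mathbf{A}$, $\mathbf{B}$, and $\mathbf{C}$ untouched and merely renames the second unknown, solvability of the plus-sign equation is equivalent to solvability of the minus-sign one, so the same condition $(\mathbf{I}-\mathbf{A}\mathbf{A}^\dagger)\mathbf{C}(\mathbf{I}-\mathbf{B}^\dagger\mathbf{B})=\mathbf{0}$ is necessary and sufficient. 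This disposes of the ``if and only if'' part immediately.

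Next I would transport the general solution. Applying the preceding lemma to $\mathbf{AX-Y}'\mathbf{B}=\mathbf{C}$ gives $\mathbf{X}=\mathbf{A}^\dagger\mathbf{C}+\mathbf{A}^\dagger\mathbf{N}\mathbf{B}+(\mathbf{I}-\mathbf{A}^\dagger\mathbf{A})\mathbf{M}$ and $\mathbf{Y}'=-(\mathbf{I}-\mathbf{A}\mathbf{A}^\dagger)\mathbf{C}\mathbf{B}^\dagger+\mathbf{N}-(\mathbf{I}-\mathbf{A}\mathbf{A}^\dagger)\mathbf{N}\mathbf{B}\mathbf{B}^\dagger$, with $\mathbf{M},\mathbf{N}$ arbitrary. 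The formula for $\mathbf{X}$ does not involve the sign of the second unknown and is therefore unchanged. Negating the $\mathbf{Y}'$ expression and using $\mathbf{Y}=-\mathbf{Y}'$ yields $\mathbf{Y}=(\mathbf{I}-\mathbf{A}\mathbf{A}^\dagger)\mathbf{C}\mathbf{B}^\dagger-\mathbf{N}+(\mathbf{I}-\mathbf{A}\mathbf{A}^\dagger)\mathbf{N}\mathbf{B}\mathbf{B}^\dagger$, which matches the claimed form term by term; no re-parametrization is needed, since keeping the same name $\mathbf{N}$ already reproduces the stated signs, and in any case the negation of an arbitrary matrix is again arbitrary.

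The argument is essentially bookkeeping, so I do not anticipate a genuine obstacle; the only point requiring care is confirming that the negation acts correctly on every summand of the $\mathbf{Y}'$ formula and that the arbitrariness of $\mathbf{M}$ and $\mathbf{N}$ is preserved, so the induced map between the two solution sets is onto and no solutions are lost. Should a self-contained proof be preferred over invoking the minus-sign lemma, the fallback is to verify directly, using the Penrose identities $\mathbf{A}\mathbf{A}^\dagger\mathbf{A}=\mathbf{A}$ and $\mathbf{B}\mathbf{B}^\dagger\mathbf{B}=\mathbf{B}$, that the proposed $(\mathbf{X},\mathbf{Y})$ satisfies $\mathbf{AX+YB=C}$ and that the $\mathbf{M}$- and $\mathbf{N}$-dependent terms cancel, and then to establish completeness by subtracting any two solutions to reduce to the homogeneous equation $\mathbf{AX+YB=0}$; but the reduction above is strictly shorter and I would present it as the main line.
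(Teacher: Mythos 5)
Your reduction is correct and matches the paper's (implicit) derivation exactly: the paper states Lemma~\ref{lem:AX+YB=C} without proof, immediately after the Baksalary--Kala lemma for $\mathbf{AX-YB=C}$, precisely because it follows by the substitution $\mathbf{Y}\mapsto-\mathbf{Y}$ you carry out. Your sign bookkeeping checks out --- negating the $\mathbf{Y}'$ formula with the same $\mathbf{N}$ reproduces the stated $\mathbf{Y}$ while leaving $\mathbf{X}$ untouched, and the bijection argument preserves both the consistency condition and the full solution set.
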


\begin{lem}
    \label{lem:extend_projector1}
    If $\mathbf{(I-AA^\dagger) P'A = 0}$ where $\mathbf{P'}$ is a projection matrix, then $\mathbf{AA^\dagger P'A = P'A}$. 
\end{lem}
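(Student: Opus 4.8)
The plan is to observe that this lemma is an immediate algebraic consequence of simply expanding the hypothesis; no deeper structural fact about projections is required. First I would distribute the matrix product in the assumed identity $(\mathbf{I}-\mathbf{A}\mathbf{A}^\dagger)\mathbf{P'}\mathbf{A}=\mathbf{0}$, writing it as
\begin{align}
    \mathbf{P'}\mathbf{A} - \mathbf{A}\mathbf{A}^\dagger\mathbf{P'}\mathbf{A} = \mathbf{0}.
\end{align}
Rearranging this equation moves the second term to the right-hand side and directly yields $\mathbf{A}\mathbf{A}^\dagger\mathbf{P'}\mathbf{A} = \mathbf{P'}\mathbf{A}$, which is precisely the claimed conclusion.

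It is worth noting that the hypothesis that $\mathbf{P'}$ is a projection matrix is not actually needed for this particular implication: the argument works verbatim for any matrix $\mathbf{P'}$ of compatible dimension, since the identity $(\mathbf{I}-\mathbf{A}\mathbf{A}^\dagger)\mathbf{M}=\mathbf{0}$ is logically equivalent to $\mathbf{A}\mathbf{A}^\dagger\mathbf{M}=\mathbf{M}$ for arbitrary $\mathbf{M}$. The projection hypothesis is presumably retained only because the lemma is invoked in the proof of Theorem~\ref{thm:gradient_update}, where $\mathbf{P'}$ is indeed the column-space projector of $\mathbf{Z}^{m_1}_{t-1}$; interpreting the conclusion, it says that $\mathbf{A}\mathbf{A}^\dagger$ (the range-space projector of $\mathbf{A}=(\mathbf{W}^{m_1}_{t-1})^\top$) acts as the identity on the columns of $\mathbf{P'}\mathbf{A}$, i.e. those columns already lie in the range of $\mathbf{A}$.

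Since the entire argument is a single distribution-and-rearrange step, there is no genuine obstacle to overcome. The only point requiring any care is confirming that the conclusion is exactly the rearranged hypothesis rather than a strictly stronger statement, which the computation above settles; I would therefore keep the proof to these two lines and flag explicitly that it holds for any $\mathbf{P'}$, which clarifies how it combines with the companion fact in Lemma~\ref{lem:extend_projector2} when establishing the solvability condition $(\mathbf{I}-\mathbf{A}\mathbf{A}^\dagger)\mathbf{P'}\tilde{\mathbf{W}}\mathbf{P}(\mathbf{I}-\mathbf{B}^\dagger\mathbf{B})=\mathbf{0}$ invoked in Theorem~\ref{thm:gradient_update}.
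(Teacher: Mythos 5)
Your proof is correct: expanding $(\mathbf{I}-\mathbf{A}\mathbf{A}^\dagger)\mathbf{P'}\mathbf{A}=\mathbf{0}$ to $\mathbf{P'}\mathbf{A}-\mathbf{A}\mathbf{A}^\dagger\mathbf{P'}\mathbf{A}=\mathbf{0}$ and rearranging is precisely the intended argument, which the paper treats as immediate and therefore states without proof. Your side observation that the projection hypothesis on $\mathbf{P'}$ is not needed for this implication (it matters only for the companion Lemma~\ref{lem:extend_projector2} and the context of Theorem~\ref{thm:gradient_update}) is also accurate.
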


\begin{lem}
\label{lem:extend_projector2}
    If $\mathbf{AA^\dagger P'A = P'A}$, where $\mathbf{P}$ is an orthogonal projection matrix, then $\mathbf{A^\dagger P'A}$ is a projection matrix as well.
\end{lem}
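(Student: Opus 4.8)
The plan is to show directly that $\mathbf{Q} := \mathbf{A^\dagger P'A}$ is idempotent, i.e.\ $\mathbf{Q}^2 = \mathbf{Q}$, which is the defining property of a projection matrix. Since $\mathbf{A^\dagger P'A}$ is generally not symmetric, I read ``projection matrix'' here in the sense of idempotence (an oblique projector) rather than orthogonal projection; the assumption that $\mathbf{P'}$ is an orthogonal projector is used only through its idempotence $\mathbf{P'}^2 = \mathbf{P'}$ (the symmetry of $\mathbf{P'}$ is not needed for idempotence of $\mathbf{Q}$). I take the hypothesis to be $\mathbf{AA^\dagger P'A = P'A}$ exactly as stated in the preceding lemma.

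First I would expand the square and regroup so that the hypothesis becomes applicable:
\begin{align}
    \mathbf{Q}^2 = (\mathbf{A^\dagger P'A})(\mathbf{A^\dagger P'A}) = \mathbf{A^\dagger P'}\,(\mathbf{AA^\dagger P'A}).
\end{align}
The crucial observation is that the parenthesized factor $\mathbf{AA^\dagger P'A}$ is precisely the quantity the hypothesis controls. Substituting $\mathbf{AA^\dagger P'A = P'A}$ collapses the obstructing middle factor $\mathbf{AA^\dagger}$ (the orthogonal projector onto the column space of $\mathbf{A}$), giving
\begin{align}
    \mathbf{Q}^2 = \mathbf{A^\dagger P'}\cdot \mathbf{P'A} = \mathbf{A^\dagger P'}^2 \mathbf{A}.
\end{align}
Finally, invoking idempotence $\mathbf{P'}^2 = \mathbf{P'}$ yields $\mathbf{A^\dagger P'}^2\mathbf{A} = \mathbf{A^\dagger P'A} = \mathbf{Q}$, so $\mathbf{Q}^2 = \mathbf{Q}$ and the claim follows.

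Conceptually, the hypothesis $\mathbf{AA^\dagger P'A = P'A}$ says that every column of $\mathbf{P'A}$ already lies in the column space of $\mathbf{A}$, so that applying $\mathbf{AA^\dagger}$ acts as the identity on it; this is exactly what lets me ``absorb'' the factor $\mathbf{AA^\dagger}$ sitting in the middle of $\mathbf{Q}^2$. The only genuine subtlety—and the step I would double-check—is this absorption: for a general (non-invertible) $\mathbf{A}$ one has $\mathbf{AA^\dagger}\neq\mathbf{I}$, so the identity $\mathbf{Q}^2=\mathbf{Q}$ truly depends on the hypothesis rather than on the pseudo-inverse axioms alone. Everything after that regrouping is routine algebra using idempotence of $\mathbf{P'}$, so I do not anticipate any further obstacle.
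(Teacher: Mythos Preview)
Your proof is correct and follows essentially the same approach as the paper: expand $(\mathbf{A^\dagger P'A})^2$, regroup as $\mathbf{A^\dagger P'}(\mathbf{AA^\dagger P'A})$, invoke the hypothesis to collapse $\mathbf{AA^\dagger P'A}$ to $\mathbf{P'A}$, and then use $\mathbf{P'}^2=\mathbf{P'}$. Your write-up is in fact more explicit than the paper's one-line version, and your remark that only idempotence (not symmetry) of $\mathbf{P'}$ is used is accurate.
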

\begin{proof}
If $\mathbf{P'}$ is an orthogonal projection matrix, it satisfies $\mathbf{P}'^2=\mathbf{P}'$, since $\mathbf{(A}^\dagger \mathbf{P}'\mathbf{A})^2 = \mathbf{A^\dagger} \mathbf{P}'\mathbf{A} \mathbf{A}^\dagger \mathbf{P}'\mathbf{A} = \mathbf{A}^\dagger \mathbf{P}' \mathbf{P}'\mathbf{A} = \mathbf{A}^\dagger \mathbf{P}'\mathbf{A}$, $\mathbf{A}^\dagger \mathbf{P}' \mathbf{A}$ is a projection matrix.
\end{proof}

\section{Related Literature}\label{sec:related_work}
We review the topics of multimodal contrastive learning and continual learning, which are highly related to this work.

\subsection{Multimodal Contrastive Learning}

Multimodal contrastive learning builds upon the principles of cross-modal contrastive learning and has demonstrated remarkable success across various tasks~\cite{zolfaghari2021crossclr, zhang2021cross,zhou2024few, he2025llm2rec}. The fundamental philosophy of contrastive learning is to align different views of the same instance while distinguishing between distinct instances. In unimodal contrastive learning, an instance is augmented into two different views and brought closer in the feature space, following the paradigm of self-supervised learning~\cite{chen2020simple, li2022selective}. This concept extends naturally to cross-modal scenarios, as exemplified by models such as CLIP~\cite{radford2021learning}, ALIGN~\cite{jia2021scaling}, VideoCLIP~\cite{xu2021videoclip}, and CLAP~\cite{elizalde2023clap}.
Recently, there has been growing interest in multimodal contrastive learning, which incorporates a broader range of modalities, \textit{e.g.}, AudioCLIP~\cite{guzhov2022audioclip} and  WAV2CLIP~\cite{wu2022wav2clip}. Besides, ImageBind~\cite{girdhar2023imagebind} introduces a vision-centric approach by leveraging large-scale multimodal data. Similarly, LanguageBind~\cite{zhu2023languagebind} and PointBind~\cite{guo2023point} shift the focus toward text and point cloud modalities, respectively. Other methods either enhance these frameworks (\textit{e.g.}, FreeBind~\cite{wang2024freebind}) or explore alternative center-free strategies (\textit{e.g.}, OmniBind~\cite{wang2024omnibind} and UniBind~\cite{lyu2024unibind}).

Multimodal contrastive learning empowers models to learn rich and informative latent representations across different modalities. This enhances AI's ability to comprehend inter-modal relationships, which is necessitated for many real-world tasks (\textit{e.g.}, multimodal search~\cite{barbany2024leveraging, linuniversal}, multimodal generation~\cite{kreuk2022audiogen, copet2023simple, rombach2022high}, and recent multimodal AI-assistant~\cite{liang2024survey, wu2024next, huang2023language, dongdreamllm}). Furthermore, the exploration of modality uniqueness and commonness (\textit{i.e.}, invariance) contributes to advancing the foundational research on machine learning and deep learning~\cite{wang2024cliploss, schrodi2024two, huangtowards}. 

Despite the significance, the innate data-intensiveness raises questions on how to learn it efficiently. 
Large-scale multimodal datasets are difficult to acquire in a single collection process, and training models on such diverse data from scratch incurs substantial computational costs. A more practical and scalable way is to incrementally integrate emerging data into existing modality-aligned models. Motivated by this, we systematically investigate this research problem and propose a novel method with rigorous theoretical analysis and guarantees.

\subsection{Continual Learning}

Continual learning, also referred to as incremental learning, aims to enable models to learn from new data effectively in a streaming way while retaining previously acquired knowledge~\cite{thrun1995lifelong, wang2024comprehensive, zhao2024both, zhao2024sapt}. The primary challenge lies in integrating new information without causing \textit{catastrophic forgetting}. 
Among existing approaches, replay-based methods have showcased superior performance through a buffer memory~\cite{prabhu2020gdumb}. This buffer typically stores the previous samples, which are then leveraged in the subsequent training steps under additional constraints (\textit{e.g.}, meta-experience replay (MER)~\cite{riemer2018learning}, gradient episodic memory (GEM)~\cite{lopez2017gradient}, and dark-experience replay (DER)~\cite{buzzega2020dark}).
Another prominent category, regularization-based methods, mainly focuses on stabilizing network updates. These methods either constrain the variance of network parameters (weight regularization~\cite{li2017learning, kirkpatrick2017overcoming, liu2018rotate, lee2019overcoming})  or regulate the model’s output space (function regularization~\cite{evron2022catastrophic, zhaostatistical}) to mitigate forgetting.
In contrast, parameter isolation methods dynamically allocate distinct subsets of model parameters to different tasks, thereby preventing interference and reducing forgetting~\cite{yoonscalable, hung2019compacting, yoon2018lifelong}.
Existing continual learning works mainly focus on two settings of task-incremental learning (TIL)~\cite{van2022three} and class-incremental learning (CIL)~\cite{masana2022class}. 
Recent advancements have extended continual learning to multimodal scenarios, giving rise to multimodal continual learning (MMCL)~\cite{yu2024recent}. Beyond conventional CL methods, MMCL incorporates prompt-based approaches as a specialized adaptation~\cite{wang2022dualprompt, qian2023decouple, wang2022s}. These methods introduce learnable prompt parameters as part of the input, enabling the model to differentiate tasks based on the learned prompts. 

Research in continual learning has advanced significantly while leaving a blank in continual multimodal contrastive learning (CMCL). Some solutions from continual uni-modal or cross-modal contrastive learning show potential while overlooking the modality complexity. Existing MMCL approaches primarily focus on task-specific continual learning scenarios, neglecting fundamental representation-level challenges, which are task-agnostic. 
The absence of task boundaries and introduced modality complexity make extending these methods to CMCL highly challenging.
To fill this blank, we take the initiative to investigate CMCL and propose solutions with theoretical analyses. We believe our contributions provide a strong foundation for future research, which inspires further advancements in both academia and industry.

\section{Supplementary Experimental Settings}
\label{sec:experimental_settings}
In this section, we elaborate on the experimental settings, especially concerning the baseline implementation. 
For all buffer-required baselines, like GEM and DER, we set the number of buffer memory to 256. In the following, we list the specialized settings for each baseline.
We implement the \textit{Vanilla} training with the same settings with DNS, while omitting the specialized one of $\lambda_{\min}$.
\textit{GEM} is equipped with a margin of 0.5. 
For \textit{DER}, we set the penalty weight to 0.1. Note that \textit{DER++} is built upon DER. The additional hyperparameter of \textit{DER++}, \textit{i.e.}, $\alpha$, which controls the strength of ground-truth prediction, is set to 0.1 for the case where LanguageBind is the backbone and 0.001 for ImageBind and UniBind. 
For \textit{EWC}, the regulation for the quadratic constraint term is set to 0.5.  
For \textit{Co$^2$L}, the distillation power is set to 1.0 and the temperature in CL is set to 1. 
\textit{C-FLAT} uses AdamW as the base optimizer which is the default optimizer in our evaluation, and the gradient reduction method is the mean. The radius $\rho$ is 0.2 for the gradient norm, and the balance hyperparameter of loss terms is also set to 0.2.
For \textit{CILA}, the balancing distillation coefficient is set to 5.0. All the experiments are conducted with 4$\times$NVIDIA 
RTX A5000 GPUs and 256GB memory. 

\section{Additional Experimental Results}
\label{sec:additional_results}
We report the detailed experimental results in this section. 

\subsection{hyperparameter Analysis}
We provide the training loss and performance (recall@5 and accuracy) changes in terms of different hyperparameter settings, including weight decays and minimum eigenvalues in truncated SVD. The results are illustrated in Figures~\ref{fig:hyerparameter_imagebind}, \ref{fig:hyerparameter_languagebind}, and \ref{fig:hyerparameter_unibind}, corresponding to different modality-binding methods. It demonstrates that our method, DNS, can achieve superior performance compared to vanilla training and maintain the performance over time. 

\subsection{Performance variations across Steps}
We also track the training loss and recall variations at each step, as illustrated in Figures~\ref{fig:steps_imagebind}, \ref{fig:steps_languagebind}, and ~\ref{fig:steps_unibind}. In most cases, DNS exhibits a decreasing training loss and improving performance, particularly when using ImageBind and UniBind as backbones. Moreover, DNS maintains stable training loss and performance after completing its designated training step.

\subsection{Imbalanced Modality Data}
\label{sec:imbalance}
\begin{wraptable}{r}{0.4\textwidth}
    \centering
    \vspace{-4mm}
    \resizebox{\linewidth}{!}{
    \begin{tabular}{lcccc}
    \toprule
    & \textbf{Best} & \textbf{Vanilla} & \textbf{CILA} & \textbf{DNS} \\
    \midrule
    t$_0$ (T-VI) & 87.72 & 85.34 & 85.54 & 87.92 \\
    t$_1$ (T-A) & 49.25 & 41.00 & 43.75 & 47.50 \\
    \bottomrule
    \end{tabular}
    }
    \vspace{-1mm}
    \caption{Performance comparison under imbalanced modality data scenarios.}
    \label{tab:imbalance}
    \vspace{-4mm}
\end{wraptable}

We construct a specialized dataset with modality pairs of text-video (t$_0$) and text-audio (t$_1$, t$_2$, t$_3$), to assess performance in imbalanced scenarios. We test text-video at step 1 (with less overlap) and text-audio at step 2 (with the most overlap), where greater overlap indicates worse forgetting and larger performance drops. As shown in the Table~\ref{tab:imbalance}, DNS consistently outperforms Vanilla and CILA, demonstrating superior robustness.

\subsection{Misaligned Pairs}
\label{sec:misalign}
We create misaligned modality pairs by exchanging features at ratios of 0.01, 0.05, 0.1, 0.2, 0.3, 0.4, and 0.5. The results are shown in Table~\ref{tab:misalign}. As misalignment increases, performance decreases across methods. However, DNS consistently outperforms CILA, demonstrating stronger robustness.

\begin{table}[htbp]
\centering
\resizebox{0.7\linewidth}{!}{
\begin{tabular}{lcccccccc}
\toprule
{Misalignment} & \multicolumn{2}{c}{\textbf{Acc}} & \multicolumn{2}{c}{\textbf{BWT\_A}} & \multicolumn{2}{c}{\textbf{Recall}} & \multicolumn{2}{c}{\textbf{BWT\_R}} \\
\cmidrule(lr){2-3} \cmidrule(lr){4-5} \cmidrule(lr){6-7} \cmidrule(lr){8-9}
 & \textit{CILA} & \textit{DNS} & \textit{CILA} & \textit{DNS} & \textit{CILA} & \textit{DNS} & \textit{CILA} & \textit{DNS} \\
\midrule
0.01 & 49.39 & 52.04 & -3.52 & 0.11 & 38.03 & 40.32 & -1.72 & -1.35 \\
0.05 & 46.45 & 46.98 & -1.81 & 2.93 & 37.46 & 39.41 & -1.32 & -1.22 \\
0.1  & 41.46 & 43.54 & 0.36  & 2.00  & 36.21 & 38.50 & -0.90 & -0.90 \\
0.2  & 33.33 & 35.18 & 1.26  & 1.39  & 33.91 & 36.63 & -1.49 & -1.85 \\
0.3  & 28.73 & 29.27 & -0.80 & 1.46  & 32.12 & 34.09 & -1.94 & -2.32 \\
0.4  & 19.91 & 24.97 & -5.59 & 0.17  & 29.43 & 32.84 & -2.60 & -2.64 \\
0.5  & 17.32 & 19.78 & -6.48 & -4.30 & 27.58 & 30.55 & -3.86 & -4.03 \\
\bottomrule
\end{tabular}
}
\vspace{1mm}
\caption{Performance under misaligned modality pairs.}
\label{tab:misalign}
\end{table}

\subsection{Noisy Data}
\label{sec:noise}
We add Gaussian noise to the training dataset at scales of 0.01, 0.05, 0.1, 0.2, 0.3, 0.4, and 0.5. The results are shown in Table~\ref{tab:noise}. Performance decreases with increasing noise, but DNS outperforms CILA across all metrics, showcasing greater robustness.

\begin{table}[htbp]
\centering
\resizebox{0.7\linewidth}{!}{
\begin{tabular}{lcccccccc}
\toprule
\textbf{Noise} & \multicolumn{2}{c}{\textbf{Acc}} & \multicolumn{2}{c}{\textbf{BWT\_A}} & \multicolumn{2}{c}{\textbf{Recall}} & \multicolumn{2}{c}{\textbf{BWT\_R}} \\
\cmidrule(lr){2-3} \cmidrule(lr){4-5} \cmidrule(lr){6-7} \cmidrule(lr){8-9}
 & \textit{CILA} & \textit{DNS} & \textit{CILA} & \textit{DNS} & \textit{CILA} & \textit{DNS} & \textit{CILA} & \textit{DNS} \\
\midrule
0.01 & 50.37 & 52.18 & -3.12 & -0.40 & 38.46 & 40.65 & -1.25 & -1.11 \\
0.05 & 47.42 & 50.74 & -5.31 & -1.79 & 34.59 & 39.79 & -4.41 & -2.66 \\
0.1  & 43.89 & 50.44 & -7.94 & -0.69 & 34.19 & 40.21 & -4.07 & -0.66 \\
0.2  & 39.60 & 43.82 & -9.54 & -6.32 & 32.58 & 37.29 & -5.09 & -3.56 \\
0.3  & 34.45 & 39.48 & -13.80 & -11.15 & 30.62 & 35.77 & -7.41 & -4.30 \\
0.4  & 31.92 & 39.22 & -14.36 & -11.63 & 29.30 & 34.89 & -8.53 & -5.26 \\
0.5  & 28.52 & 37.31 & -18.62 & -11.29 & 28.70 & 33.35 & -7.94 & -6.72 \\
\bottomrule
\end{tabular}
}
\vspace{1mm}
\caption{Performance under noisy data conditions.}
\label{tab:noise}
\end{table}

\begin{figure}
    \centering
    \begin{subfigure}[b]{\textwidth}
        \centering
        \includegraphics[trim=0 0 0 0cm, clip, width=0.99\linewidth]{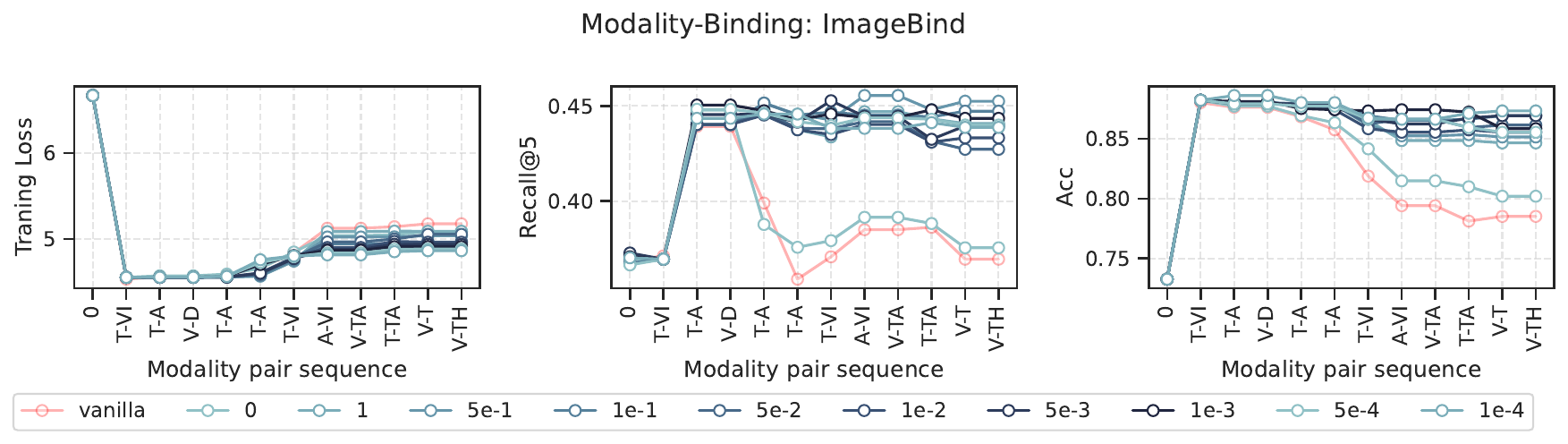}
    \end{subfigure}
    \begin{subfigure}[b]{\textwidth}
        \centering
        \includegraphics[trim=0 0 0 1cm, clip, width=0.99\linewidth]{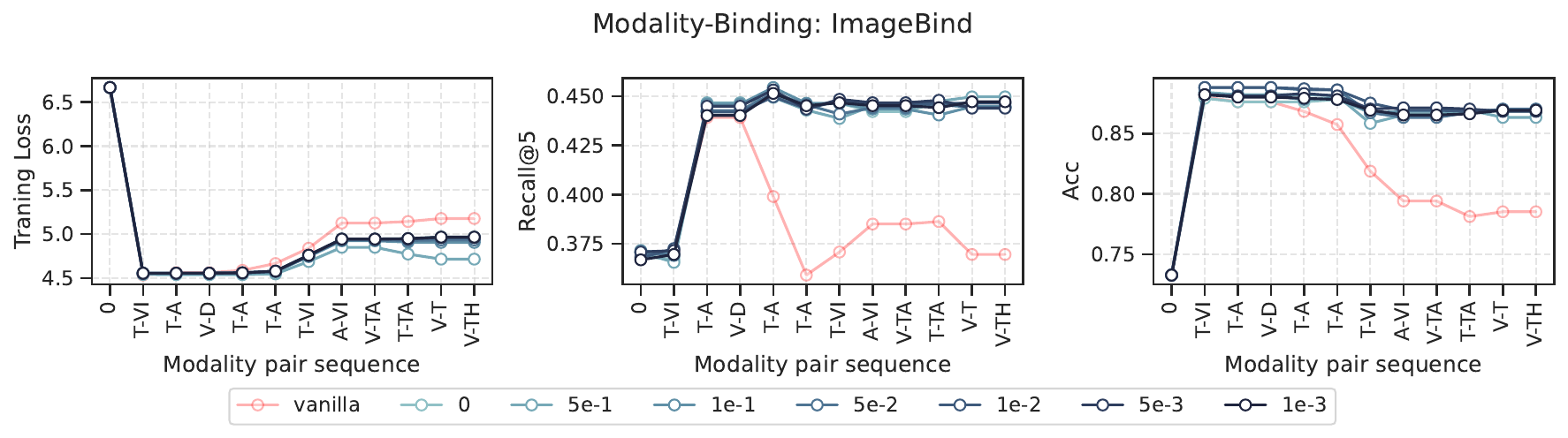}
    \end{subfigure}
    \caption{Detailed training loss (\textit{left}), recall@5 (\textit{middle}), and accuracy (\textit{right}) results on the data at the first step. The top figure indicates the results for different $\lambda_{\min}$, while the bottom is for different weight decays, with \textit{ImageBind} as the backbone. The red line indicates the performance of the vanilla method in CMCL. }
    \label{fig:hyerparameter_imagebind}
\end{figure}

\begin{figure}
    \centering
    \begin{subfigure}[b]{\textwidth}
        \centering
        \includegraphics[trim=0 0 0 0cm, clip, width=0.99\linewidth]{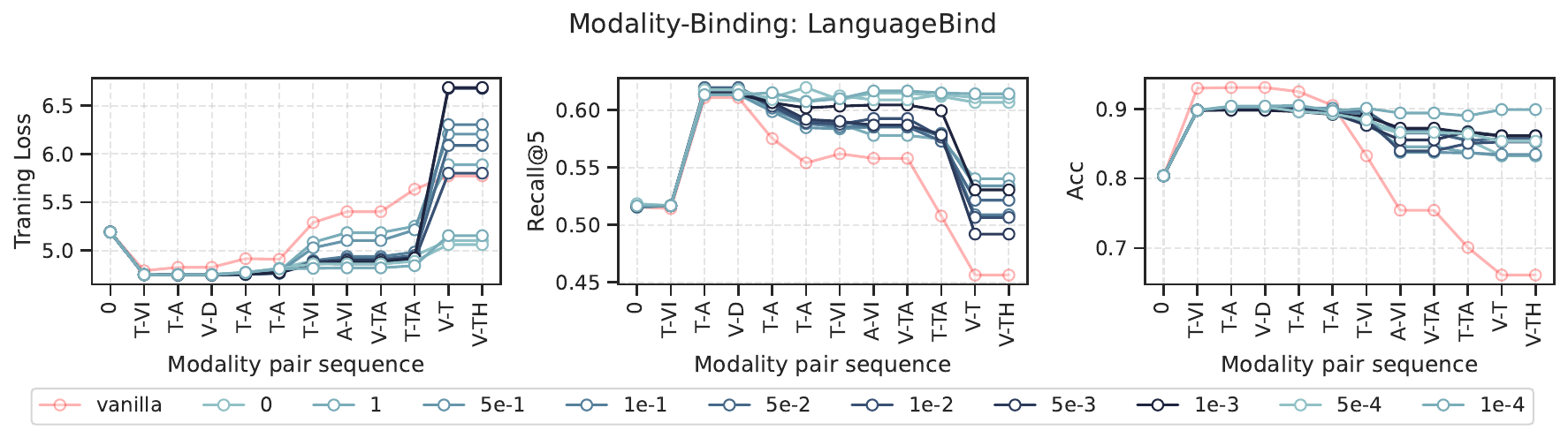}
    \end{subfigure}
    \begin{subfigure}[b]{\textwidth}
        \centering
        \includegraphics[trim=0 0 0 1cm, clip, width=0.99\linewidth]{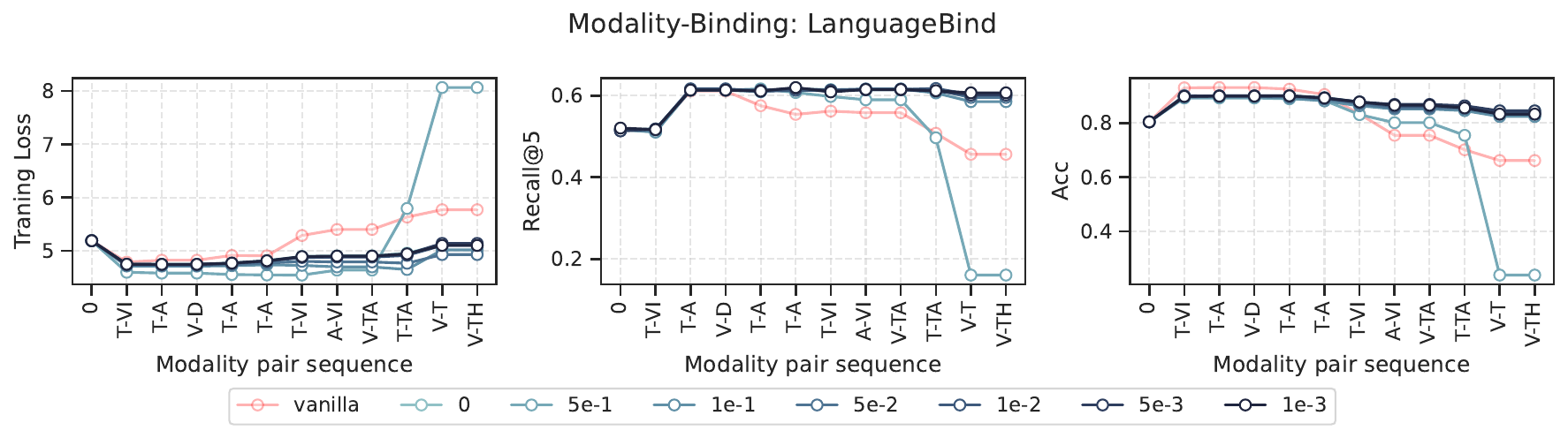}
    \end{subfigure}
    \caption{Detailed training loss (\textit{left}), recall@5 (\textit{middle}), and accuracy (\textit{right}) results on the data at the first step. The top figure indicates the results about different $\lambda_{\min}$, while the bottom is for different weight decays, with \textit{LanguageBind} as the backbone. The red line indicates the performance of the vanilla method in CMCL. }
    \label{fig:hyerparameter_languagebind}
\end{figure}

\begin{figure}
    \centering
    \begin{subfigure}[b]{\textwidth}
        \centering
        \includegraphics[trim=0 0 0 0cm, clip, width=0.99\linewidth]{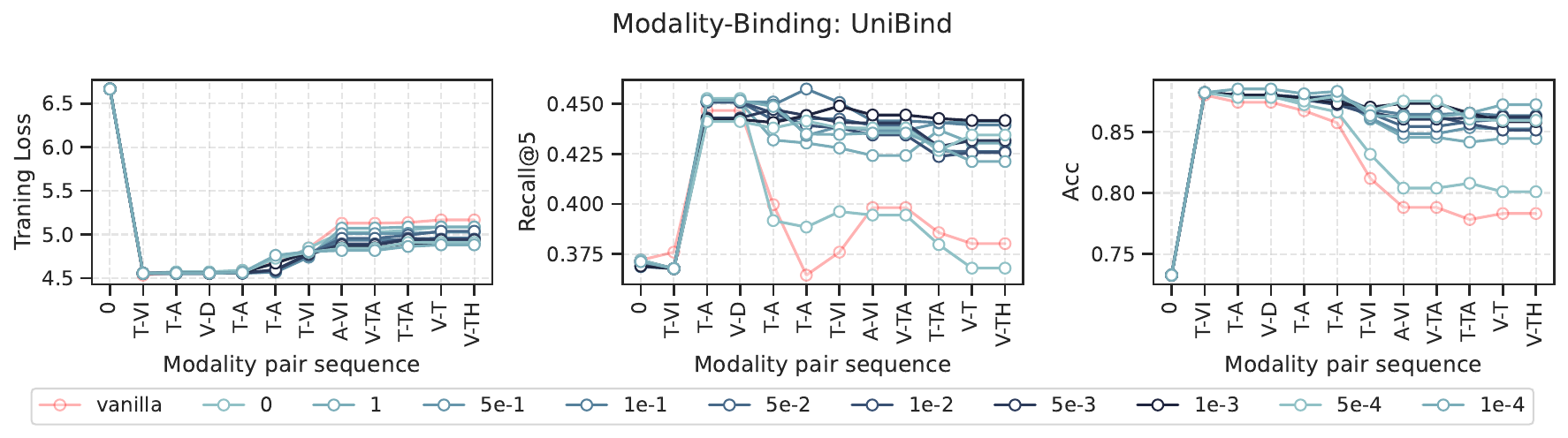}
    \end{subfigure}
    \begin{subfigure}[b]{\textwidth}
        \centering
        \includegraphics[trim=0 0 0 1cm, clip, width=0.99\linewidth]{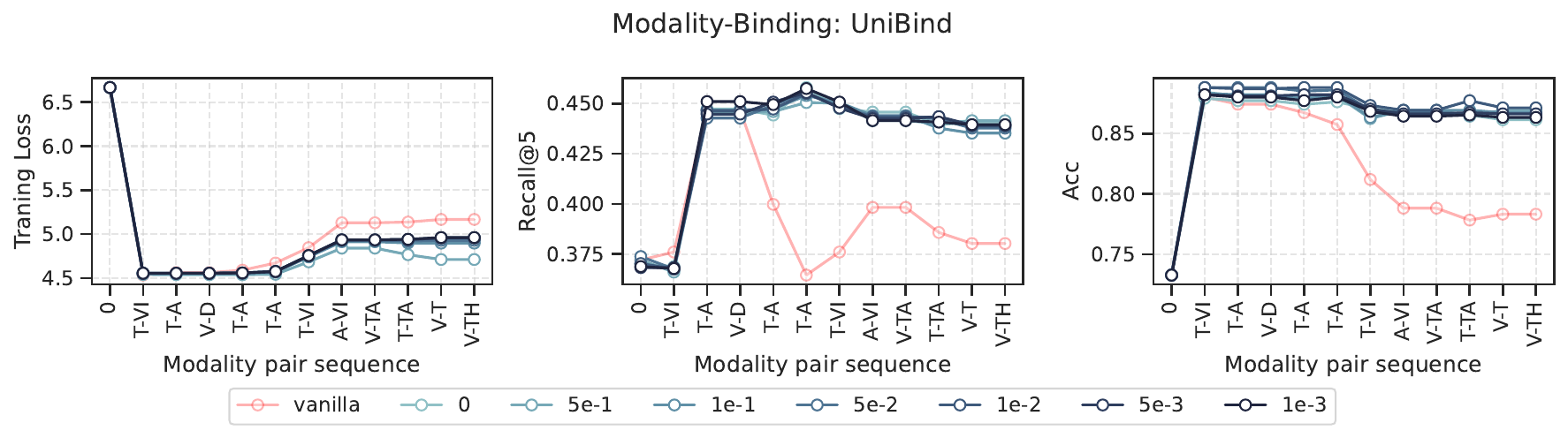}
    \end{subfigure}
    \caption{Detailed training loss (\textit{left}), recall@5 (\textit{middle}), and accuracy (\textit{right}) results on the data at the first step. The top figure indicates the results about different $\lambda_{\min}$, while the bottom is for different weight decays, with \textit{UniBind} as the backbone. The red line indicates the performance of the vanilla method in CMCL. }
    \label{fig:hyerparameter_unibind}
\end{figure}

\begin{figure}
    \centering
    \begin{subfigure}[b]{\textwidth}
        \centering
        \includegraphics[trim=0 0 0 0cm, clip, width=0.99\linewidth]{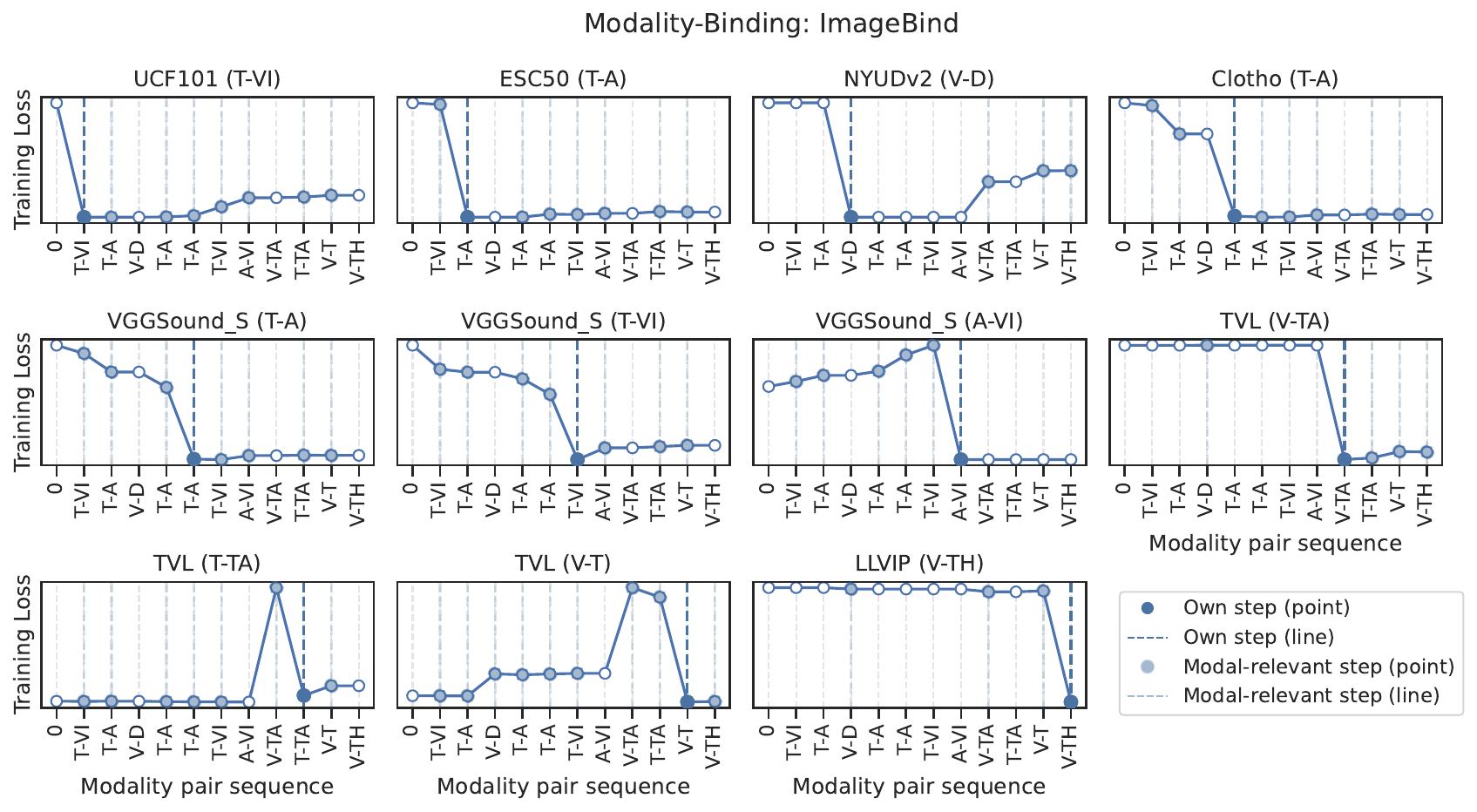}
    \end{subfigure}
    \begin{subfigure}[b]{\textwidth}
        \centering
        \includegraphics[trim=0 0 0 1cm, clip, width=0.99\linewidth]{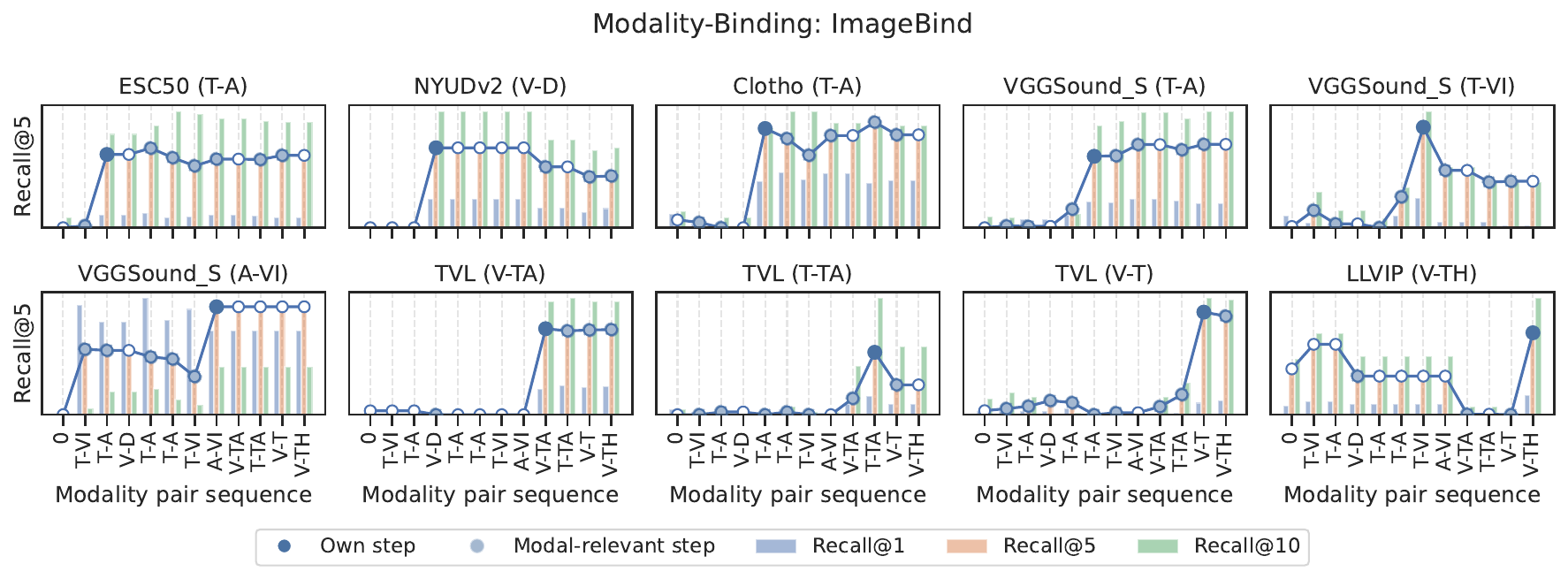}
    \end{subfigure}
    \caption{Detailed experimental results of the training loss (\textit{the top three rows}) and recall (\textit{the last two rows}) over a sequence of modality pair data, with \textit{ImageBind} as the modality-binding backbone. Notably, the y-values are normalized for better visualization. }
    \label{fig:steps_imagebind}
\end{figure}

\begin{figure}
    \centering
    \begin{subfigure}[b]{\textwidth}
        \centering
        \includegraphics[trim=0 0 0 0cm, clip, width=0.99\linewidth]{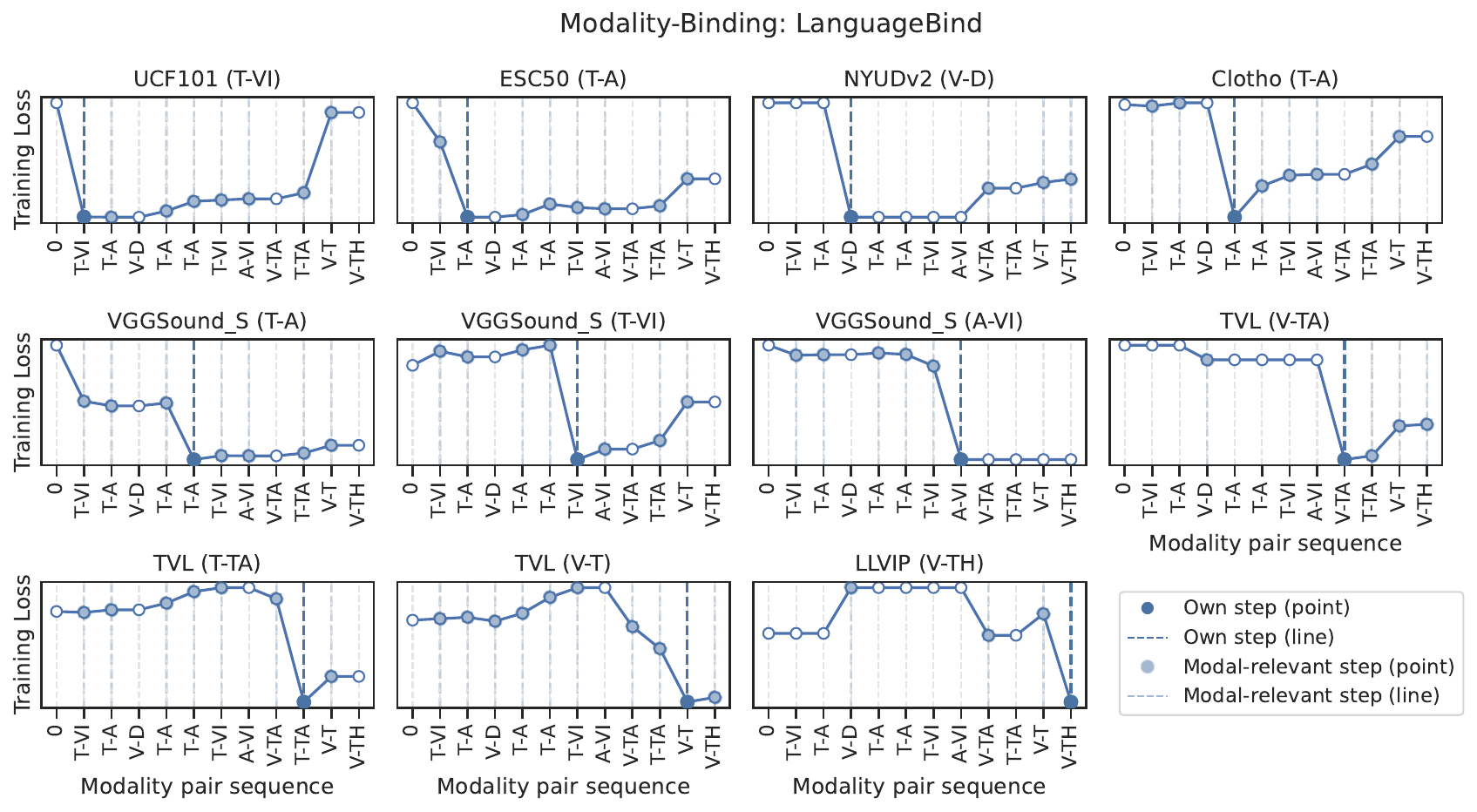}
    \end{subfigure}
    \begin{subfigure}[b]{\textwidth}
        \centering
        \includegraphics[trim=0 0 0 1cm, clip, width=0.99\linewidth]{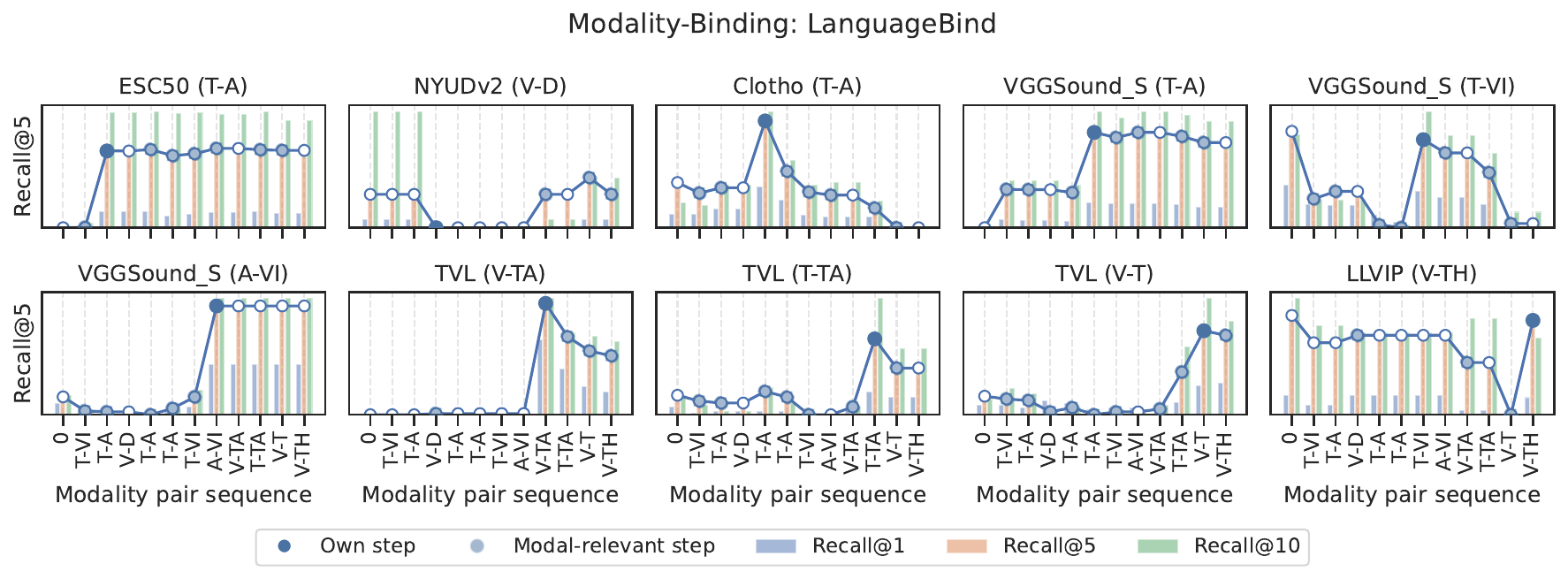}
    \end{subfigure}
    \caption{Detailed experimental results of the training loss (\textit{the top three rows}) and recall (\textit{the last two rows}) over a sequence of modality pair data, with \textit{LanguageBind} as the modality-binding backbone. Notably, the y-values are normalized for better visualization. }
    \label{fig:steps_languagebind}
\end{figure}

\begin{figure}
    \centering
    \begin{subfigure}[b]{\textwidth}
        \centering
        \includegraphics[trim=0 0 0 0cm, clip, width=0.99\linewidth]{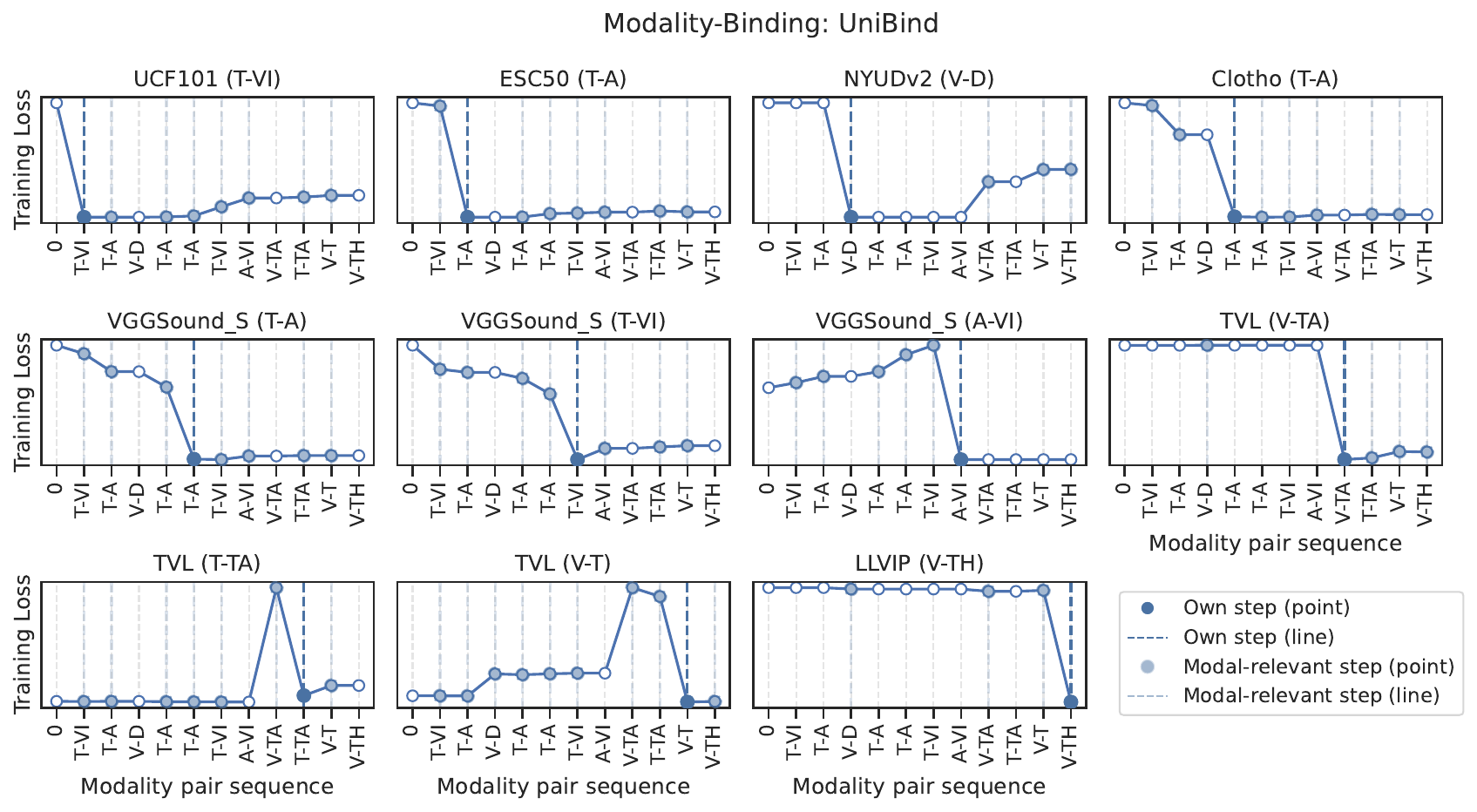}
    \end{subfigure}
    \begin{subfigure}[b]{\textwidth}
        \centering
        \includegraphics[trim=0 0 0 1cm, clip, width=0.99\linewidth]{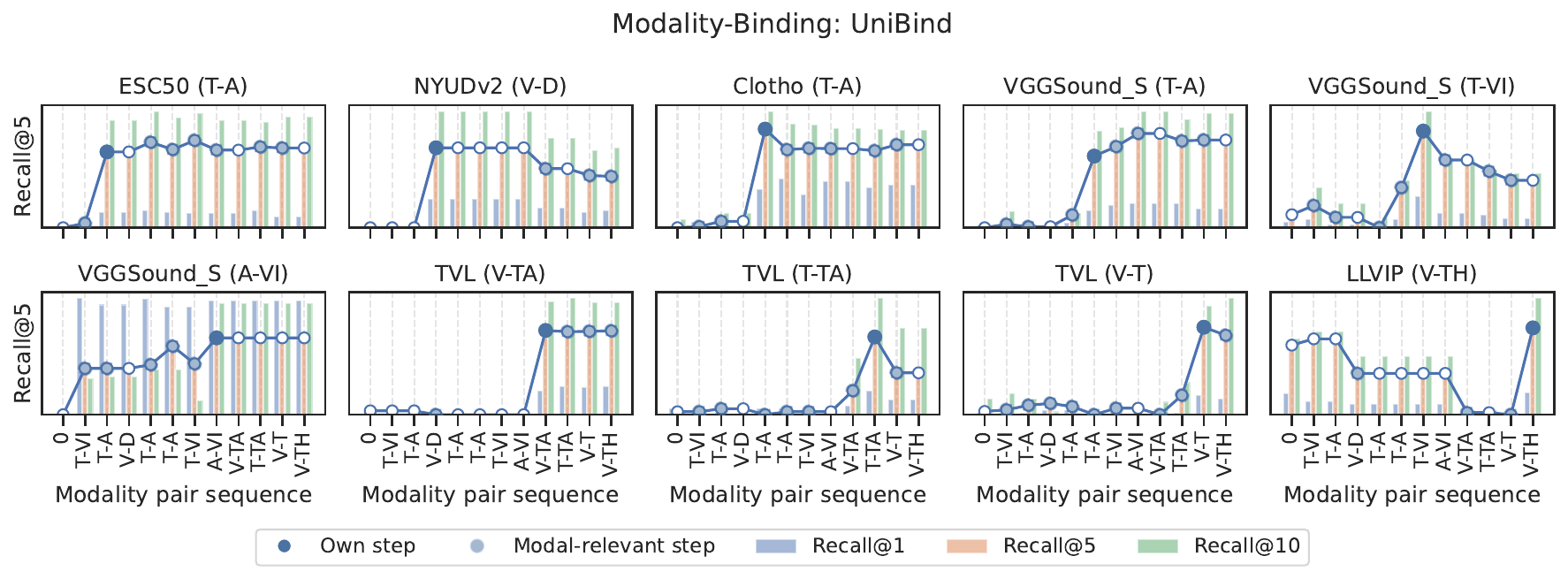}
    \end{subfigure}
    \caption{Detailed experimental results of the training loss (\textit{the top three rows}) and recall (\textit{the last two rows}) over a sequence of modality pair data, with \textit{UniBind} as the modality-binding backbone. Notably, the y-values are normalized for better visualization. }
    \label{fig:steps_unibind}
\end{figure}

\section{Broader Impact Statement}
\label{sec:impact_statement}
This research aims to contribute positively to the machine learning and multimodal learning fields by advancing continual multimodal contrastive learning, which enables models to dynamically integrate and retain knowledge from diverse multimodal data sources over time. Although we believe our work is unlikely to have direct negative societal impacts, we acknowledge the importance of considering potential misuse scenarios, such as the exploitation of continuously adapting systems in unethical surveillance or misinformation campaigns. The broader implication of our study is that it elevates multimodal models with the capability to seamlessly adapt to evolving multimodal environments with minimal retraining, which makes them particularly suitable for real-time applications in adaptive human-computer interaction and autonomous systems. Such advancements could lead to more versatile, resilient, and context-aware AI applications across various real-world domains.

\section{Reproducibility}
\label{sec:repeoducibility}
We provide comprehensive implementation details, including illustrative algorithm flows and core pseudo-code. Additionally, source codes will be publicly released.

\section{Limitations}
\label{sec:limitations}
This paper fills in the blank of continual multimodal contrastive learning in the development of both multimodal learning and continual learning research. 
Specifically, this paper introduces a method that projects the gradient onto specialized subspaces, where the incorporation of new data does not interfere with previously acquired knowledge. Although our work is supported by systematic theoretical analyses and empirical studies, it does not yet examine the generalization errors~\cite{zhang2023generalization} of the proposed method in downstream tasks. We regard addressing the limitation as our future direction. 

\section{Potential in Scientific Fields}
\label{sec:potential_in_S}
The CMCL framework holds significant potential for applications in complex fields like medicine and neuroscience, where modalities such as MRI, CT scans, and electrophysiological signals require incremental integration. In medical diagnostics, CMCL can incrementally incorporate diverse imaging modalities. In neuroscience research, our framework can facilitate the progressive understanding of neural correlates by continually integrating modalities like functional MRI, EEG, and MEG, thereby providing richer insights into brain functions and disorders. CMCL presents an exciting avenue for future interdisciplinary research to enhance both interpretability and performance.

\end{document}